\let\appendices\relax
\newcolumntype{P}[1]{>{\centering\arraybackslash}p{#1}}
\definecolor{darkrosepink}{rgb}{.957, .839, .824}
\definecolor{grassgreen}{rgb}{.918, .941, .882}
\definecolor{orange}{rgb}{.929,  .49,  .192}
\definecolor{greenblue}{rgb}{.129,  .699,  .668}
\DeclareMathOperator*{\argmin}{arg\,min}
\definecolor{ADV}{HTML}{A0B6FF}
\definecolor{AUCM}{HTML}{FFBBA3}
\definecolor{CE}{HTML}{FCA9BD}
\definecolor{Da}{HTML}{00FF9B}
\definecolor{Df}{HTML}{00A2FF}
\definecolor{DROLT}{HTML}{A0FBFF}
\definecolor{Focal}{HTML}{FED790}
\definecolor{GLOT}{HTML}{FF9671}
\definecolor{WDRO}{HTML}{EBEC7A}
\definecolor{AUCDRO}{HTML}{EFB8F8}
\definecolor{blue}{HTML}{00A2FF}
\def \w {\bm{w}}
\def \z {\bm{z}}
\newtheorem{defi}{Definition}
\newtheorem{lem}{Lemma}
\newtheorem{thm}{Theorem}
\newtheorem{prop}{Proposition}
\newtheorem{rem}{Remark}
\newtheorem{exmp}{Example}
\title{DRAUC: An Instance-wise Distributionally Robust AUC Optimization Framework}
\def \e {\epsilon}
\def \cab {{(a,b)\in \Omega_{a,b}}}
\def \caa {{\alpha \in \Omega_{\alpha} }}
\def \a {\alpha}
\def \x {\bm{x}}
\DeclareMathOperator*{\E}{\mathbb{E}}
\def \R {\mathfrak R}
\def \h {\widehat}
\def \t {{\bm{\theta}}}
\def \T {\Theta}
\def \s {\sigma}
\def \l {\lambda}
\def \xp{\bm{x}^+}
\def \xn{\bm{x}^-}
\newcommand{\Ep}[1] {\E_{P_+}\left[#1\right]}
\newcommand{\En}[1] {\E_{P_-}\left[#1\right]}
\newcommand{\Exn}[1] {\E_{P_+,P_+}\left[#1\right]}
\def \W {\mathcal W_c}
\def \I {\mathbb I}
\def \maxQ {\max_{\substack{{\h Q_+:\W(\h Q_+,\h P_+)\le\e_+} \\ {\h Q_-:\W(\h Q_-,\h P_-)\le\e_-}}}}
\def \maxq {\max_{\h Q \in \h {\mathcal Q}}}
\def \maxqp {\max_{\substack{{\h Q_+\le\e_+}}}}
\def \maxqn {\max_{\substack{{\h Q_-\le\e_-}}}}
\author{\parbox{13cm}
  {\centering
    {\large \quad\quad Siran Dai$^{1,2}$ \ \ \ \ \ \ \ \ \  Qianqian Xu$^{3}$\thanks{Corresponding authors.} \ \ \ \ \ \ \ \ \ 
    Zhiyong Yang$^{4}$ \\ \ \ \ \ \ \ \ \ Xiaochun Cao$^{5}$  \ \ \ \ \ \ \ \ \ \ Qingming Huang$^{4,3,6*}$ }\\
    {\normalsize \normalfont
    $^1$ SKLOIS, Institute of Information Engineering, CAS \\
    $^2$ School of Cyber Security, University of Chinese Academy of Sciences \\
    $^3$ Key Lab. of Intelligent Information Processing, Institute of Computing Tech., CAS \\
    $^4$ School of Computer Science and Tech., University of Chinese Academy of Sciences \\
    $^5$ School of Cyber Science and Tech., Shenzhen Campus of Sun Yat-sen University \\
    $^6$ BDKM, University of Chinese Academy of Sciences \\
    }
    {\tt\small daisiran@iie.ac.cn ~~ xuqianqian@ict.ac.cn \\ yangzhiyong21@ucas.ac.cn ~~ caoxiaochun@mail.sysu.edu.cn \\ qmhuang@ucas.ac.cn
    }
  }
}
\begin{document}

\maketitle

\begin{abstract}
  The Area Under the ROC Curve (AUC) is a widely employed metric in long-tailed classification scenarios. Nevertheless, most existing methods primarily assume that training and testing examples are drawn i.i.d. from the same distribution, which is often unachievable in practice. Distributionally Robust Optimization (DRO) enhances model performance by optimizing it for the local worst-case scenario, but directly integrating AUC optimization with DRO results in an intractable optimization problem. To tackle this challenge, methodically we propose an instance-wise surrogate loss of Distributionally Robust AUC (DRAUC) and build our optimization framework on top of it. Moreover, we highlight that conventional DRAUC may induce label bias, hence introducing distribution-aware DRAUC as a more suitable metric for robust AUC learning. Theoretically, we affirm that the generalization gap between the training loss and testing error diminishes if the training set is sufficiently large. Empirically, experiments on corrupted benchmark datasets demonstrate the effectiveness of our proposed method. Code is available at: \url{https://github.com/EldercatSAM/DRAUC}.
\end{abstract}

\section{Introduction}
The Area Under the ROC Curve (AUC) is an essential metric in machine learning. Owing to its interpretation equivalent to the probability of correctly ranking a random pair of positive and negative examples \cite{hanley1982meaning}, AUC serves as a more suitable metric than accuracy for imbalanced classification problems. Research on AUC applications has expanded rapidly across various scenarios, including medical image classification \cite{sulam2017maximizing, yuan2021large}, abnormal behavior detection \cite{feizi2020hierarchical} and more.

However, current research on AUC optimization assumes that the training and testing sets share the same distribution \cite{yang2022auc}, a challenging condition to satisfy when the testing environment presents a high degree of uncertainty. This situation is common in real-world applications. 

Distributionally Robust Optimization (DRO) as a technique designed to handle distributional uncertainty, has emerged as a popular solution \cite{shapiro2021tutorial} in various applications, including machine learning \cite{kuhn2019wasserstein}, energy systems \cite{baker2016distribution} and transportation \cite{liu2015data}. This technique aims to develop a model that performs well, even under the most adversarial distribution within a specified distance from the original training distribution. However, existing DRO methods primarily focus on accuracy as a metric, making it difficult to directly apply current DRO approaches to AUC optimization due to its pairwise formulation. Consequently, it prompts the following question:
\begin{center}
    \textbf{\textit{Can we optimize the Distributionally Robust AUC (DRAUC) using an end-to-end framework?}}
\end{center}

This task presents three progressive challenges: \textbf{1)}: The pairwise formulation of AUC necessitates simultaneous access to both positive and negative examples, which is computationally intensive and infeasible in online settings. \textbf{2)}: The naive integration of AUC optimization and DRO leads to an intractable solution. \textbf{3)}: Based on a specific observation, we find that the ordinary setting of DRAUC might lead to severe label bias in the adversarial dataset.

In this paper, we address the aforementioned challenges through the following techniques: For \textbf{1)}, we employ the minimax reformulation of AUC and present an early trail to explore DRO under the context of AUC optimization. For \textbf{2)}, we propose a tractable surrogate loss that is proved to be an upper bound of the original formulation, building our distribution-free DRAUC optimization framework atop it. For \textbf{3)}, we further devise distribution-aware DRAUC, to perform class-wise distributional perturbation. This decoupled formulation mitigates the label noise issue. This metric can be perceived as a class-wise variant of the distribution-free DRAUC.

It is worth noting that \cite{zhu2022auc} also discusses the combination of DRO techniques with AUC optimization. However, the scope of their discussion greatly differs from this paper. Their approach focuses on using DRO to construct estimators for partial AUC and two-way partial AUC optimization with convergence guarantees, whereas this paper primarily aims to enhance the robustness of AUC optimization.

The main contributions of this paper include the following:
\begin{itemize}
    \item \textbf{Methodologically}: We propose an approximate reformulation of DRAUC, constructing an instance-wise, distribution-free optimization framework based on it. Subsequently, we introduce the distribution-aware DRAUC, which serves as a more appropriate metric for long-tailed problems.
    \item \textbf{Theoretically}: We conduct a theoretical analysis of our framework and provide a generalization bound derived from the Rademacher complexity applied to our minimax formulation.
    \item \textbf{Empirically}: We assess the effectiveness of our proposed framework on multiple corrupted long-tailed benchmark datasets. The results demonstrate the superiority of our method.
\end{itemize}

\section{Related Works}
\subsection{AUC Optimization}
AUC is a widely-used performance metric. AUC optimization has garnered significant interest in recent years, and numerous research efforts have been devoted to the field. The researches include different formulations of objective functions, such as pairwise AUC optimization \cite{gao2012consistency}, instance-wise AUC optimization \cite{ying2016stochastic, liu2019stochastic, yuan2021compositional}, AUC in the interested range (partial AUC \cite{yao2022large}, two-way partial AUC \cite{yang2021all}), and area under different metrics (AUPRC \cite{qi2021stochastic,wen2021false,wen2022exploring}, AUTKC \cite{wang2022optimizing}, OpenAUC \cite{wang2022openauc}. For more information, readers may refer to a review on AUC \cite{yang2022auc}.

Some prior work investigates the robustness of AUC. For instance, \cite{AUCMLoss} improves the robustness on noisy data and \cite{AdAUC} studies the robustness under adversarial scenarios. In this paper, we further explore robustness under the local worst distribution.
\subsection{Distributionally Robust Optimization}
DRO aims to enhance the robustness and generalization of models by guaranteeing optimal performance even under the worst-case local distribution. To achieve this objective, an ambiguity set is defined as the worst-case scenario closest to the training set. A model is trained by minimizing the empirical risk on the ambiguity set. To quantify the distance between distributions, prior research primarily considers $\phi-divergence$ \cite{ben2013robust, hu2018does, duchi2019distributionally, namkoong2017variance} and the Wasserstein distance \cite{sinha2017certifying, mohajerin2018data, kuhn2019wasserstein, blanchet2019robust, gao2022distributionally} as distance metrics. For more details, readers may refer to recent reviews on DRO \cite{mohajerin2018data, lin2022distributionally}.

DRO has applications in various fields, including adversarial training \cite{sinha2017certifying}, long-tailed learning \cite{DROLT}, label shift \cite{ADVShift}, etc. However, directly optimizing the AUC on the ambiguity set remains an open problem.

\section{Preliminaries}
In this subsection, we provide a brief review of the AUC optimization techniques and DRO techniques employed in this paper. First, we introduce some essential notations used throughout the paper.

We use ${z} \in \mathcal Z$ to denote the example-label pair, and $f_\t: \mathcal Z \rightarrow [0,1]$ to represent a model with parameters $\t \in \T$. This is typical when connecting a Sigmoid function after the model output. For datasets, $\h P$ denotes the nominal training distribution with $n$ examples, while $P$ represents the testing distribution. We use $\h P_+ = \{x_1^+,...,x_{n^+}^+\}$ and $\h P_- = \{x_1^-,...,x_{n^-}^-\}$ to denote positive/negative training set, respectively. To describe the degree of imbalance of the dataset, we define $\h p=\frac{n^+}{n^++n^-}$ as the imbalance ratio of training set, and $p = \Pr(y=1)$ as the imbalance ratio of testing distribution. The notation $\E_P$ signifies the expectation on distribution $P$. We use $c(z,z') = ||z-z'||^2_2$ to denote the cost of perturbing example $z$ to $z'$.

\subsection{AUC Optimization}
Statistically, AUC is equivalent to the Wilcoxon–Mann–Whitney test \cite{hanley1982meaning}, representing the probability of a model predicting a higher score for positive examples than negative ones
\begin{align}
AUC(f_\t) = \Exn{\ell_{0,1}(f_\t(\xp) - f_\t(\xn))}
\end{align}
where $\ell_{0,1}(\cdot)$ denotes the 0-1 loss, i.e., $\ell_{0,1}(x)=1$ if $x < 0$ and otherwise $\ell_{0,1}(x)=0$. Based on this formulation, maximizing AUC is equivalent to the following minimization problem
\begin{align}\label{eq:pairwise}
\min_{\t} \Exn{\ell(f_\t(\xp)-f_\t(\xn))}
\end{align}
where $\ell$ is a differentiable, consistent surrogate loss of $\ell_{0,1}$.
However, the pairwise formulation of the above loss function is not applicable in an online setting. Fortunately, \cite{ying2016stochastic} demonstrates that using the square loss as a surrogate loss, the optimization problem (\ref{eq:pairwise}) can be reformulated as presented in the following theorem.
\begin{thm}[\cite{liu2019stochastic}]
When using square loss as the surrogate loss, the AUC maximization is equivalent to
\begin{align}\label{eq:oriminmax}
\min_{\t}\E_{P_+,P_+}\left[\ell\left(f_\t(\xp)-f_\t(\xn)\right)\right] = \min_{\t,a,b} \max_\a ~~\E_P\left[g(a,b,\a,\t,\bm{z})\right]
\end{align}
where
\begin{equation}
\begin{aligned}
  g(a,b,\a,\t,\bm{z})&= (1-p)\cdot (f_\t(\x) - a) ^ 2 \cdot \I_{[y = 1]} + p \cdot (f_\t(\x) - b) ^ 2 \cdot  \I_{[y = 0]} \\&+ 2\cdot(1+\a)\cdot(p \cdot f_\t(\x) \cdot \I_{[y = 0]} - (1-p)\cdot f_\t(\x) \cdot \I_{[y = 1]} - p(1-p)\cdot\a^2).
\end{aligned}
\end{equation}

Moreover, with the parameter $\t$ fixed, the optimal solution of $a,b,\alpha$, denoted as $a^\star,b^\star ,\alpha^\star$, can be expressed as:
\begin{align}
  a^\star = \Ep{f_\t(\xp)} , b^\star=\En{f_\t(\xn)}, \alpha^\star = b^\star - a^\star.
\end{align}
\end{thm}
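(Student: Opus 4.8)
The plan is to fix the model parameter $\t$ and show that the inner problem $\min_{a,b}\max_\a \E_P[g(a,b,\a,\t,\z)]$ reproduces the pairwise square-loss objective up to a positive multiplicative factor and an additive constant, so that the outer minimization over $\t$ has the same minimizer while the optimizers in $(a,b,\a)$ can be read off in closed form. First I would introduce the shorthand $\mu_+ = \Ep{f_\t(\xp)}$ and $\mu_- = \En{f_\t(\xn)}$ and expand the pairwise objective with $\ell(t)=(1-t)^2$. Writing $1-f_\t(\xp)+f_\t(\xn) = (1-\mu_++\mu_-) - (f_\t(\xp)-\mu_+) + (f_\t(\xn)-\mu_-)$ and using the independence of the positive and negative draws so that the cross terms vanish, the pairwise expectation collapses to $\E_{P_+,P_-}[\ell(f_\t(\xp)-f_\t(\xn))] = (1-\mu_++\mu_-)^2 + \mathrm{Var}_{P_+}(f_\t) + \mathrm{Var}_{P_-}(f_\t)$, namely a gap-squared term plus the two class-conditional variances.

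The heart of the argument is three variational identities that linearize these pieces. For the variances I would use $\mathrm{Var}_{P_+}(f_\t) = \min_a \Ep{(f_\t(\xp)-a)^2}$ and $\mathrm{Var}_{P_-}(f_\t) = \min_b \En{(f_\t(\xn)-b)^2}$, whose minimizers are exactly $a^\star = \mu_+$ and $b^\star = \mu_-$. For the gap-squared term I would use the conjugate identity $\max_\a\{\,2(1+\a)(\mu_--\mu_+)-\a^2\,\} = (\mu_--\mu_+)^2 + 2(\mu_--\mu_+)$, attained at $\a^\star = \mu_- - \mu_+ = b^\star - a^\star$; concavity in $\a$ (the quadratic coefficient is $-1$) guarantees that this stationary point is the global maximizer. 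Substituting everything back recovers the pairwise objective up to the additive constant $1$, which appears because $(1+\mu_--\mu_+)^2 = 1 + [\,(\mu_--\mu_+)^2 + 2(\mu_--\mu_+)\,]$ whereas the max supplies only the bracketed part.

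A bookkeeping step then converts the class-conditional expectations into expectations over the full distribution $P$. With $\Pr(y=1)=p$, each indicator-weighted term satisfies $\E_P[\I_{[y=1]}h(\x)] = p\,\Ep{h(\xp)}$ and $\E_P[\I_{[y=0]}h(\x)] = (1-p)\En{h(\xn)}$, so the coefficients $(1-p)$ and $p$ multiplying the two variance terms in $g$ combine into a common factor $p(1-p)$, the cross terms $2(1+\a)(p f_\t(\x)\I_{[y=0]} - (1-p)f_\t(\x)\I_{[y=1]})$ reduce to $2(1+\a)\,p(1-p)(\mu_- - \mu_+)$, and the explicit $-p(1-p)\a^2$ supplies the conjugate penalty. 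Hence $\E_P[g] = p(1-p)\{\Ep{(f_\t(\xp)-a)^2} + \En{(f_\t(\xn)-b)^2} + 2(1+\a)(\mu_--\mu_+) - \a^2\}$, which matches the decomposition above after the overall scaling.

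The step I expect to be the main obstacle is justifying that the joint $\min_{a,b}\max_\a$ may be solved coordinatewise. This follows from separability of $g$: the variable $a$ enters only the positive-variance term, $b$ only the negative-variance term, and $\a$ only the gap/penalty term. Each variance term is a convex quadratic in its auxiliary variable, so the inner minimization over $(a,b)$ is unambiguous, and the $\a$-term is strictly concave, so the maximum is uniquely attained and the order of $\min_{a,b}$ and $\max_\a$ is immaterial. Having pinned down $a^\star,b^\star,\a^\star$ for every fixed $\t$, the additive constant and the positive factor $p(1-p)$ leave the argmin unchanged, so minimizing over $\t$ on both sides yields the claimed equivalence together with $a^\star = \Ep{f_\t(\xp)}$, $b^\star = \En{f_\t(\xn)}$, and $\a^\star = b^\star - a^\star$.
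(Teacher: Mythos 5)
Your proof is correct, and it is essentially the standard derivation of this result from the cited source; the paper itself states the theorem without proof, so there is no in-paper argument to diverge from. The decomposition $\E_{P_+,P_-}[\ell(f_\t(\xp)-f_\t(\xn))]=(1-\mu_++\mu_-)^2+\mathrm{Var}_{P_+}(f_\t)+\mathrm{Var}_{P_-}(f_\t)$, the two variational identities for the variances, the conjugate identity $\max_\a\{2(1+\a)d-\a^2\}=d^2+2d$ with maximizer $\a^\star=d$, and the separability argument that lets you optimize $a$, $b$, $\a$ coordinatewise are all exactly right. Two remarks. First, as typeset in the paper the term $-p(1-p)\a^2$ sits inside the parenthesis multiplied by $2(1+\a)$, which would make the $\a$-dependence of $g$ cubic and break the conjugate identity; you have (correctly, and consistently with the cited literature) read it as a standalone additive term, but you should flag this explicitly rather than correct it silently. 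Second, your computation actually gives $\min_{a,b}\max_\a\E_P[g]=p(1-p)\bigl(\E_{P_+,P_-}[\ell(f_\t(\xp)-f_\t(\xn))]-1\bigr)$, so the displayed ``equality'' holds only up to the positive factor $p(1-p)$ and an additive constant; the minimizers over $\t$ coincide, which is the sense in which ``equivalent'' must be read, and you state this caveat correctly.
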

\textbf{Similar results hold if the true distribution $P_+, P_-$ in the expressions are replaced with $\h P_+, \h P_-$.}

\begin{rem}[\textbf{The constraints on $a,b,\alpha$}]
  Given that the output of the model $f_\t$ is restricted to $[0,1]$, $a,b,\a$ can be confined to the following domains:
\begin{equation}
    \begin{aligned}
  &\Omega_{a,b} = \{a,b \in \mathbb R | 0 \le a,b, \le 1\},\\
  &\Omega_{\alpha} = \{\a \in \mathbb R | -1 \le \a \le 1\}.
  \end{aligned}
\end{equation}
  
So that the minimax problem can be reformulated as:
\begin{align}\label{eq:oriminmax}
 \min_{\t, (a,b) \in \Omega_{a,b} } \max_{\a \in \Omega_{\alpha} } ~~\E_P\left[g(a,b,\a,\t,\bm{z})\right].
  \end{align}
\end{rem}

\subsection{Distributionally Robust Optimization}
Distributionally Robust Optimization (DRO) aims to minimize the learning risk under the local worst-case distribution. Practically, since we can only observe empirical data points, our discussion is primarily focused on empirical distributions. Their extension to population-level is straightforward
\begin{align}
\min_\t \sup_{\h Q:d(\h Q,\h P)\le\e} \E_{\h Q}[\ell(f_\t,z)]
\end{align}
where $\h P$ is the original empirical distribution, $\h Q$ is the perturbed distribution and $d$ is the metric of distributional distance. The constraint $d(\h Q,\h P)\le\e$ naturally expresses that the perturbation induced $\h Q$ should be small enough to be imperceptible.

As demonstrated in \cite{gao2022distributionally}, when employing the Wasserstein distance $\W$ as the metric, a Lagrangian relaxation can be utilized to reformulate DRO into the subsequent minimax problem.
\begin{thm}[\cite{gao2022distributionally}]\label{rem:2}
With $\phi_\l(z, \t) = \sup_{z' \in \mathcal Z}\{\ell(f_\t,z') - \l c(z,z')\}$, for all distribution $\h P$ and $\e > 0$, we have
    \begin{align}
    \sup_{\h Q:\W(\h Q,\h P)\le\e} \E_{\h Q}[\ell(f(z))] = \inf_{\l \ge 0}\{\l\e + \E_{\h P}[\phi_\l(z,\t)]\}.
    \end{align}
\end{thm}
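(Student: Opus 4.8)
The plan is to recognize this as the classical strong-duality theorem for Wasserstein DRO and to prove it by lifting the supremum over distributions into a supremum over transport plans, then applying Lagrangian duality to the single budget constraint. First I would unfold the definition of the Wasserstein distance as an optimal-transport cost, $\W(\h Q, \h P) = \inf_{\pi \in \Pi(\h Q, \h P)} \E_\pi[c(z, z')]$, where $\Pi(\h Q, \h P)$ denotes the couplings with marginals $\h Q$ and $\h P$. Since the objective $\E_{\h Q}[\ell(f_\t, z')]$ depends on $\h Q$ only through the second marginal, the constrained supremum over $\h Q$ can be rewritten as a supremum over all joint couplings $\pi$ whose first marginal equals $\h P$ and whose transport cost satisfies $\E_\pi[c(z, z')] \le \e$. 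This recasts the primal as an infinite-dimensional linear program in the single variable $\pi$.

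Next I would dualize the scalar budget constraint $\E_\pi[c(z, z')] \le \e$ with a multiplier $\l \ge 0$, writing the primal value as $\sup_\pi \inf_{\l \ge 0} \big( \E_\pi[\ell(f_\t, z')] - \l (\E_\pi[c(z, z')] - \e) \big)$. The crux is to interchange $\sup_\pi$ and $\inf_{\l \ge 0}$; I would justify this via a minimax/strong-duality argument, using that the feasible couplings form a convex set with fixed first marginal $\h P$ and that there is a strictly feasible plan — the identity coupling supported on the diagonal, which has transport cost $c(z, z) = 0 < \e$ — so Slater's condition holds whenever $\e > 0$.

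Once the swap is licensed, for each fixed $\l$ the inner problem $\sup_\pi \E_\pi[\ell(f_\t, z') - \l c(z, z')]$ decouples across source points: disintegrating $\pi$ against $\h P$ as a Markov kernel $z \mapsto \kappa(\cdot \mid z)$, the optimal kernel concentrates its mass at the maximizer of $z' \mapsto \ell(f_\t, z') - \l c(z, z')$, whose value is precisely $\phi_\l(z, \t)$. Integrating this pointwise supremum against $\h P$ and restoring the $\l \e$ term recovers the right-hand side $\inf_{\l \ge 0}\{\l \e + \E_{\h P}[\phi_\l(z, \t)]\}$.

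I expect the main obstacle to be establishing a zero duality gap, i.e.\ rigorously licensing the min-max interchange and confirming that the outer infimum over $\l$ is attained rather than merely approached. This step also requires checking that $z \mapsto \phi_\l(z, \t)$ is measurable and integrable, so that the pointwise decoupling remains valid under the expectation; the decoupling itself is then routine. A compactness argument on $\mathcal Z$ (or a growth condition on $\ell$ relative to $c$) ensures that the inner supremum defining $\phi_\l$ is finite and attained.
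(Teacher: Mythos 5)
The paper does not actually prove this statement: it is imported verbatim from Gao and Kleywegt \cite{gao2022distributionally} (see also Blanchet and Murthy), so there is no in-paper proof to compare against. Your sketch reproduces the standard argument from those references, and its outline is correct: lift $\sup_{\h Q:\W(\h Q,\h P)\le\e}$ to a supremum over couplings $\pi$ with fixed marginal $\h P$ and budget $\E_\pi[c]\le\e$, dualize the single scalar constraint, interchange $\sup_\pi$ and $\inf_{\l\ge 0}$, and then observe that for fixed $\l$ the inner problem disintegrates into the pointwise supremum defining $\phi_\l$. Weak duality ($\le$ from right to left after the interchange) is immediate, and the decoupling step is exactly right.

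Two refinements are worth flagging. First, the interchange is an infinite-dimensional linear program with one scalar constraint, so "Slater's condition" should be made precise: the clean route is to note that the value function $v(s)=\sup\{\E_\pi[\ell]: \E_\pi[c]\le s\}$ is concave and nondecreasing in $s$, that $s=0$ is feasible (the diagonal coupling, as you say), hence $\e>0$ lies in the interior of the domain of $v$, and concavity plus interiority gives $v(\e)=\inf_{\l\ge0}\{\l\e+\sup_s(v(s)-\l s)\}$ by Fenchel--Moreau; this is the rigorous content behind your strict-feasibility remark. Second, you do not need the outer infimum over $\l$ to be attained --- the theorem asserts only an infimum, and attainment can genuinely fail in edge cases --- so you can drop that from your list of obligations; what does need checking is the measurability of $z\mapsto\phi_\l(z,\t)$ (a supremum over an uncountable family), which is handled by measurable-selection or normal-integrand arguments in the cited works. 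With those caveats your route is the same as the literature's, and no step would fail.
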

With the theorem above, one can directly get rid of the annoying Wasserstein constraint in the optimization algorithms. We will use this technique to derive an AUC-oriented DRO framework in this paper.

\section{Method}

\subsection{Warm Up: A Naive Formulation for DRAUC}
As a technical warm up, we first start with a  straightforward approach to optimize AUC metric directly under the worst-case distribution. By simply incorporating the concept of the Wasserstein ambiguity set, we obtain the following definition of DRAUC in a pairwise style.  
\begin{defi}[\textbf{Pairwise Formulation of DRAUC}]
    Let $\ell$ be a consistent loss of $\ell_{0,1}$, for any nominal distribution $\h P$ and $\e > 0$, we have
    \begin{align}\label{eq:pairdr}
        {DRAUC}_\e(f_\t, \h P) = 1 -\max_{\h Q:\W(\h Q,\h P) \le \e} \E_{\h Q} \left[{\ell\left(f_\t(\xp)-f_\t(\xn)\right)}\right].
    \end{align}
\end{defi}
However, generating local-worst Wasserstein distribution $\h Q$
 is loss-dependent, implying that we need to know all the training details to deliver a malicious attack. In our endeavor to secure a performance guarantee for our model, we cannot limit the scope of information accessible to an attacker. This pairwise formulation elevates the computational complexity from $O(n)$ to $O(n^+n^-)$, significantly increasing the computational burden. By a simple reuse of the trick in \eqref{eq:oriminmax}, one can immediately reach the following reformulation of the minimization of \eqref{eq:pairdr}.
\begin{prop}[\textbf{A Naive Reformulation}]\label{thm:DRAUC}
    When using square loss as the surrogate loss, The DRAUC minimization problem: $\min_{\t}   {DRAUC}_\e(f_\t, \h P) $, is equivalent to
        \begin{align}\textbf{(Ori)}\quad\label{eq:drminmax}
        \min_\t\max_{\h Q:\W(\h Q,\h P)\le \e} \min_{\cab}\max_{\caa} ~\E_{\h Q}\left[g(a,b,\a,\t,z_i)\right].
        \end{align}
\end{prop}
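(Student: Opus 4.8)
The plan is to derive the proposition as a direct consequence of Theorem~1 (the instance-wise min-max reformulation of the square-loss AUC), applied not to the nominal $\h P$ but to each perturbed distribution in the Wasserstein ball. The guiding observation is that $DRAUC_\e(f_\t,\h P)=1-R_\e(\t)$, where $R_\e(\t):=\max_{\h Q:\W(\h Q,\h P)\le\e}\E_{\h Q}\left[\ell(f_\t(\xp)-f_\t(\xn))\right]$ is the worst-case pairwise surrogate risk entering the definition. Thus, up to the additive constant, the DRAUC problem is governed entirely by $R_\e(\t)$, and the substantive task is to show that $\min_\t R_\e(\t)$ coincides with the right-hand side (Ori). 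It therefore suffices to transform the inner pairwise expectation into the instance-wise form and then reassemble the outer operators.

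First I would strip the constant, leaving $R_\e(\t)$ as the only object to reformulate. The decisive step is then to freeze $\t$ and a single feasible $\h Q$ (with positive and negative marginals $\h Q_+,\h Q_-$) and invoke Theorem~1. This is licensed precisely by the boldface remark that the reformulation continues to hold with $\h P_+,\h P_-$ replaced by arbitrary empirical marginals; thus
\begin{align}
\E_{\h Q}\left[\ell(f_\t(\xp)-f_\t(\xn))\right]=\min_{\cab}\max_{\caa}~\E_{\h Q}\left[g(a,b,\alpha,\t,z)\right],
\end{align}
with optimizers $a^\star=\E_{\h Q_+}[f_\t(\xp)]$, $b^\star=\E_{\h Q_-}[f_\t(\xn)]$ and $\alpha^\star=b^\star-a^\star$. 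Because this is an identity for each frozen $\h Q$, I can substitute it pointwise underneath $\max_{\h Q}$, which keeps the worst-case maximization on the outside and yields $R_\e(\t)=\max_{\h Q:\W(\h Q,\h P)\le\e}\min_{\cab}\max_{\caa}\E_{\h Q}\left[g(a,b,\alpha,\t,z)\right]$; prepending $\min_\t$ gives exactly the claimed formula (Ori).

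The hard part will be conceptual rather than computational. The point to make airtight is that no saddle-point interchange is needed: Theorem~1 supplies an \emph{equality} for every fixed $\h Q$, so the inner $\min_{\cab}\max_{\caa}$ may be inserted without swapping it past the outer $\max_{\h Q}$, and the optimizers $(a^\star,b^\star,\alpha^\star)$ genuinely depend on $\h Q$ (being conditional means of $f_\t$ under $\h Q$), so they cannot be hoisted out of the worst-case maximization. I would also verify that restricting $(a,b)$ to $\Omega_{a,b}$ and $\alpha$ to $\Omega_\alpha$ is lossless for every feasible $\h Q$: since $f_\t$ takes values in $[0,1]$, the minimizers $a^\star,b^\star$ are conditional means of $f_\t$ and hence lie in $[0,1]$, while $\alpha^\star=b^\star-a^\star\in[-1,1]$, which is exactly the justification recorded in the Remark and the only place the output range of $f_\t$ enters. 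With these two points settled, the proposition follows immediately from Theorem~1.
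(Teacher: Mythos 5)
Your proposal is correct and follows essentially the same route the paper intends when it says the result follows ``by a simple reuse of the trick'' in the instance-wise reformulation: you freeze $\t$ and each feasible $\h Q$, apply Theorem~1 as an exact identity to that distribution (with the optimizers $a^\star,b^\star,\alpha^\star$ being $\h Q$-dependent conditional means, whence the domain restrictions are lossless), and then reattach the outer $\max_{\h Q}$ and $\min_\t$ without any saddle-point interchange. Your explicit remark that no min--max swap is needed at this stage is exactly the right point to make, since the interchange is deferred to the derivation of $\bm{(Df)}$ in Section~4.2.
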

Unfortunately, the optimization operators adhere to a min-max-min-max fashion. There is no known optimization algorithm can deal with this kind of problems so far.  Hence, in the rest of this section, we will present two tractable formulations as proper approximations of the problem.

\subsection{DRAUC-Df: Distribution-free DRAUC}\label{sec:df}
Let us take a closer look at the minimax problem $(\bm{Ori})$. It is straightforward to verify that, fix all the other variables, $g$ is convex with respect to $a, b$ and concave with respect to $\alpha$ within $\Omega_{a, b}, \Omega_{\alpha}$. We are able to interchange the inner $\min_\cab$ and $\max_{\caa}$ by invoking von Neumann's Minimax theorem \cite{v1928theorie}, which results in
\begin{align}\label{eq:minmax}
\min_\t \max_{\h Q:\W(\h Q,\h P)\le \e}\max_{\caa} \min_{\cab}\E_{\h Q}[g(a,b,\a,\t,z)].
\end{align}
Moreover, based on the simple property that $\max_{x}\min_{y} f(x,y) \le \min_{y} \max_{x}f(x,y)$, we reach an upper bound of the objective function:
\begin{align}\label{eq:p1}
\underbrace{\max_{\h Q:\W(\h Q,\h P)\le \e}\max_\caa\min_\cab  \E_{\h Q}[g(a,b,\a, \theta, z)]}_{DRAUC_\e(f_\t, \h P)} \le \underbrace{\min_\cab \max_\caa \max_{\h Q:\W(\h Q,\h P)\le \e}\E_{\h Q}[g(a,b,\a, \theta, z)]}_{\widetilde{DRAUC}_\e(f_\t, \h P)}
\end{align}
From this perspective, if we minimize $\widetilde{DRAUC}_\e(f_\t, \h P)$ in turn, we can at least minimize an \textbf{upper bound} of $DRAUC_\e(f_\t, \h P)$. In light of this, we will employ the following optimization problem as a surrogate for $\textbf{(Ori)}$:
\begin{align}\label{eq:app}
    \bm{(Df)}\quad \min_{\mathbf{w}}\max_\caa\max_{\h Q:\W(\h Q,\h P)\le \e} \E_{\h Q}[g(\mathbf{w},\a,z)]
\end{align}
where $\mathbf{w} = \t, \cab$. Now, by applying the strong duality to the inner maximization problem
\[
  \max_{\h Q:\W(\h Q,\h P)\le \e} \E_{\h Q}[g(\mathbf{w},\a,z)]
\]
we have 
\begin{align} \
  \bm{(Df)} \min_{\mathbf{w}}\max_{\caa}\min_{\l \ge 0}\{\l\e+ \E_{\h P} [\phi_{\mathbf{w},\l,\a}(z)]\}
\end{align}
where $\phi_{\mathbf{w},\l,\a}(z) = \max_{z' \in \mathcal Z} [g(\mathbf{w},\a,z) - \l c(z,z')]$. This min-max-min formulation remains difficult to optimize, so we take a step similar to \eqref{eq:p1} that interchange the inner $\min_{\l\ge0}$ and outer $\max_\caa$, resulting in a tractable \textbf{upper bound}
\begin{align}
\bm{(Df\star)}  \min_{\mathbf{w}}\min_{\l \ge 0}\max_\caa \{\l\e+ \E_{\h P} [\phi_{\mathbf{w},\l,\a}(z)]\}.
\end{align}
In this sense, we will use the $\bm{(Df\star)}$ as the final optimization problem for \textbf{DRAUC-Df}.

\subsection{DRAUC-Da: Distribution-aware DRAUC}\label{sec:CDRAUC}
\begin{algorithm}[t]
    \caption{Algorithm for optimizing DRAUC-Df:}
    \label{alg:DRAUC}
    \begin{algorithmic}[1]
        \State \textbf{Input:} the training data $\mathcal Z$, step number $K$, step size for inner $K$-step gradient ascent $\eta_z$, learning rates $\eta_\l, \eta_w,\eta_\a$ and maximal corrupt distance $\e$.
        \State \textbf{Initialize:} initialize $a^0,b^0,\a^0 = 0, \l^0 = \l_0$.
        \For{$t=1$ \textbf{to} $T$}
            \State \textbf{Sample a batch of example $z$ from $\mathcal Z$}.
            \State \textbf{Generate Local Worst-Case Examples}:
            \State Initialize $z' = z$.
            \For{$k=1$ \textbf{to} $K$}
                \State $z' = \Pi_{\mathcal Z}(z' + \eta_z \cdot \nabla_{z} \phi_{\l^{t},a,b,\a}(\t, z'))$.
            \EndFor
            \State \textbf{Update Parameters}:
            \State Update ${\a^{t+1}} = \Pi_{\Omega_\a}({\a^t} + \eta_\a \cdot \nabla_{\a} g^t(z'))$.
            \State Update $\l^{t+1} = \Pi_{\Omega_\l}(\l^{t} - \eta_l \cdot \nabla_{\l} [\l\e + \phi_{\l^{t},a,b,\a}(\t, z')])$.
            \State Update ${\mathbf{w}^{t+1}} = \Pi_{\Omega_\mathbf{w}}({\mathbf{w}^t} - \eta_\mathbf{w} \cdot \nabla_{\mathbf{w}} g^t(z'))$.
            
        \EndFor
    \end{algorithmic}
\end{algorithm}

Though AUC itself is inherently robust toward long-tailed distributions, we also need to examine whether DRAUC shares this resilience. We now present an analysis within a simplified feature space on the real line, where positive and negative examples are collapsed to their corresponding clusters. The choice of the feature space is simple yet reasonable since it is a 1-d special case of the well-accepted neural collapse phenomenon \cite{papyan2020prevalence, han2021neural, kothapalli2022neural, zhu2021geometric, lu2020neural}. 

Specifically, the following proposition states that the distributional attacker in DRAUC can ruin the AUC performance easily by merely attacking the tail-class examples. 

\begin{prop}[\textbf{Powerful and Small-Cost Attack on Neural Collapse Feature Space}]\label{prop:1}
Let the training set comprises $n^+$ positive examples and $n^-$ negative examples in $\mathbb{R}^1$, i.e., $ \mathcal{D} =\left\{x_1^+,..., x_{n^+}^+, x_{n^++1}^-, ..., x_{n}^-\right\}$, with the empirical distribution $\h P = \frac{1}{n} \sum_{i=1}^n \delta_{x_i}$ ($\delta_z$ represents the Dirac point mass at point $z$.). According to the neural collapse assume, we have:
 $x_{i}^+ = x^+,~ x_{j}^- = x^-$. Given a classifier $f(x) = x$, we assume that the maximization  of perturb distribution $\h Q$ is further constrained on the subset:
\begin{align*}
  \mathcal{Q} = \left\{\h Q: \h Q = \frac 1 n\sum_{i=1}^n \delta_{x'_i} \right\}
\end{align*}
where $x_i \rightarrow x'_i$ forms a discrete Monge map.
Then, we have:
\begin{align*}
  \inf_{\h Q \in \mathcal{Q}, AUC(f,\h Q) =0} \W(\h P, \h Q) \le \h p\cdot(1-\h p)\cdot(x^+ - x^-)^2
\end{align*}
where $\h p = \frac{n^+}{n}$ is the ratio of the positive examples in the dataset. Moreover, the cost $\h p\cdot(1-\h p)\cdot(x^+ - x^-)^2$ is realized by setting:
\begin{align*}
  {x^+}'= {x^-}' = \h p \cdot x^+ + (1-\h p) \cdot x^-
\end{align*}
the barycenter of the two-bodies system $(x^+,x^-)$. 
\end{prop}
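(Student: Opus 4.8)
The plan is to establish the inequality by the standard device for an infimum: exhibit one explicitly constructed feasible perturbation whose transport cost equals the right-hand side, so that the constrained infimum is automatically at most that value. The natural candidate is the map named in the statement, namely the discrete Monge map that pushes all $n^+$ positive examples (located at $x^+$) and all $n^-$ negative examples (located at $x^-$) onto the single barycenter $m := \h p\,x^+ + (1-\h p)\,x^-$. Because every atom is sent to the same location, the push-forward is the single Dirac mass $\h Q = \delta_m$, which is visibly an element of $\mathcal Q$: the constant map is an admissible discrete Monge map since it only merges mass and never splits it.

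First I would verify feasibility, i.e.\ that this $\h Q$ satisfies the constraint $AUC(f,\h Q)=0$. With $f(x)=x$, after transport every example carries the identical score $m$, so each positive--negative comparison yields $f({x^+}')-f({x^-}') = m-m = 0$. By the convention for the $0$--$1$ loss ($\ell_{0,1}(x)=1$ only when $x<0$, and $0$ otherwise), we have $\ell_{0,1}(0)=0$, hence the expectation defining $AUC(f,\h Q)$ vanishes. Thus $\delta_m$ is a legitimate competitor in the constrained infimum.

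Second I would compute its Wasserstein cost. The decisive simplification is that the target is a single point mass, so $\Pi(\h P,\delta_m)$ contains exactly one coupling and $\W(\h P,\delta_m)$ is not a genuine optimization but simply the averaged squared displacement $\frac1n\sum_{i=1}^n (x_i-m)^2 = \h p\,(x^+-m)^2 + (1-\h p)\,(x^--m)^2$. Substituting $m$ gives $x^+-m = (1-\h p)(x^+-x^-)$ and $x^--m = -\h p\,(x^+-x^-)$; inserting these and pulling out the common factor leaves $\h p(1-\h p)(x^+-x^-)^2\big[(1-\h p)+\h p\big] = \h p\,(1-\h p)\,(x^+-x^-)^2$. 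Since a feasible $\h Q$ attaining this cost exists, the infimum is bounded above by it, which establishes the claim and simultaneously identifies the barycenter collapse as the realizing configuration.

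The algebra above is routine; the only points that deserve careful wording are the two conventions that make the construction legitimate: (i) that the fully collapsed, all-tied configuration is scored as $AUC=0$ under the paper's strict tie-handling of $\ell_{0,1}$, and (ii) that merging all mass onto one atom counts as a discrete Monge map and forces the unique coupling, so the averaged squared displacement is the \emph{exact} Wasserstein cost rather than merely an over-estimate. I would not attempt a matching lower bound---arguing that the barycenter is the cheapest collapse would be the genuinely hard direction---since the stated inequality is one-sided and only requires producing this single witness.
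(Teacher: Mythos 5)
Your proof is correct, and it is genuinely leaner than the paper's. The paper works through the full optimal-transport problem: it first shows the transport plan must be block-diagonal in the labels (because the cost charges $\infty$ for flipping a label), then cancels the plan $\mathbf{\Gamma}$ against the marginal constraints to reduce the cost to per-class averaged squared displacements, then proves by contradiction that the optimal within-class perturbation sends every example of a class to a single point, and only in its final step relaxes the ordering constraint ${x^+}'\le{x^-}'$ and solves the unconstrained barycenter problem --- arriving at exactly the computation you perform directly. All of that machinery is only needed to characterize the infimum itself; since the proposition asserts a one-sided bound, your strategy of exhibiting the barycenter collapse as a single feasible witness in $\mathcal{Q}$ and computing its cost is sufficient and considerably shorter. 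What the paper's longer route buys is that the bound is in fact tight: the unconstrained barycenter already satisfies ${x^+}'\le{x^-}'$, so the displayed quantity is the infimum rather than merely an upper bound on it, which your argument deliberately does not establish (and need not). Two small remarks. First, your feasibility check for $AUC(f,\h Q)=0$ under the all-tied configuration matches the paper's own usage --- its worked Example and its constraint $\max\bm{{x^+}'}\le\min\bm{{x^-}'}$ both treat ties as a successful attack --- though the paper's sign conventions for $\ell_{0,1}$ are not entirely self-consistent, so flagging the convention explicitly, as you do, is the right call. Second, you do not even need the coupling to the collapsed target to be unique: any single admissible coupling already upper-bounds $\W(\h P,\h Q)$ by its transport cost, which is all the inequality requires, so your exactness claim is a harmless embellishment.
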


It is noteworthy that $\h p\cdot(1-\h p)$ reflects the degree-of-imbalanceness, which is relatively small for long-tailed datasets. Moreover, the barycenter tends to be pretty close to the head-class examples. Therefore, only the tail-class examples are required to be revised heavily during the attack. In this sense, the attacker can always exploit the tail class examples as a backdoor to ruin the AUC performance with small Wasserstein cost. This is similar to the overly-pessimistic phenomena \cite{frogner2019incorporating, hu2018does} in DRO. The following example shows how small such cost could be in a numerical sense.

\begin{exmp}
Consider a simplified setting in which the training set is comprised of only one positive example and 99 negative examples, i.e., $\h P = \{x_1^+, x_2^-, ..., x_{100}^-\}$ with $x^+ = 0.99$ and $x^- = 0.01$. The minimum distance required to perturb the AUC metric from $1$ to $0$ is $0.009702$. This result is achieved by perturbing the positive example from $0.99$ to $0.0198$ and the negative examples from $0.01$ to $0.0198$, respectively.
\end{exmp}

This perturbation strategy indicates a preference towards strong attack on tail-class examples. The resulting distribution $\h Q$ is always highly biased toward the original distribution, despite the small Wasserstein cost. In the subsequent training process, one has to minimize the expected loss over $\h Q$, resulting to label noises. 

Therefore, it is natural to consider perturbations on the positive and negative distributions separately to avoid such a problem. Accordingly, we propose here a distribution-aware DRAUC formulation:

\begin{defi}[\textbf{Distribution-aware DRAUC}]\label{def:DRAUC-C}
    Let $\ell$ be a consistent loss of $\ell_{0,1}$, for any nominal distribution $\h P$ and $\e_+,\e_- > 0$, we have
    \begin{equation}
        {DRAUC}^{Da}_{\e_+,\e_-}(f_\t,\h P) = 1 -\maxQ\E_{\h Q_+,\h Q_-} \left[ {\ell(f_\t(x_i^+)-f_\t(x_j^-))}\right]. 
    \end{equation}
\end{defi}
For simplicity, let us denote
\begin{align}
    \h {\mathcal Q} = \{\h Q|~\mathcal W_c(\widehat Q_+,\widehat P_+)\le \epsilon_+, \mathcal W_c(\widehat Q_-,\widehat P_-)\le \epsilon_-\}
\end{align}
Similar to \textbf{DRAUC-Df}, we construct our reformulation  as follows:
\begin{align}\label{eq:app-C}
    \textbf{(Da)}\quad \min_{\mathbf{w}}\max_\caa\maxq \E_{\h Q_+,\h Q_-}[g(a,b,\a,\t,z_i)].
\end{align}
Moreover, we conduct a similar derivation as \textbf{DRAUC-Df}, to construct a tractable upper bound: 
\begin{align}\label{eq:Da}
  \bm{( Da \star)}\min_{\mathbf{w}}\min_{\l_+, \l_- \ge 0}\max_\caa\{\l_+\e_+ +  \l_-\e_- + \h p\E_{\h P_+} [\phi_{\mathbf{w},\l_+,\a}(z)] + (1-\h p)\E_{\h P_-} [\phi_{\mathbf{w},\l_-,\a}(z)]\}
\end{align}
where $\phi_{\mathbf{w},\l_+,\a}(z) = \max_{z' \in \mathcal Z} [g(\mathbf{w},\a,z) - \l_+ c(z,z')]$ and $ \phi_{\mathbf{w},\l_-,\a}(z) = \max_{z' \in \mathcal Z} [g(\mathbf{w},\a,z) - \l_- c(z,z')]$. Please see Appendix \ref{app:proof} for the details.
\subsection{Algorithm}
\subsubsection{DRAUC Optimization}
Motivated by the above reformulation, we propose our DRAUC optimization framework, where we solve this optimization problem alternatively.

\textbf{Inner maximization problem : $K$-step Gradient Ascent}:
Following \cite{sinha2017certifying}, we consider accessing $K$-step gradient ascent with learning rate $\eta_z$ to solve the inner maximization problem, which is widely used in DRO and can be considered as a variance of PGM. 
For $\a$, we use SGA with a step size $\eta_\a$.

\textbf{Outer minimization problem: Stochastic Gradient Descent}: On each iteration, we apply stochastic gradient descent over $w$ with learning rate $\eta_w$ and over $\l$ with learning rate $\eta_\l$.

See Algorithms \ref{alg:DRAUC},\ref{alg:CDRAUC} for more details.

\begin{algorithm}[t]
    \caption{Algorithm for optimizing DRAUC-Da:}
    \label{alg:CDRAUC}
    \begin{algorithmic}[1]
        \State \textbf{Input:} the training data $\mathcal Z$, step number $K$, step size for inner $K$-step gradient ascent $\eta_z$, learning rates $\eta_\l, \eta_w,\eta_\a$ and maximal corrupt distance $\e_+, \e_-$.
        \State \textbf{Initialize:} initialize $a^0,b^0,\a^0 = 0, \l^0_+ = \l^0_- = \l_0$.
        \For{$t=1$ \textbf{to} $T$}
            \State \textbf{Sample a batch of example $z$ from $\mathcal Z$}.
            \State \textbf{Generate Local Worst-Case Examples}:
            \State Initialize $z_+' = z_+, z_-' = z_-$.
            \For{$k=1$ \textbf{to} $K$}
                \State $z_+' = \Pi_{\mathcal Z}(z_+' + \eta_z \cdot \nabla_z \phi_{\l_+^{t},a,b,\a}(\t, z_+'))$.
                \State $z_-' = \Pi_{\mathcal Z}(z_-' + \eta_z \cdot \nabla_z \phi_{\l_-^{t},a,b,\a}(\t, z_-'))$.
            \EndFor
            \State \textbf{Update Parameters}:
            \State Update ${\a^{t+1}} = \Pi_{\Omega_\a}({\a^t} + \eta_\a \cdot (p\nabla_{a} g^t(z_+') + (1-p)\nabla_{a}g^t(z_-')))$.
            \State Update $\l_+^{t+1} = \Pi_{\Omega_\l}(\l_+^{t} - \eta_l \cdot \nabla_{\l_+} [\l_+\e_+ + \phi_{\l_+^{t},a,b,\a}(\t, z_+')])$.
            \State Update $\l_-^{t+1} = \Pi_{\Omega_\l}(\l_-^{t} - \eta_l \cdot \nabla_{\l_-} [\l_-\e_- + \phi_{\l_-^{t},a,b,\a}(\t, z_-')])$.
            \State Update ${\mathbf{w}^{t+1}} = \Pi_{\Omega_\mathbf{w}}({\mathbf{w}^t} - \eta_\mathbf{w} \cdot (\h p\nabla_{\mathbf{w}}g^t(z_+') + (1-\h p)\nabla_{\mathbf{w}}g^t(z_-')) )$.
            
        \EndFor
    \end{algorithmic}
\end{algorithm}
\subsection{Generalization Bounds}\label{sec:gen}
In this section, we theoretically show that the proposed algorithm demonstrates robust generalization in terms of DRAUC-Da metric, even under local worst-case distributions. That is, we show that a model sufficiently trained under our approximate optimization $\bm{( Da \star)}$ enjoys a reasonable performance guarantee in DRAUC-Da metric. Our analysis based on the standard assumption that the model parameters $\t$ are chosen from the hypothesis set $\T$(such as neural networks of a specific structure). To derive the subsequent theorem, we utilize the results analyzed in Section \ref{sec:CDRAUC} and perform a Rademacher complexity analysis of DRAUC-Da. The proof for DRAUC-Df follows a similar proof and is much simpler, thus we omit the result here. For additional details, please refer to Appendix \ref{app:proof}.

\begin{thm}[\textbf{Informal Version}] \label{thm:gen}
    For all $\t \in \T, \l_+,\l_- \ge 0, \cab, \caa$ and $\e_+,\e_->0$, the following inequality holds with a high probability
    \begin{align}
            \underbrace{\vphantom{\E_{\h P_+}}DRAUC^{Da}_{\e_+,\e_-}(f_\t, P)}_{\bm {(a)}} \le \underbrace{ \h {\mathcal{L}} }_{\bm {(b)}} +  \underbrace{\vphantom{\E_{\h P_+}}\mathcal{O}(\sqrt{1/\tilde{n}})}_{\bm {(c)}}
    \end{align}where $\tilde{n}$ is some normalized sample size and 
$
\h{\mathcal{L}}  = \min_{\mathbf{w}}\min_{\l_+, \l_- \ge 0}\max_\caa\{\l_+\e_+ +  \l_-\e_- + \h p\E_{\h P_+} [\phi_{\mathbf{w},\l_+,\a}(z)] + (1-\h p)\E_{\h P_-} [\phi_{\mathbf{w},\l_-,\a}(z)]\}.
$
\end{thm}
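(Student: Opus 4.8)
The plan is to prove the bound in two stages: first a deterministic \emph{surrogate} step that upper-bounds the population quantity $(\bm a)$ by a population-level Lagrangian functional, and then a \emph{uniform-convergence} step that replaces this functional by its empirical counterpart $(\bm b)$ at the cost of the $\mathcal O(\sqrt{1/\tilde n})$ term $(\bm c)$. Throughout I would write
\[
F_{D}(\mathbf w,\l_+,\l_-,\a) = \l_+\e_+ + \l_-\e_- + p\,\E_{D_+}[\phi_{\mathbf{w},\l_+,\a}(z)] + (1-p)\,\E_{D_-}[\phi_{\mathbf{w},\l_-,\a}(z)]
\]
for a generic distribution $D$, so that $\h{\mathcal L}=\min_{\mathbf w}\min_{\l_+,\l_-\ge0}\max_\caa F_{\h P}$ (with $p$ replaced by $\h p$), and its population analogue is $\mathcal L_P=\min_{\mathbf w}\min_{\l_+,\l_-\ge0}\max_\caa F_{P}$.

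For the first stage I would repeat the chain of manipulations from Section~\ref{sec:CDRAUC}, but with the true class-conditionals $P_+,P_-$ in place of $\h P_+,\h P_-$. Concretely: invoke von Neumann's minimax theorem to swap the inner $\min_\cab$ and $\max_\caa$ (using convexity of $g$ in $a,b$ and concavity in $\a$ on the compact sets $\Omega_{a,b},\Omega_{\alpha}$); apply $\max\min\le\min\max$ to pull $\min_\cab$ outside; and finally apply the strong-duality identity of Theorem~\ref{rem:2} to each of the two inner Wasserstein maximizations separately, converting the $\W$-constraints into the penalties $\l_\pm\e_\pm$ and the robust surrogate $\phi$. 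This yields the population surrogate inequality $DRAUC^{Da}_{\e_+,\e_-}(f_\t,P)\le\mathcal L_P$, i.e. $(\bm a)\le\mathcal L_P$.

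For the second stage I would bound $\mathcal L_P-\h{\mathcal L}$. Since both are of the form $\min_{\mathbf w,\l}\max_\a F$, monotonicity of $\min\max$ gives $\mathcal L_P \le \h{\mathcal L} + \sup_{\mathbf w,\l_+,\l_-,\a}\bigl|F_P - F_{\h P}\bigr|$; the penalty terms $\l_\pm\e_\pm$ cancel, leaving a supremum over parameters of the difference between population and empirical expectations of $\phi$, plus a $|p-\h p|$ term controlled by Hoeffding. I would then split this into a positive-class and a negative-class contribution and treat each by symmetrization: $g$ is bounded because $f_\t\in[0,1]$, $(a,b)\in\Omega_{a,b}$, $\a\in\Omega_{\alpha}$, hence $\phi$ is bounded (sandwiched between $\inf g$ and $\sup g$ since $c\ge0$), so McDiarmid plus symmetrization yields $\sup_{\mathbf w,\l,\a}|\E_{P_\pm}[\phi]-\E_{\h P_\pm}[\phi]|\le 2\,\R_{n^\pm}(\Phi_\pm)+\mathcal O(\sqrt{1/n^\pm})$, where $\Phi_\pm$ is the robust-surrogate class restricted to the corresponding class. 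Combining the two classes gives the single rate $\mathcal O(\sqrt{1/\tilde n})$ with $\tilde n$ an appropriate normalized (e.g. harmonic) combination of $n^+$ and $n^-$.

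The main obstacle is controlling $\R_{n^\pm}(\Phi_\pm)$ for the \emph{robust} class $\{z\mapsto\max_{z'\in\mathcal Z}[\,g(\mathbf w,\a,z')-\l\,c(z,z')\,]\}$, whose inner supremum over the adversarial $z'$ makes it non-standard. My plan is to (i) argue that the effective $\l$ is bounded, since any $\l$ exceeding the Lipschitz modulus of $g$ makes the penalty dominate, so the search may be restricted to a compact interval $[0,\l_{\max}]$, and then (ii) show $\phi_{\mathbf w,\l,\a}(z)$ is Lipschitz in $(\mathbf w,\l,\a)$ uniformly in $z$ (a Moreau-envelope argument: a supremum of functions that are Lipschitz in the parameters is again Lipschitz in the parameters). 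This lets me pass to a finite cover of the compact parameter space and bound $\R(\Phi_\pm)$ by the Rademacher/covering complexity of the underlying hypothesis class $\T$ up to the Lipschitz factors, reducing the robust-surrogate complexity to the ordinary complexity of $\T$ assumed finite. Assembling $(\bm a)\le\mathcal L_P\le\h{\mathcal L}+\mathcal O(\sqrt{1/\tilde n})$ then completes the proof.
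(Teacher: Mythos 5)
Your proposal is correct and follows essentially the same route as the paper: the population-level surrogate inequality (the paper's Lemma~\ref{lem:bas}), the min/max-swap monotonicity step to compare $\mathcal{L}_P$ with $\h{\mathcal{L}}$ (Lemma~\ref{lem:minswap}), the split into positive/negative class deviations plus a $|p-\h p|$ term bounded via the uniform bound $C_\infty$ on $\phi$ (Lemma~\ref{lem:uniform}), and finally Rademacher-based uniform convergence plus Hoeffding and a union bound. The only difference is that you additionally plan to reduce the Rademacher complexity of the robust surrogate class to that of $\T$ via a Lipschitz/covering argument, whereas the paper simply defines that complexity as a primitive and remarks that it scales as $O(\sqrt{1/n})$ for standard hypothesis classes.
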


In Thm.\ref{thm:gen}, $\bm {(a)}$ represents the robust AUC loss in terms of expectation, $\bm {(b)}$ denotes the training loss that we use to optimize our model parameters, and $\bm {(c)}$ is an error term that turns to zero when the sample size turns to infinity. In this sense, if we train our model sufficiently within a large enough training set, we can achieve a minimal generalization error.

\section{Experiments}
In this section, we demonstrate the effectiveness of our proposed framework on three benchmark datasets with varying imbalance ratios. 
\subsection{Experiment Settings}
We evaluate our framework using the following approach. First, we conduct a binary, long-tailed training set. Then, we proceed to train the model on the long-tailed training set with varying imbalance ratios, tune hyperparameters on the validation set, and evaluate the model exhibiting the highest validation AUC on the corrupted testing set. For instance, we train our model on binary long-tailed MNIST \cite{mnist}, CIFAR10, CIFAR100 \cite{cifar}, and Tiny-ImageNet \cite{tiny}, and evaluate our proposed method on the corrupted version of corresponding datasets \cite{mnist-c,cifar-c, tiny-c}. Furthermore, we compare our results with multiple competitors including the baseline (CE), typical methods for long-tailed problems \cite{lin2017focal,AUCMLoss,zhu2022auc} and DRO methods \cite{ADVShift, WDRO, DROLT, GLOT}. Please see Appendix \ref{app:exp} for more details.

\subsection{Results and Analysis}

\subsubsection{Overall Performance}\label{sec:overall}
The overall performances on CIFAR10 and Tiny-ImageNet are presented in Table \ref{tab:overall1} and Table \ref{tab:overall2}, respectively. We further compare model performances by altering the level of perturbation, with results displayed in Figure \ref{fig:overall1}. Due to the space limitation, we attach results on MNIST and CIFAR100 in Appendix \ref{app:exp}. Based on these findings, we make the following observations:
\begin{figure}
  \begin{subfigure}[t]{.2\textwidth}
    \centering
    \includegraphics[width=\linewidth]{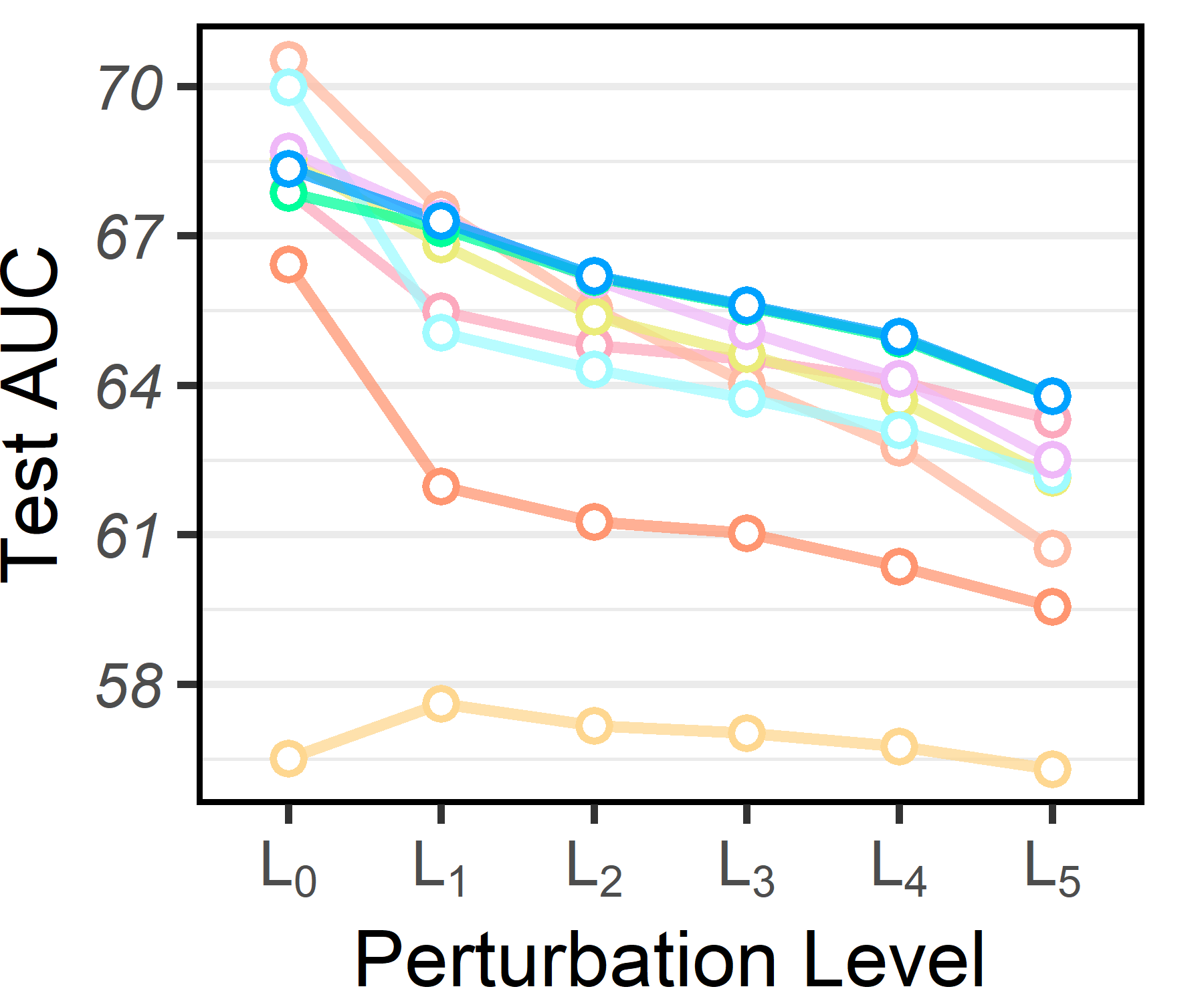}
    \caption{Imratio \textbf{0.01}}
  \end{subfigure}
  \hfill
  \begin{subfigure}[t]{.2\textwidth}
    \centering
    \includegraphics[width=\linewidth]{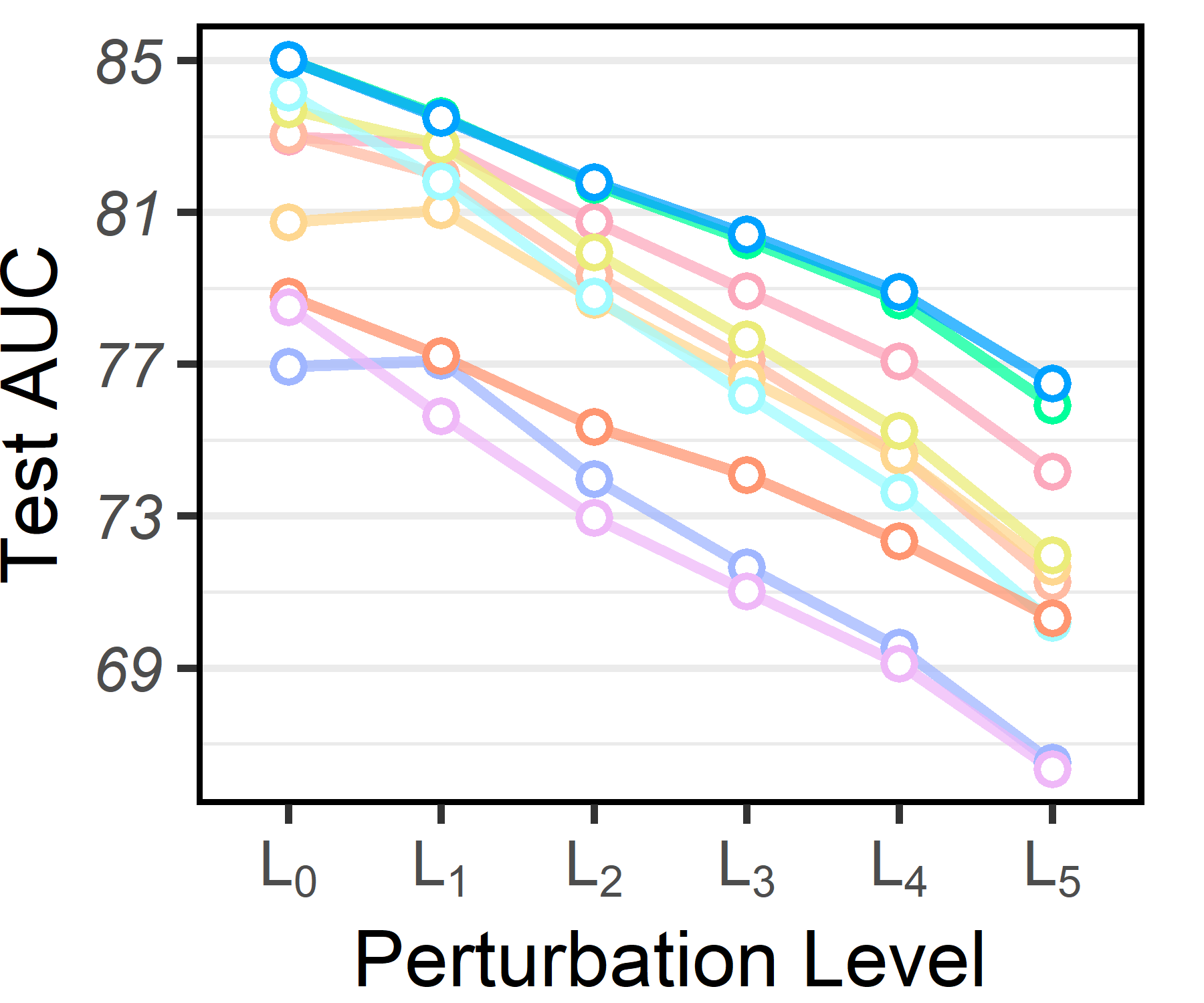}
    \caption{Imratio \textbf{0.05}}
  \end{subfigure}
  \hfill
  \begin{subfigure}[t]{.2\textwidth}
    \centering
    \includegraphics[width=\linewidth]{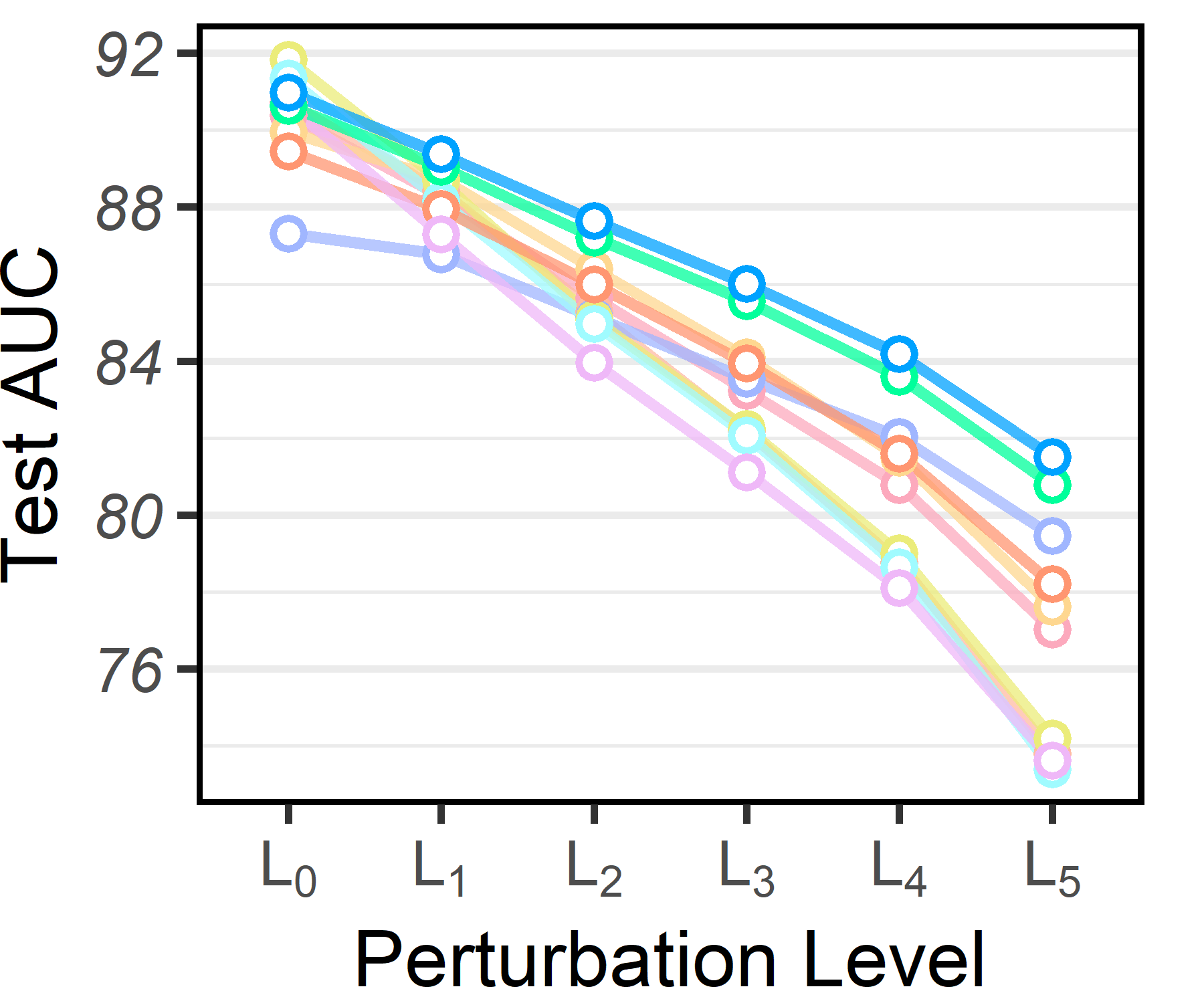}
    \caption{Imratio \textbf{0.1}}
  \end{subfigure}
  \hfill
  \begin{subfigure}[t]{.2\textwidth}
    \centering
    \includegraphics[width=\linewidth]{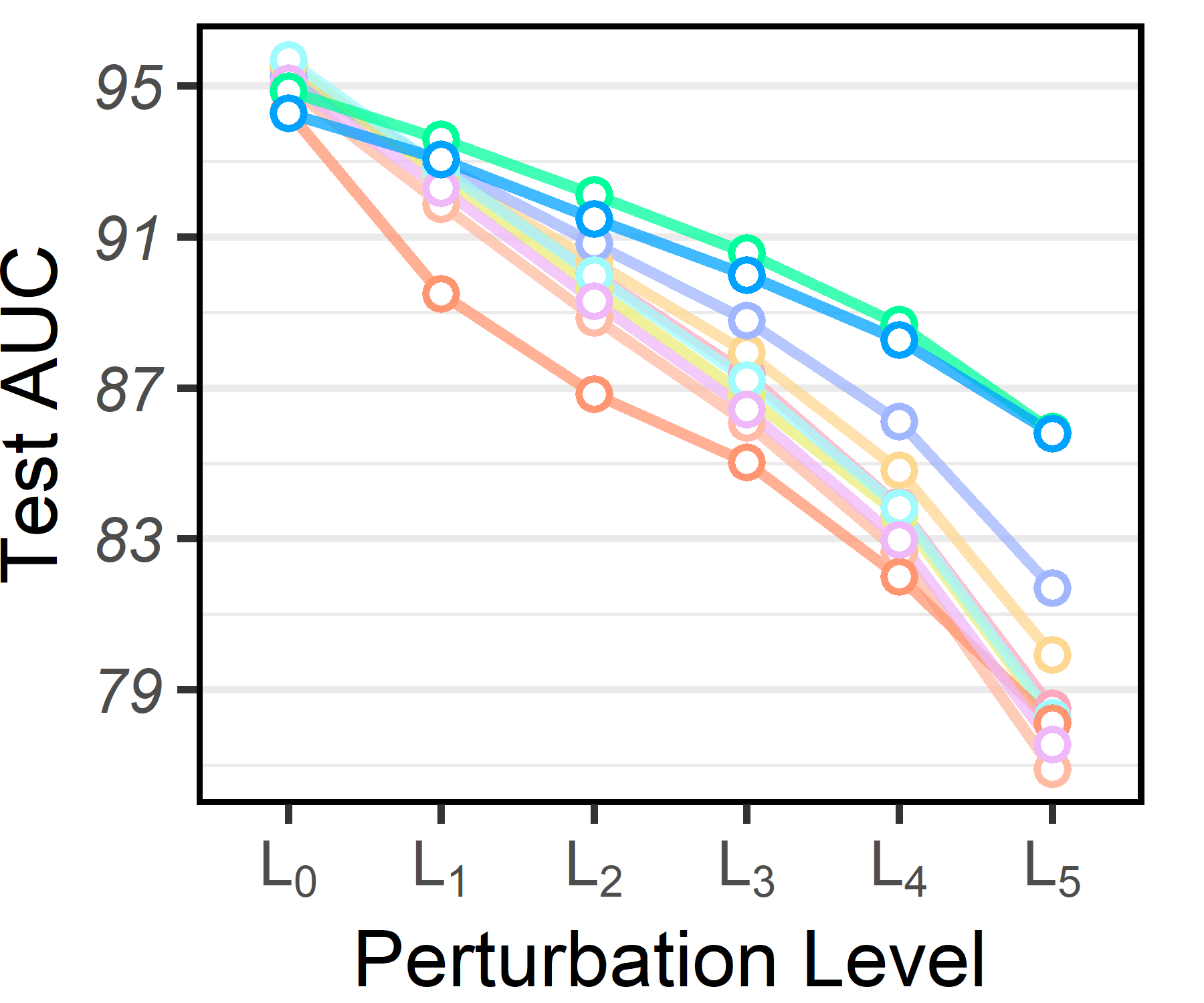}
    \caption{Imratio \textbf{0.2}}
  \end{subfigure}
  \begin{subfigure}[t]{.1\textwidth}
    \centering
    \includegraphics[width=\linewidth]{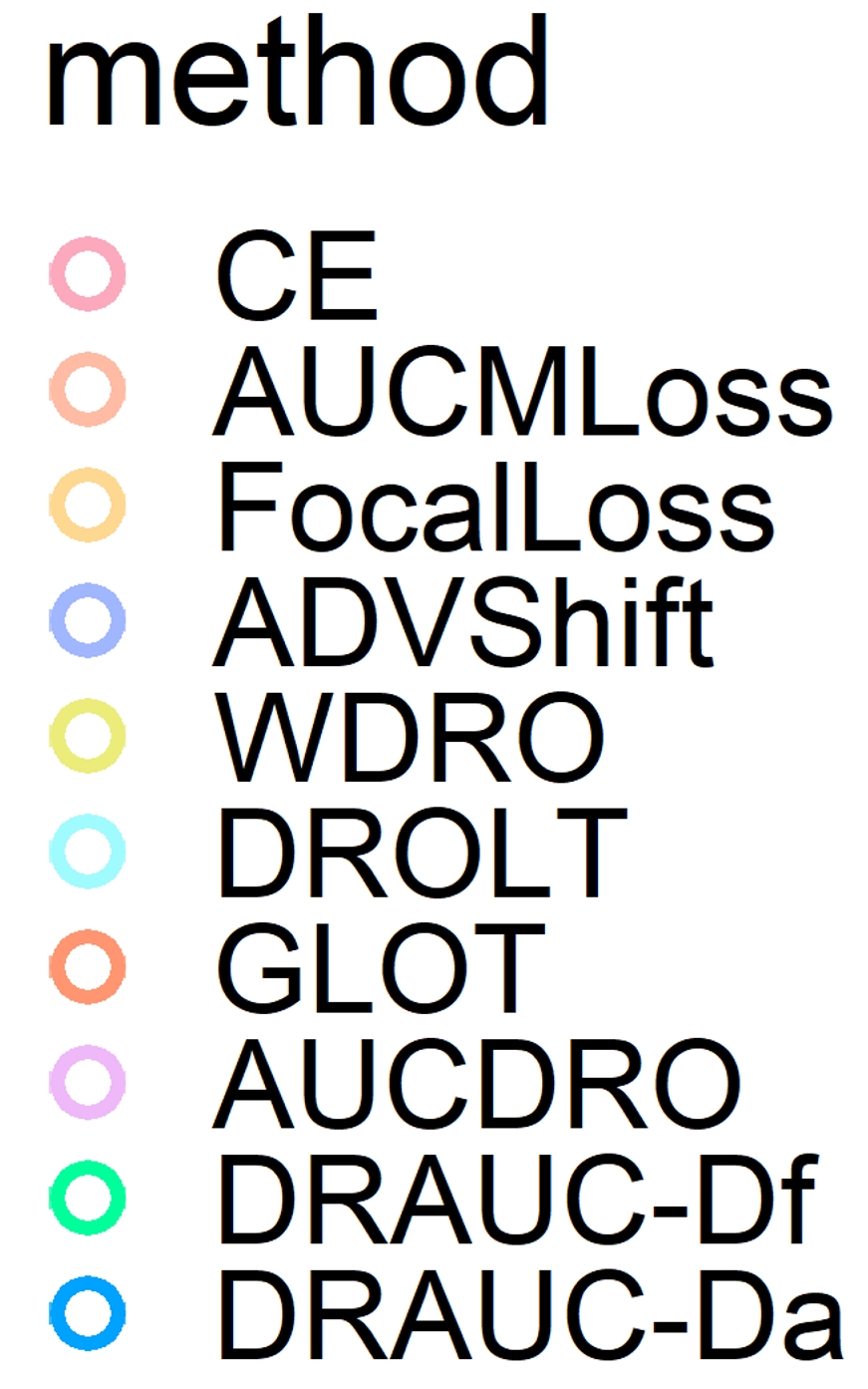}
  \end{subfigure}
  \caption{{Overall Performance of ResNet32 Across Perturbation Levels on CIFAR10.} This graph illustrates the performance of various methods at different corruption levels, with Level 0 indicating no corruption and Level 5 representing the most severe corruption. In each figure, the seven lines depict the test AUC for {\color{CE}CE}, {\color{AUCM}AUCMLoss},  {\color{Focal}FocalLoss}, {\color{ADV}ADVShift}, {\color{WDRO}WDRO}, {\color{DROLT}DROLT}, {\color{GLOT}GLOT}, {\color{AUCDRO}AUCDRO}, {\color{Da}DRAUC-Da} and {\color{Df}DRAUC-Df}, respectively. Best viewed in colors.}
  \label{fig:overall1}
\end{figure}

\textbf{Effectiveness.} Our proposed method outperforms all competing approaches across Corrupted MNIST, CIFAR10, CIFAR100 and Tiny-ImageNet datasets for all imbalance ratios, thereby substantiating its effectiveness. Additionally, our approach exhibits enhanced performance as the level of perturbation intensifies, indicating its robustness in challenging testing scenarios.

\textbf{Ablation results.} Given that our method is modified on AUCMLoss \cite{AUCMLoss}, the results presented in Figure \ref{fig:overall1} can be treated as ablation results. Under the same hyperparameters of AUCMLoss, our method exhibits significant improvement over the baseline, indicating enhanced model robustness.

\textbf{Advantage of Distribution-awareness.} As presented in Table \ref{tab:overall1}, DRAUC-Da attains higher scores than DRAUC-Df across almost all corrupted scenarios. This supports our hypothesis that a strong attack on tail-class examples can potentially compromise model robustness.

\textbf{Performances on non-corrupted data.} Within non-corrupted datasets, our approach continues to exhibit competitive performance under conditions of extreme data imbalance, specifically when the imbalance ratio equals to $0.01$. However, with less imbalanced training data, our method may suffer performance degradation, attributable to the potential trade-off between model robustness and clean performance, which is an unavoidable phenomenon in Adversarial Training \cite{zhang2019theoretically}.

\begin{table}[htbp]
  \centering
  \caption{{Overall Performance on CIFAR10-C and CIFAR10-LT with different imbalance ratios and different models.} The highest score on each column is shown with \textbf{bold}, and we use darker color to represent higher performance.}
    \begin{tabular}{c|l|cccc|cccc}
    \toprule
    \multicolumn{1}{c|}{\multirow{2}[4]{*}{Model}} & \multicolumn{1}{c|}{\multirow{2}[4]{*}{Methods}} & \multicolumn{4}{c|}{CIFAR10-C} & \multicolumn{4}{c}{CIFAR10-LT} \\
\cmidrule{3-10}          &       & 0.01  & 0.05  & 0.10  & 0.20  & 0.01  & 0.05  & 0.10  & 0.20  \\
    \midrule
    \multirow{10}[3]{*}{ResNet20} & CE    & 62.48 & 75.87 & \cellcolor[rgb]{ .886,  .937,  .855}83.13 & 86.20 & 65.43 & 84.12 & \cellcolor[rgb]{ .663,  .816,  .62}\textbf{92.32} & \cellcolor[rgb]{ .663,  .816,  .62}\textbf{95.68} \\
          & AUCMLoss & \cellcolor[rgb]{ .776,  .878,  .706}63.93 & 76.77 & 81.75 & 85.26 & \cellcolor[rgb]{ .663,  .816,  .62}\textbf{68.88} & 84.74 & \cellcolor[rgb]{ .776,  .878,  .706}90.97 & \cellcolor[rgb]{ .886,  .937,  .855}94.40 \\
          & FocalLoss & 56.56 & 74.44 & 81.81 & 84.97 & 57.63 & 81.62 & \cellcolor[rgb]{ .776,  .878,  .706}91.33 & \cellcolor[rgb]{ .776,  .878,  .706}94.62 \\
          & ADVShift & 61.36 & 75.97 & \cellcolor[rgb]{ .776,  .878,  .706}83.78 & 87.35 & 64.97 & 82.91 & 87.87 & \cellcolor[rgb]{ .776,  .878,  .706}95.46 \\
          & WDRO  & \cellcolor[rgb]{ .886,  .937,  .855}63.19 & \cellcolor[rgb]{ .776,  .878,  .706}78.90 & 80.59 & 86.02 & \cellcolor[rgb]{ .776,  .878,  .706}68.80 & \cellcolor[rgb]{ .663,  .816,  .62}\textbf{88.54} & \cellcolor[rgb]{ .776,  .878,  .706}91.04 & \cellcolor[rgb]{ .886,  .937,  .855}94.04 \\
          & DROLT & 59.92 & \cellcolor[rgb]{ .886,  .937,  .855}77.51 & 81.09 & 86.46 & 60.99 & \cellcolor[rgb]{ .886,  .937,  .855}85.76 & \cellcolor[rgb]{ .776,  .878,  .706}91.35 & \cellcolor[rgb]{ .776,  .878,  .706}95.17 \\
          & GLOT  & \cellcolor[rgb]{ .776,  .878,  .706}63.98 & 77.19 & \cellcolor[rgb]{ .886,  .937,  .855}83.33 & \cellcolor[rgb]{ .886,  .937,  .855}87.57 & 65.95 & \cellcolor[rgb]{ .776,  .878,  .706}88.37 & \cellcolor[rgb]{ .776,  .878,  .706}90.51 & \cellcolor[rgb]{ .776,  .878,  .706}94.62 \\
          & AUCDRO & \cellcolor[rgb]{ .886,  .937,  .855}63.35 & 76.19 & 81.82 & 85.96 & \cellcolor[rgb]{ .886,  .937,  .855}67.14 & 84.00 & \cellcolor[rgb]{ .776,  .878,  .706}90.92 & \cellcolor[rgb]{ .776,  .878,  .706}94.88 \\
\cmidrule{2-10}          & DRAUC-Df & \cellcolor[rgb]{ .776,  .878,  .706}65.58 & \cellcolor[rgb]{ .663,  .816,  .62}\textbf{80.18} & \cellcolor[rgb]{ .776,  .878,  .706}85.71 & \cellcolor[rgb]{ .776,  .878,  .706}88.83 & \cellcolor[rgb]{ .776,  .878,  .706}68.12 & \cellcolor[rgb]{ .886,  .937,  .855}86.47 & \cellcolor[rgb]{ .776,  .878,  .706}90.57 & \cellcolor[rgb]{ .886,  .937,  .855}94.17 \\
          & DRAUC-Da & \cellcolor[rgb]{ .663,  .816,  .62}\textbf{66.06} & \cellcolor[rgb]{ .776,  .878,  .706}80.13 & \cellcolor[rgb]{ .663,  .816,  .62}\textbf{85.91} & \cellcolor[rgb]{ .663,  .816,  .62}\textbf{89.51} & \cellcolor[rgb]{ .776,  .878,  .706}68.71 & 84.43 & \cellcolor[rgb]{ .886,  .937,  .855}90.30 & \cellcolor[rgb]{ .886,  .937,  .855}93.76 \\
    \midrule
    \multirow{10}[2]{*}{ResNet32} & CE    & \cellcolor[rgb]{ .98,  .937,  1}64.43 & \cellcolor[rgb]{ .98,  .937,  1}78.79 & \cellcolor[rgb]{ .98,  .937,  1}83.12 & 86.89 & 66.05 & \cellcolor[rgb]{ .98,  .937,  1}84.40 & \cellcolor[rgb]{ .98,  .937,  1}90.44 & \cellcolor[rgb]{ .863,  .725,  1}\textbf{95.61} \\
          & AUCMLoss & \cellcolor[rgb]{ .98,  .937,  1}64.00 & 76.98 & 81.87 & 85.66 & \cellcolor[rgb]{ .863,  .725,  1}\textbf{68.90} & \cellcolor[rgb]{ .894,  .788,  1}84.94 & \cellcolor[rgb]{ .863,  .725,  1}\textbf{91.52} & \cellcolor[rgb]{ .894,  .788,  1}95.16 \\
          & FocalLoss & 56.96 & 76.53 & \cellcolor[rgb]{ .894,  .788,  1}83.82 & 87.42 & 58.04 & 82.99 & \cellcolor[rgb]{ .894,  .788,  1}91.02 & \cellcolor[rgb]{ .894,  .788,  1}95.16 \\
          & ADVShift & 55.74 & 72.42 & \cellcolor[rgb]{ .98,  .937,  1}83.47 & \cellcolor[rgb]{ .98,  .937,  1}88.32 & 56.73 & 79.36 & 87.88 & \cellcolor[rgb]{ .894,  .788,  1}94.95 \\
          & WDRO  & \cellcolor[rgb]{ .894,  .788,  1}64.51 & 78.45 & \cellcolor[rgb]{ .894,  .788,  1}83.87 & \cellcolor[rgb]{ .98,  .937,  1}88.03 & \cellcolor[rgb]{ .894,  .788,  1}68.16 & \cellcolor[rgb]{ .863,  .725,  1}\textbf{86.48} & \cellcolor[rgb]{ .98,  .937,  1}90.11 & \cellcolor[rgb]{ .894,  .788,  1}95.23 \\
          & DROLT & \cellcolor[rgb]{ .98,  .937,  1}63.66 & 76.71 & \cellcolor[rgb]{ .894,  .788,  1}83.93 & \cellcolor[rgb]{ .98,  .937,  1}88.42 & 65.40 & \cellcolor[rgb]{ .894,  .788,  1}84.68 & \cellcolor[rgb]{ .98,  .937,  1}90.11 & \cellcolor[rgb]{ .894,  .788,  1}95.51 \\
          & GLOT  & 62.59 & 77.21 & \cellcolor[rgb]{ .894,  .788,  1}83.67 & 87.30 & 64.53 & 82.62 & \cellcolor[rgb]{ .98,  .937,  1}89.59 & \cellcolor[rgb]{ .894,  .788,  1}94.62 \\
          & AUCDRO & \cellcolor[rgb]{ .894,  .788,  1}65.10 & 71.23 & 81.45 & 86.23 & \cellcolor[rgb]{ .894,  .788,  1}68.69 & 78.51 & \cellcolor[rgb]{ .894,  .788,  1}90.67 & \cellcolor[rgb]{ .894,  .788,  1}95.07 \\
\cmidrule{2-10}          & DRAUC-Df & \cellcolor[rgb]{ .894,  .788,  1}65.44 & \cellcolor[rgb]{ .894,  .788,  1}80.27 & \cellcolor[rgb]{ .894,  .788,  1}85.70 & \cellcolor[rgb]{ .863,  .725,  1}\textbf{90.62} & \cellcolor[rgb]{ .98,  .937,  1}67.11 & \cellcolor[rgb]{ .894,  .788,  1}85.03 & \cellcolor[rgb]{ .894,  .788,  1}90.63 & \cellcolor[rgb]{ .894,  .788,  1}94.86 \\
          & DRAUC-Da & \cellcolor[rgb]{ .863,  .725,  1}\textbf{65.50} & \cellcolor[rgb]{ .863,  .725,  1}\textbf{80.57} & \cellcolor[rgb]{ .863,  .725,  1}\textbf{86.25} & \cellcolor[rgb]{ .894,  .788,  1}90.15 & \cellcolor[rgb]{ .894,  .788,  1}68.51 & \cellcolor[rgb]{ .894,  .788,  1}85.03 & \cellcolor[rgb]{ .894,  .788,  1}90.98 & \cellcolor[rgb]{ .98,  .937,  1}94.27 \\
    \bottomrule
    \end{tabular}%
  \label{tab:overall1}%
\end{table}%

\begin{table}[htbp]
  \centering
  \caption{{Overall Performance on Tiny-ImageNet-C and Tiny-ImageNet-LT with different imbalance ratios and different models.} The highest score on each column is shown with \textbf{bold}, and we use darker color to represent higher performance.}
    \begin{tabular}{c|l|P{1.15cm}|P{1.15cm}|P{1.15cm}|P{1.15cm}|P{1.15cm}|P{1.15cm}}
    \toprule
    \multicolumn{1}{c|}{\multirow{2}[4]{*}{Model}} & \multicolumn{1}{c|}{\multirow{2}[4]{*}{Methods}} & \multicolumn{3}{c|}{Tiny-ImageNet-C} & \multicolumn{3}{c}{Tiny-ImageNet-LT} \\
\cmidrule{3-8}          &       & Dogs  & Birds & Vehicles & Dogs  & Birds & Vehicles \\
    \midrule
    \multirow{10}[3]{*}{ResNet20} & CE    & 78.46 & 85.19 & \cellcolor[rgb]{ .776,  .878,  .706}87.53 & \cellcolor[rgb]{ .776,  .878,  .706}93.72 & \cellcolor[rgb]{ .776,  .878,  .706}94.49 & \cellcolor[rgb]{ .776,  .878,  .706}97.72 \\
          & AUCMLoss & 77.35 & \cellcolor[rgb]{ .886,  .937,  .855}85.98 & 82.37 & \cellcolor[rgb]{ .886,  .937,  .855}93.35 & \cellcolor[rgb]{ .776,  .878,  .706}94.11 & \cellcolor[rgb]{ .776,  .878,  .706}97.34 \\
          & FocalLoss & 78.34 & 81.48 & \cellcolor[rgb]{ .886,  .937,  .855}86.55 & \cellcolor[rgb]{ .886,  .937,  .855}93.25 & \cellcolor[rgb]{ .886,  .937,  .855}92.87 & \cellcolor[rgb]{ .776,  .878,  .706}97.66 \\
          & ADVShift & \cellcolor[rgb]{ .886,  .937,  .855}81.20 & 80.94 & \cellcolor[rgb]{ .886,  .937,  .855}86.65 & \cellcolor[rgb]{ .776,  .878,  .706}93.70 & \cellcolor[rgb]{ .776,  .878,  .706}93.53 & \cellcolor[rgb]{ .776,  .878,  .706}97.66 \\
          & WDRO  & \cellcolor[rgb]{ .776,  .878,  .706}82.20 & 85.23 & 85.92 & \cellcolor[rgb]{ .776,  .878,  .706}94.46 & \cellcolor[rgb]{ .776,  .878,  .706}95.50 & \cellcolor[rgb]{ .663,  .816,  .62}\textbf{98.19} \\
          & DROLT & 80.44 & \cellcolor[rgb]{ .776,  .878,  .706}86.91 & \cellcolor[rgb]{ .886,  .937,  .855}86.76 & \cellcolor[rgb]{ .776,  .878,  .706}93.89 & \cellcolor[rgb]{ .663,  .816,  .62}\textbf{96.40} & \cellcolor[rgb]{ .776,  .878,  .706}97.86 \\
          & GLOT  & \cellcolor[rgb]{ .776,  .878,  .706}81.96 & \cellcolor[rgb]{ .886,  .937,  .855}85.89 & \cellcolor[rgb]{ .886,  .937,  .855}86.80 & \cellcolor[rgb]{ .663,  .816,  .62}\textbf{94.67} & \cellcolor[rgb]{ .776,  .878,  .706}96.14 & \cellcolor[rgb]{ .776,  .878,  .706}98.05 \\
          & AUCDRO & 75.97 & 83.26 & 79.46 & \cellcolor[rgb]{ .886,  .937,  .855}92.58 & \cellcolor[rgb]{ .886,  .937,  .855}93.04 & \cellcolor[rgb]{ .886,  .937,  .855}96.29 \\
\cmidrule{2-8}          & DRAUC-Df & \cellcolor[rgb]{ .663,  .816,  .62}\textbf{84.11} & \cellcolor[rgb]{ .776,  .878,  .706}87.30 & \cellcolor[rgb]{ .776,  .878,  .706}88.67 & \cellcolor[rgb]{ .886,  .937,  .855}93.39 & \cellcolor[rgb]{ .776,  .878,  .706}95.58 & \cellcolor[rgb]{ .776,  .878,  .706}97.50 \\
          & DRAUC-Da & \cellcolor[rgb]{ .776,  .878,  .706}83.96 & \cellcolor[rgb]{ .663,  .816,  .62}\textbf{87.61} & \cellcolor[rgb]{ .663,  .816,  .62}\textbf{89.06} & \cellcolor[rgb]{ .776,  .878,  .706}93.76 & \cellcolor[rgb]{ .776,  .878,  .706}95.94 & \cellcolor[rgb]{ .776,  .878,  .706}97.25 \\
    \midrule
    \multirow{10}[2]{*}{ResNet32} & CE    & 82.55 & \cellcolor[rgb]{ .98,  .937,  1}84.64 & \cellcolor[rgb]{ .98,  .937,  1}86.26 & \cellcolor[rgb]{ .894,  .788,  1}94.31 & \cellcolor[rgb]{ .98,  .937,  1}94.49 & \cellcolor[rgb]{ .894,  .788,  1}97.76 \\
          & AUCMLoss & 77.25 & \cellcolor[rgb]{ .98,  .937,  1}85.20 & 81.12 & \cellcolor[rgb]{ .894,  .788,  1}93.19 & \cellcolor[rgb]{ .894,  .788,  1}95.19 & \cellcolor[rgb]{ .894,  .788,  1}97.57 \\
          & FocalLoss & 77.96 & 79.80 & 85.33 & \cellcolor[rgb]{ .894,  .788,  1}93.41 & 92.85 & \cellcolor[rgb]{ .894,  .788,  1}97.78 \\
          & ADVShift & \cellcolor[rgb]{ .98,  .937,  1}84.30 & \cellcolor[rgb]{ .98,  .937,  1}84.56 & \cellcolor[rgb]{ .98,  .937,  1}86.43 & \cellcolor[rgb]{ .894,  .788,  1}92.92 & \cellcolor[rgb]{ .894,  .788,  1}94.71 & \cellcolor[rgb]{ .894,  .788,  1}97.59 \\
          & WDRO  & 80.08 & \cellcolor[rgb]{ .894,  .788,  1}85.58 & \cellcolor[rgb]{ .894,  .788,  1}86.94 & \cellcolor[rgb]{ .894,  .788,  1}94.39 & \cellcolor[rgb]{ .894,  .788,  1}95.51 & \cellcolor[rgb]{ .894,  .788,  1}97.67 \\
          & DROLT & 79.25 & \cellcolor[rgb]{ .894,  .788,  1}85.75 & \cellcolor[rgb]{ .894,  .788,  1}86.79 & \cellcolor[rgb]{ .98,  .937,  1}91.68 & \cellcolor[rgb]{ .863,  .725,  1}\textbf{96.06} & \cellcolor[rgb]{ .894,  .788,  1}97.82 \\
          & GLOT  & 81.70 & 83.09 & \cellcolor[rgb]{ .894,  .788,  1}88.24 & \cellcolor[rgb]{ .894,  .788,  1}94.08 & \cellcolor[rgb]{ .894,  .788,  1}95.16 & \cellcolor[rgb]{ .863,  .725,  1}\textbf{97.92} \\
          & AUCDRO & 78.21 & 80.55 & 85.26 & \cellcolor[rgb]{ .98,  .937,  1}91.56 & 93.15 & \cellcolor[rgb]{ .98,  .937,  1}96.33 \\
\cmidrule{2-8}          & DRAUC-Df & \cellcolor[rgb]{ .863,  .725,  1}\textbf{85.79} & \cellcolor[rgb]{ .863,  .725,  1}\textbf{88.00} & \cellcolor[rgb]{ .894,  .788,  1}88.32 & \cellcolor[rgb]{ .863,  .725,  1}\textbf{94.43} & \cellcolor[rgb]{ .894,  .788,  1}95.29 & \cellcolor[rgb]{ .894,  .788,  1}97.37 \\
          & DRAUC-Da & \cellcolor[rgb]{ .894,  .788,  1}84.56 & \cellcolor[rgb]{ .894,  .788,  1}87.60 & \cellcolor[rgb]{ .863,  .725,  1}\textbf{88.46} & \cellcolor[rgb]{ .894,  .788,  1}94.03 & \cellcolor[rgb]{ .894,  .788,  1}95.96 & \cellcolor[rgb]{ .894,  .788,  1}97.65 \\
    \bottomrule
    \end{tabular}%
  \label{tab:overall2}%
\end{table}%

\subsubsection{Sensitivity Analysis}\label{sec:sen}
 \textbf{The Effect of $\epsilon$.} In Figure \ref{fig:sensitivity1}-(a)-(d), we present the sensitivity of $\epsilon$. The results demonstrate that when the training set is relatively balanced (i.e., the imbalance ratio $p \ge 0.1$), the average robust performance improves as $\epsilon$ increases. Nonetheless, when the training set is highly imbalanced, the trend is less discernible due to the instability of the training process in these long-tailed settings.

\textbf{The Effect of $\eta_{\l}$.} In Figure \ref{fig:sensitivity1}-(e)-(h), we present the sensitivity of $\eta_\lambda$. $\eta_\lambda$ governs the rate of change of $\lambda$ and serves as a similar function to the warm-up epochs in AT. When $\eta_\lambda$ is small, $\lambda$ remains large for an extended period, so the adversarial example is regularized to be less offensive. In cases where the training set is extremely imbalanced, a large $\eta_\lambda$ introduces strong examples to the model while it struggles to learn, increasing the instability of the training process and explaining why the smallest $\eta_\lambda$ performs best with an imbalance ratio of $0.01$. Conversely, when the model does not face difficulty fitting the training data, an appropriately chosen $\eta_\lambda$ around 0.1 enhances the model's robustness.

\begin{figure}
  \begin{subfigure}[t]{.24\textwidth}
    \centering
    \includegraphics[width=\linewidth]{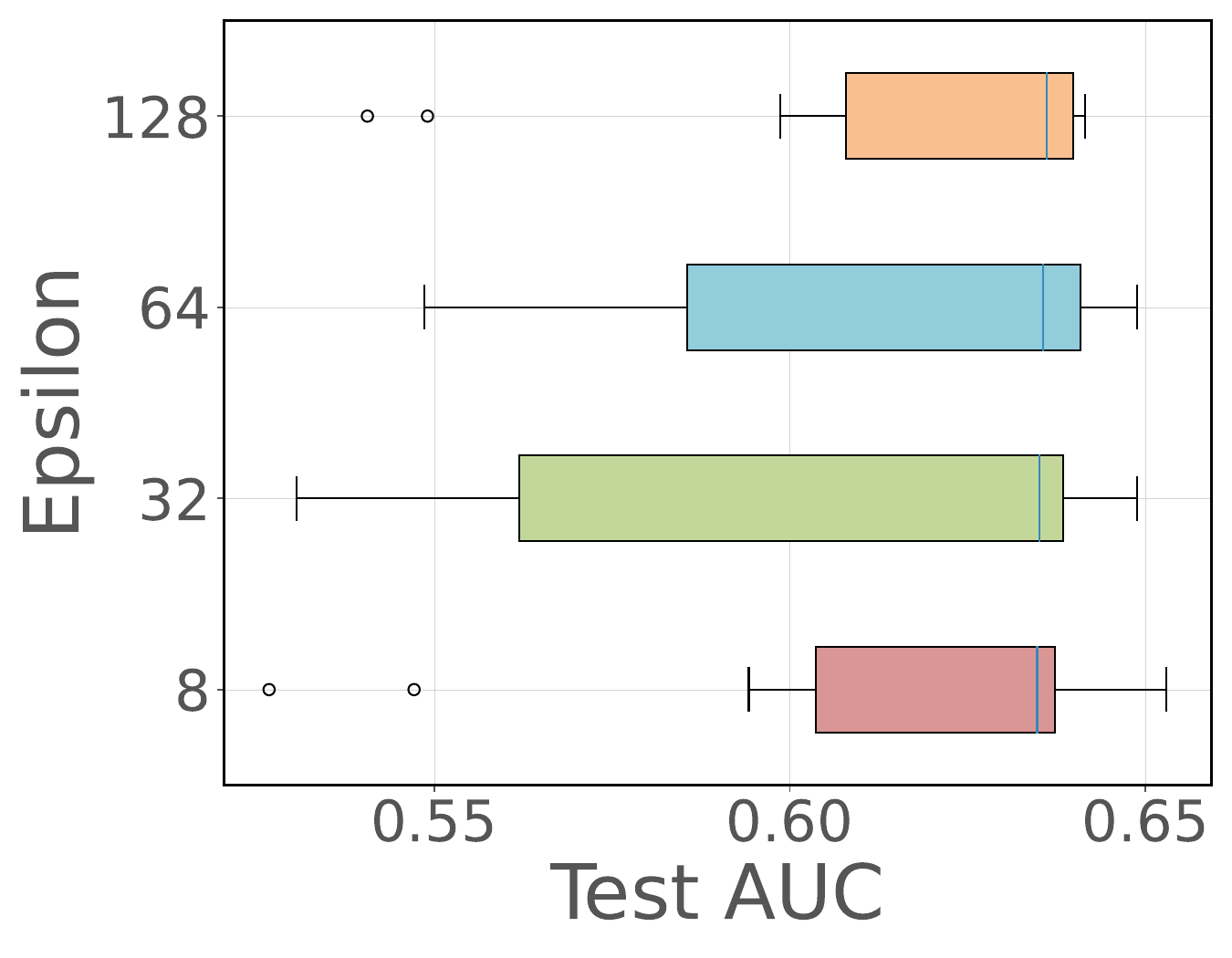}
    \caption{Effect of \bm{$\e$} on \underline{\textbf{0.01}}}
  \end{subfigure}
  \hfill
  \begin{subfigure}[t]{.24\textwidth}
    \centering
    \includegraphics[width=\linewidth]{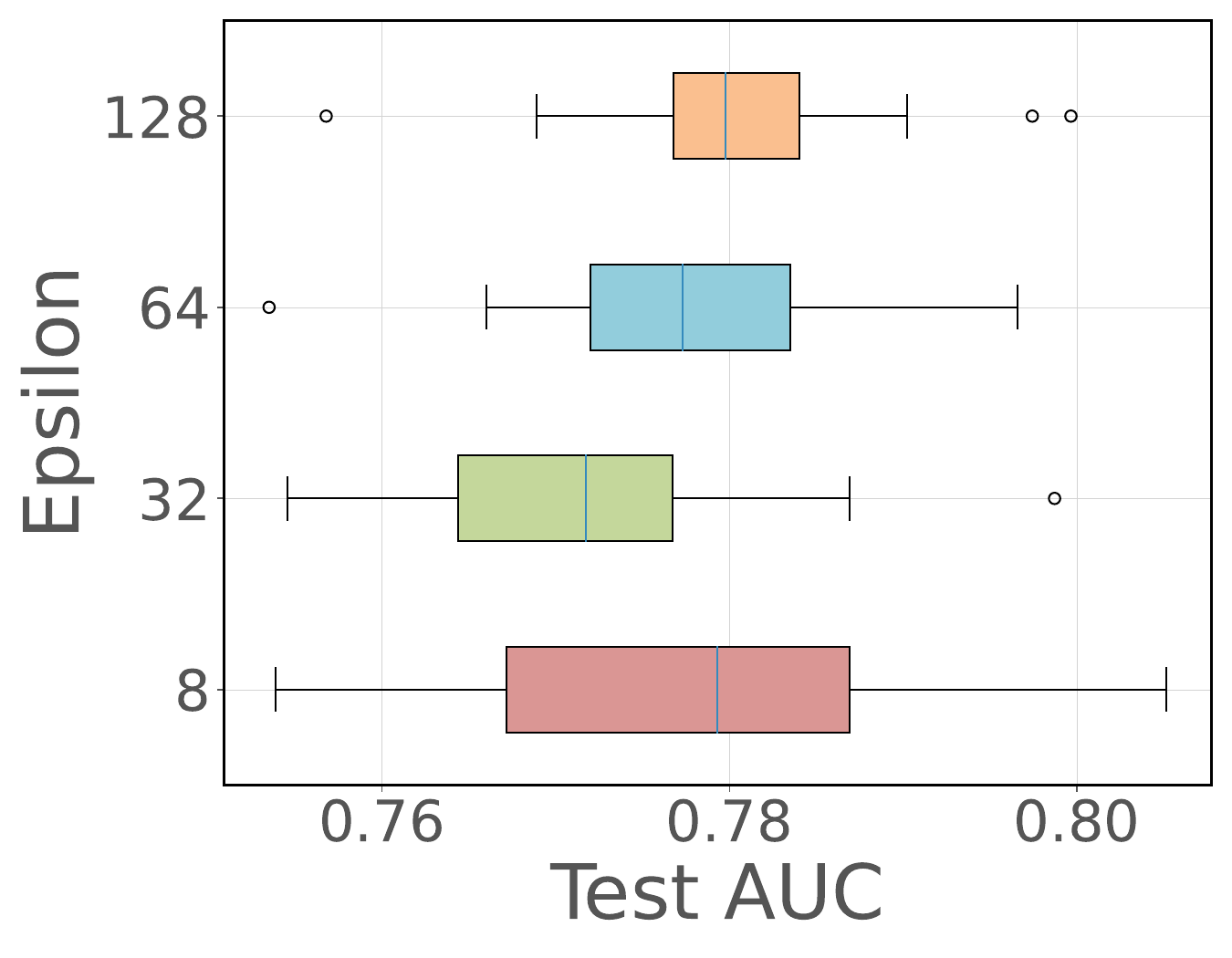}
    \caption{Effect of \bm{$\e$} on \underline{\textbf{0.05}}}
  \end{subfigure}
  \hfill
  \begin{subfigure}[t]{.24\textwidth}
    \centering
    \includegraphics[width=\linewidth]{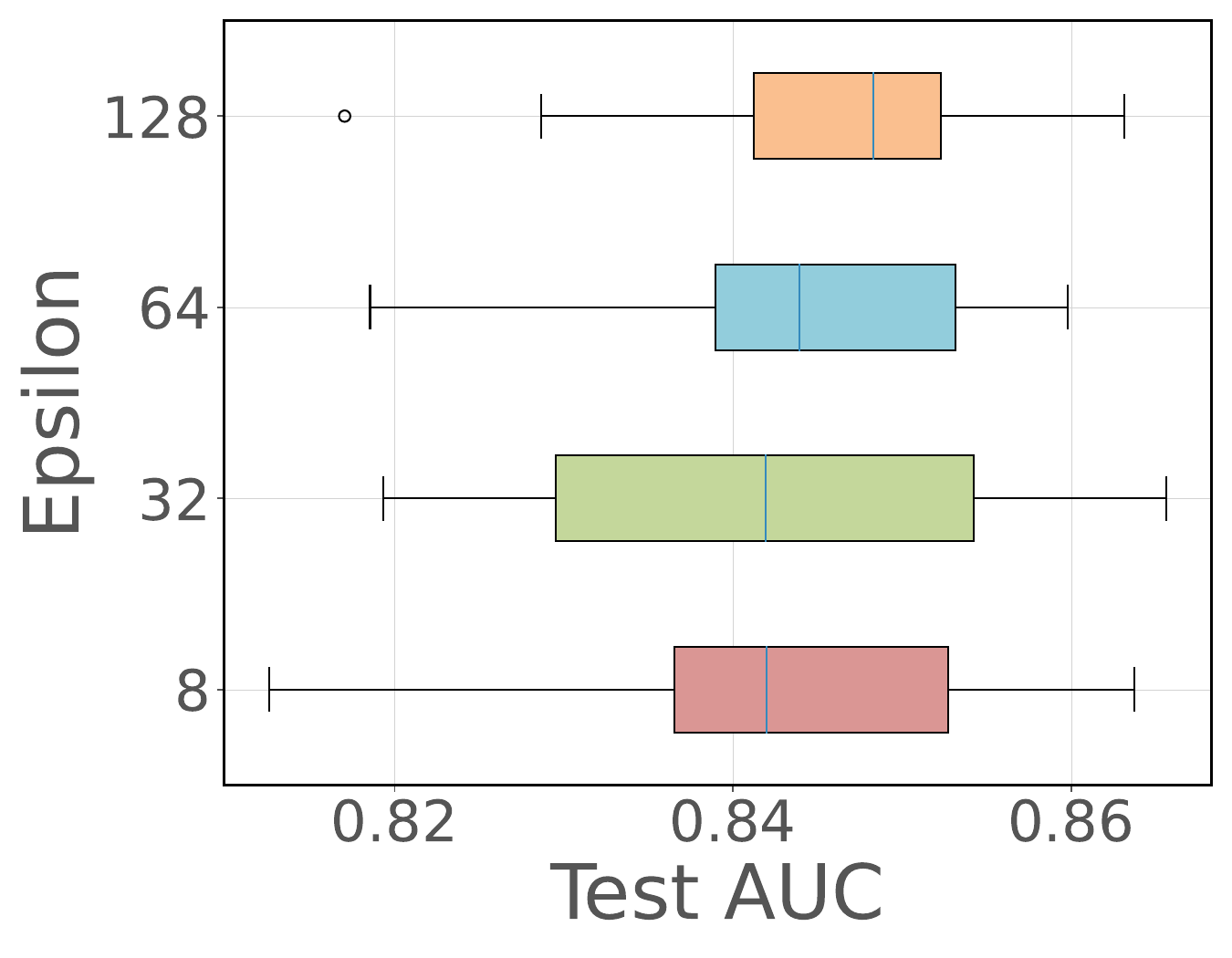}
    \caption{Effect of \bm{$\e$} on \underline{\textbf{0.1}}}
  \end{subfigure}
  \hfill
  \begin{subfigure}[t]{.24\textwidth}
    \centering
    \includegraphics[width=\linewidth]{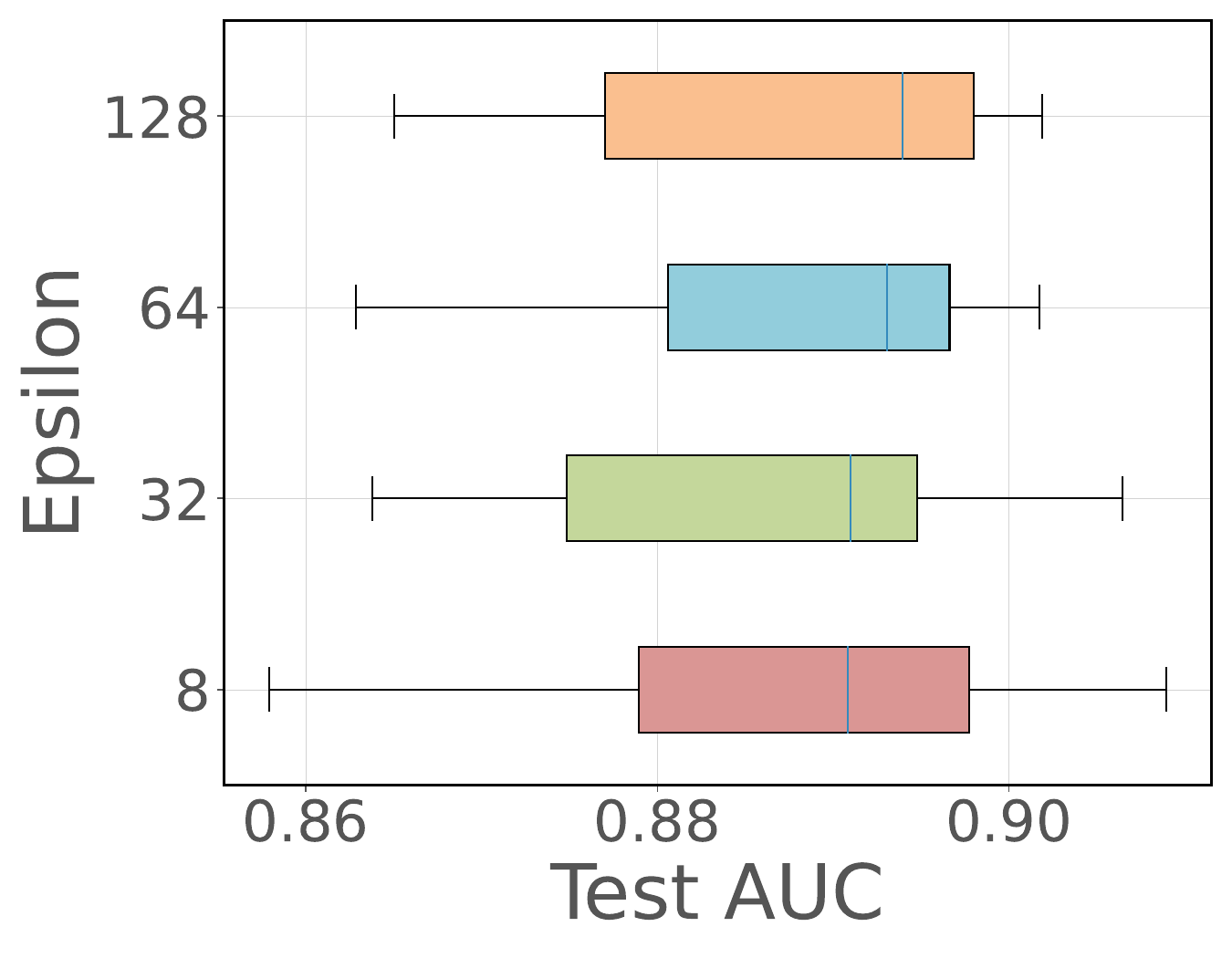}
    \caption{Effect of \bm{$\e$} on \underline{\textbf{0.01}}}
  \end{subfigure}
  \medskip
  \begin{subfigure}[t]{.24\textwidth}
    \centering
    \includegraphics[width=\linewidth]{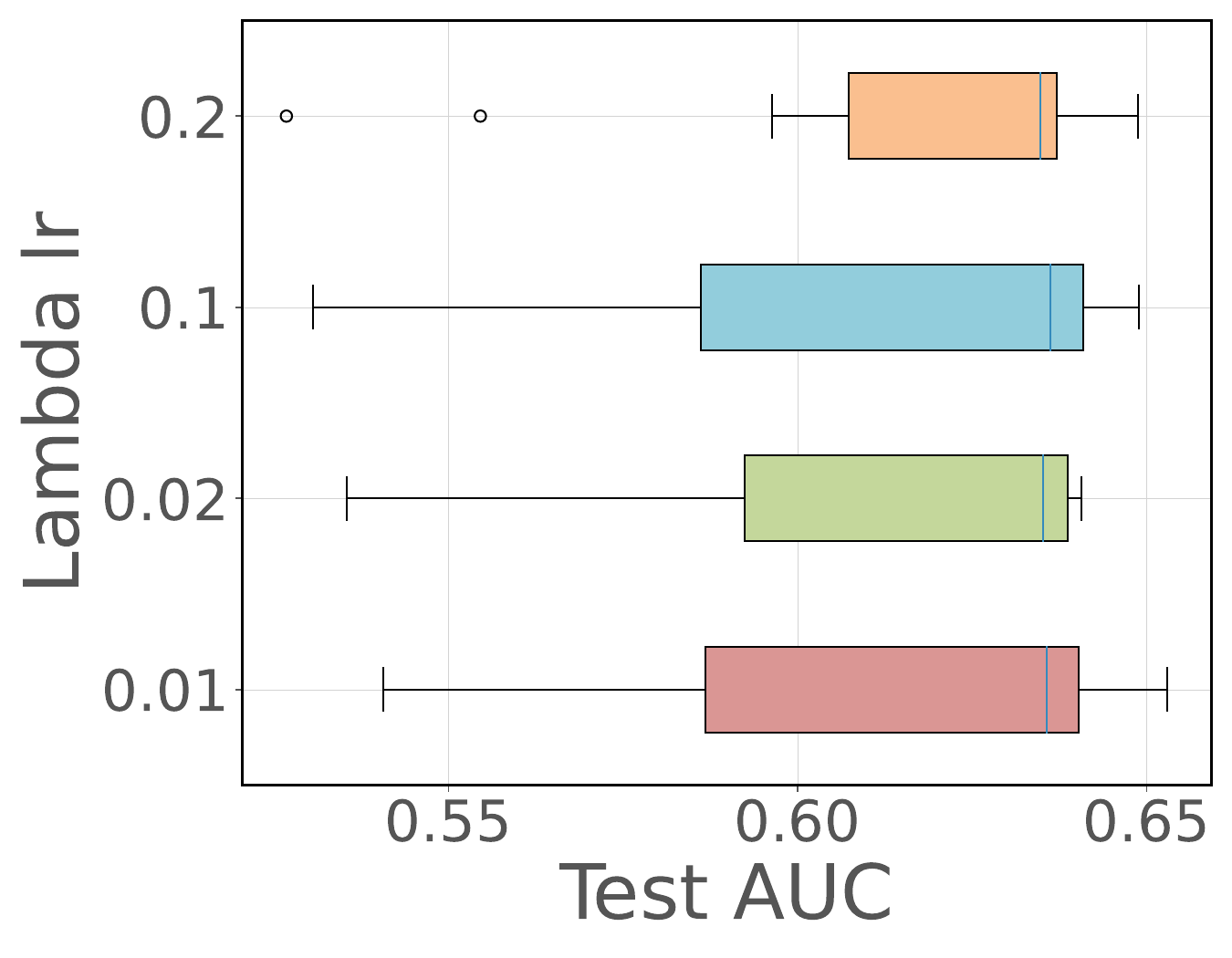}
    \caption{Effect of \bm{$\eta_\l$} on \underline{\textbf{0.01}}}
  \end{subfigure}
  \hfill
  \begin{subfigure}[t]{.24\textwidth}
    \centering
    \includegraphics[width=\linewidth]{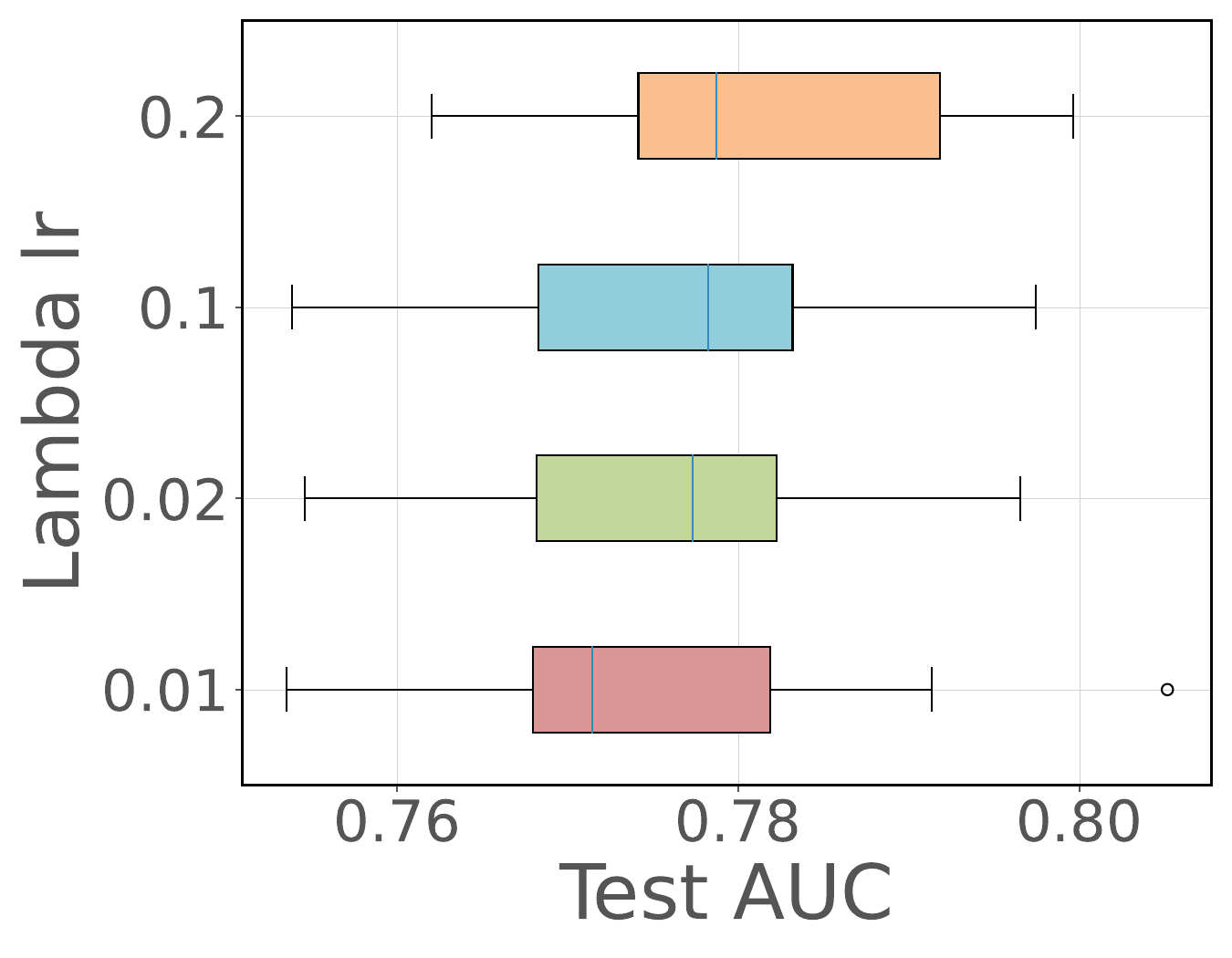}
    \caption{Effect of \bm{$\eta_\l$} on \underline{\textbf{0.05}}}
  \end{subfigure}
  \hfill
  \begin{subfigure}[t]{.24\textwidth}
    \centering
    \includegraphics[width=\linewidth]{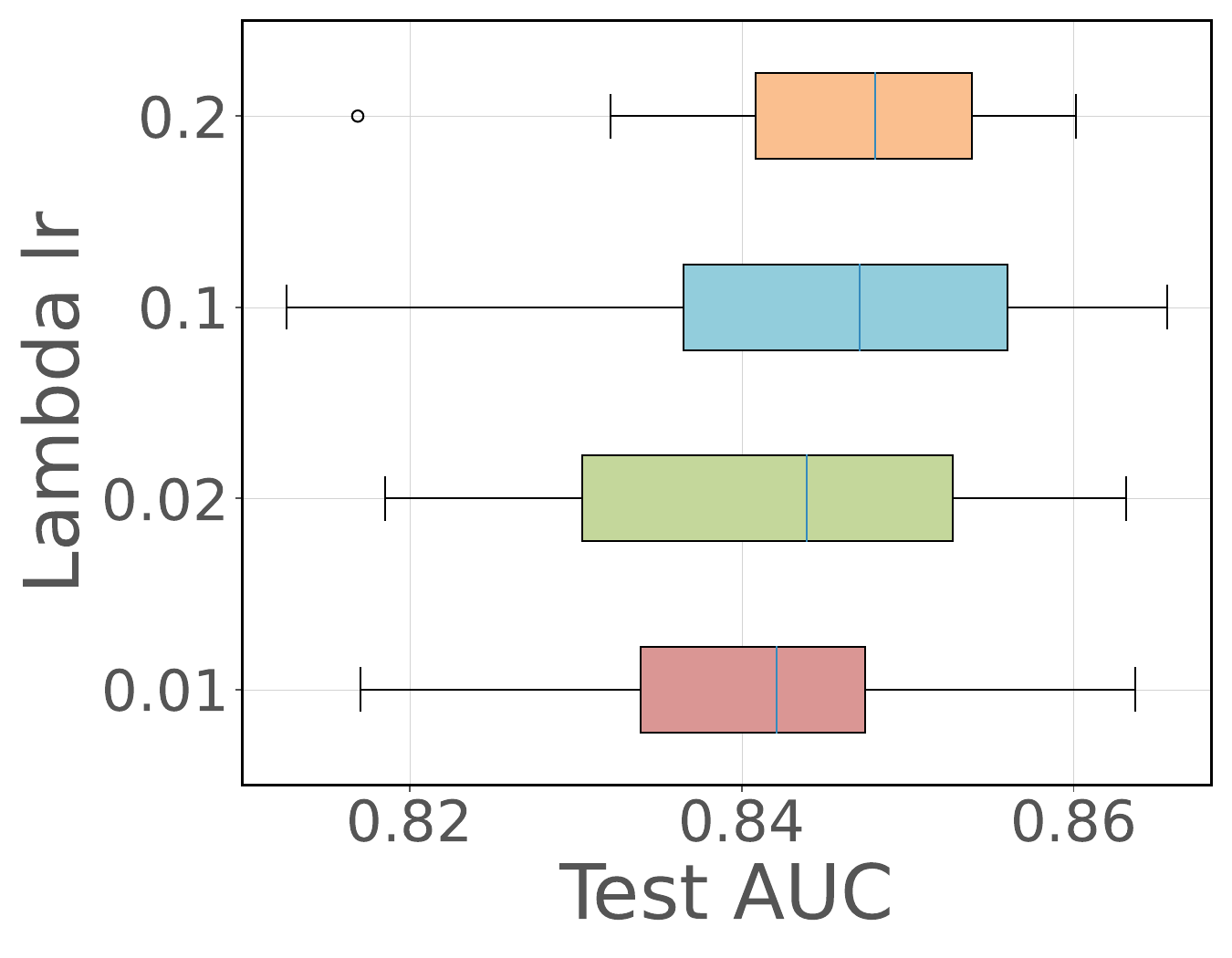}
    \caption{Effect of \bm{$\eta_\l$} on \underline{\textbf{0.1}}}
  \end{subfigure}
  \hfill
  \begin{subfigure}[t]{.24\textwidth}
    \centering
    \includegraphics[width=\linewidth]{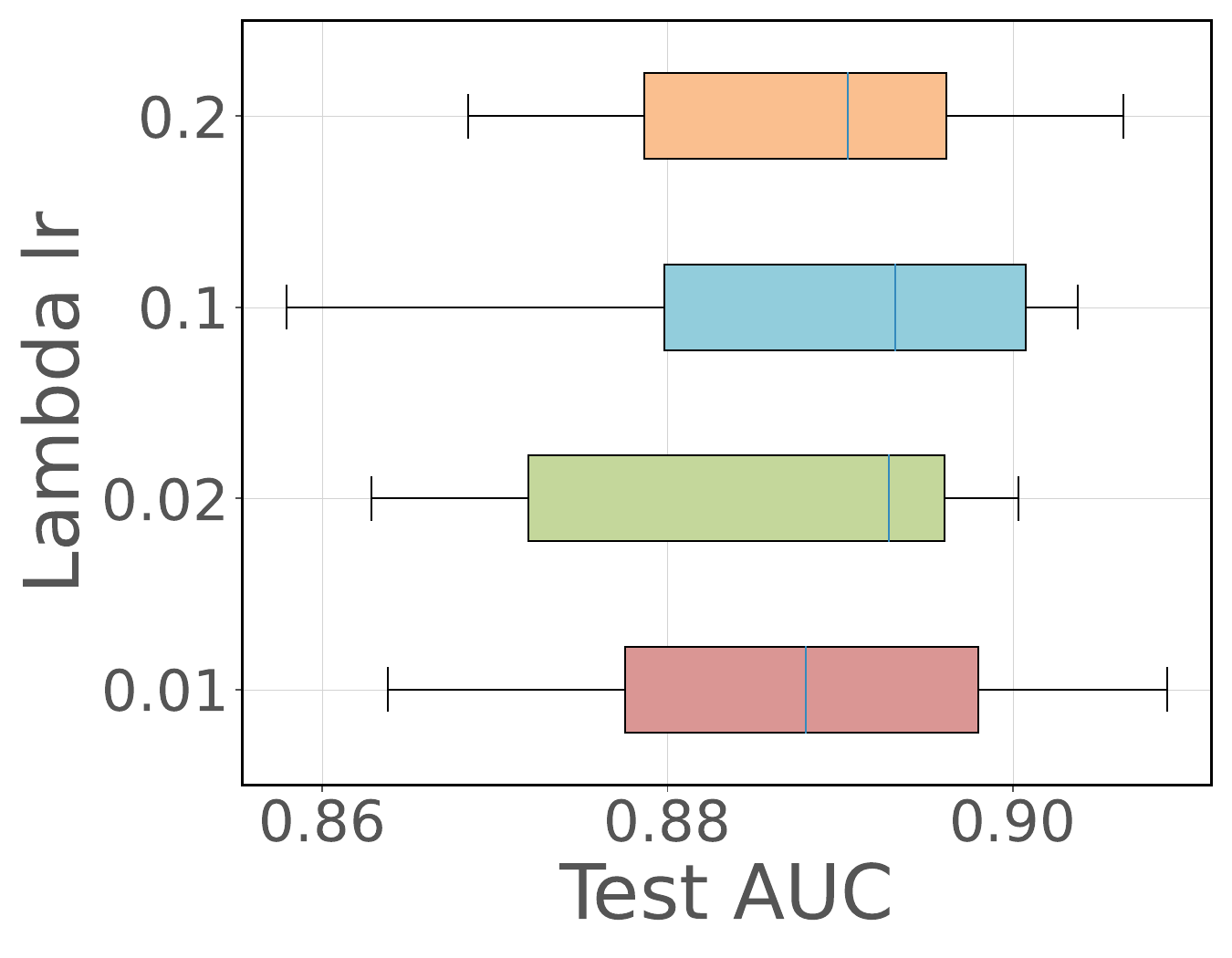}
    \caption{Effect of \bm{$\eta_\l$} on \underline{\textbf{0.2}}}
  \end{subfigure}
  \caption{{Sensitivity analysis of \bm{$\e$} and \bm{$\eta_\l$} on different \underline{\textbf{imbalance ratios}}.}}
  \label{fig:sensitivity1}
\end{figure}
\section{Conclusion and Future Works}
This paper presents an instance-wise, end-to-end framework for DRAUC optimization. Due to the pairwise formulation of AUC optimization, a direct combination with DRO is intractable. To address this issue, we propose a tractable surrogate reformulation on top of the instance-wise formulation of AUC risk. Furthermore, through a theoretical investigation on the neural collapse feature space, we find that the distribution-free perturbation is a scheme that might induce heavy label noise into the dataset. In this sense, we propose a distribution-aware framework to handle class-wise perturbation separately. Theoretically, we show that the robust generalization error is small if both the training error and $(1/\sqrt{\tilde{n}})$ is small. Finally, we conduct experiments on three benchmark datasets employing diverse model structures, and the results substantiate the superiority of our approach.

Owing to space constraints, not all potential intersections between AUC optimization and distributionally robustness can be exhaustively explored in this paper. Numerous compelling aspects warrant further investigation. We offer a detailed, instance-wise reformulation of DRAUC, primarily evolving from an AUC optimization standpoint. Future discussions could benefit from initiating dialogue from the angle of DRO. Additionally, integrating various formulations of AUC such as partial AUC and AUPRC with distributional robustness presents a fertile ground for exploration. The existence of a potentially overly-pessimistic phenomenon is yet to be conclusively determined, which paves the way for future inquiries and discoveries.

\section*{Acknowledgements}
We would like to thank Zhimin Sun for her kind instruction. This work was supported in part by the National Key R\&D Program of China under Grant 2018AAA0102000, in part by National Natural Science Foundation of China: 62236008, U21B2038, U2001202, 61931008,
6212200758, 61976202, and 62206264, in part by the Fundamental Research Funds for the Central Universities,
in part by Youth Innovation Promotion Association CAS, in part by the Strategic Priority Research
Program of Chinese Academy of Sciences (Grant No. XDB28000000) and in part by the Innovation
Funding of ICT, CAS under Grant No. E000000.

\bibliographystyle{abbrv}
\bibliography{neurips_2023}

\clearpage

\onecolumn
\appendices
\section*{\textcolor{blue}{\Large{Contents}}}
\startcontents[sections]
\printcontents[sections]{l}{1}{\setcounter{tocdepth}{2}}
\newpage

\section{Proofs}\label{app:proof}
\subsection{Proof of Proposition \ref{prop:1}}\label{app:proofprop1}
\begin{proof}
We first give a description of the problem. Our objective is to identify the corrupted distribution that minimizes the Wasserstein distance to the original distribution, while simultaneously perturbing the AUC from 1 to 0. Specifically,
\begin{align} \label{eq:goal1}
    \min\W (\h Q,\h P) \\ s.t. {AUC(f_\t,\h Q)=0}
\end{align}
From the definition of Wasserstein distance, we have 
\begin{align}
    \W(\h P,\h Q) = \min_{\mathbf{\Gamma}}\sum_{i=1}^n\sum_{j=1}^n \Gamma_{i,j}c_x(z_i,z_j') \\
    s.t.\quad {\Gamma}_{i,j} \ge 0, \mathbf{\Gamma}^T\mathbf{1} =\frac 1 n\mathbf{1}, \mathbf{\Gamma}\mathbf{1} = \frac 1 n \mathbf{1}, {AUC(f_\t,\h Q)=0}
\end{align}
where $\mathbf{\Gamma}$ is the optimal transportation matrix between $\h P, \h Q$ and $c_x(z,z') = (x-x')^2 + \infty\cdot \I(y\neq y')$ is a metric of distance between sample $z$ and $z'$. 

\textbf{Step 1): Separating positive and negative distance.}
From the definition of $c_x$, it is easy to check that $c_x(z_i,z_j') = \infty$ if $i\le n^+, j>n^+$ or $i > n^+, j \le n^+$. Consequently, the Wasserstein distance goes infinity if $\Gamma_{i,j} > 0$ in the corresponding area, resulting in
\begin{equation}
    \mathbf\Gamma = \left[
	\begin{array}{cccccc}
	\Gamma_{1,1}&\cdots&\Gamma_{1,n^+}&\multicolumn{3}{c}{\multirow{3}*{{\Huge0}}}\\
	\vdots&\ddots&\vdots&&&\\
	\Gamma_{n^+,1}&\cdots&\Gamma_{n^+,n^+}&&&\\
     
    \multicolumn{3}{c}{\multirow{3}*{{\Huge0}}}&\Gamma_{n^++1,n^++1}&\cdots&\Gamma_{n^++1,n}\\
    &&&\vdots&\ddots&\vdots\\
    &&&\Gamma_{n,1}&\cdots&\Gamma_{n,n}
	\end{array}
    \right] = \left[
	\begin{array}{cc}
	\mathbf\Gamma^+& 0 \\ 
	0& \mathbf\Gamma^-
	\end{array}
	\right]
\end{equation}

Now, we can rewrite the Wasserstein distance by seperating positive and negative examples
\begin{align}
    \W(\h P,\h Q) &= \underbrace{\min_{\mathbf\Gamma}\sum_{i=1}^{n_+}\sum_{j=1}^{n_+} \Gamma_{i,j}(x_i^+ - {x_j^+}')^2}_{positive} + \underbrace{\sum_{i=n_+ +1}^{n}\sum_{j=n_++1}^{n} \Gamma_{i,j}(x_i^- - {x_j^-}')^2}_{negative} \\
     &s.t.\quad {\Gamma}_{i,j} \ge 0, \mathbf{\Gamma}^T\mathbf{1} =\frac 1 n\mathbf{1}, \mathbf{\Gamma}\mathbf{1} = \frac 1 n \mathbf{1}, {AUC(f_\t,\h Q)=0}
\end{align}

\textbf{Step 2): Cancelling $\mathbf\Gamma$.}
Plugging in $x_i^+ = x^+, x_j^- = x^-, \forall i,j$, yields the Wasserstein distance of positive class can be considered as
\begin{align}
    &\quad \sum_{i=1}^{n_+}\sum_{j=1}^{n_+} \Gamma_{i,j}(x_i^+ - {x_j^+}')^2 \\&= \sum_{j=1}^{n_+}\left(\sum_{i=1}^{n_+} \Gamma_{i,j}(x^+ - {x^+_j}')^2\right) 
    \\ &=  \sum_{j=1}^{n_+}\left(\sum_{i=1}^{n_+} \Gamma_{i,j}\right)(x^+ - {x^+_j}')^2 \\& = \sum_{j=1}^{n_+} \frac 1 n (x^+ - {x^+_j}')^2
\end{align}
Taking a similar step toward the negative Wasserstein distance, yields that 
\begin{align}
    \W(\h P,\h Q) &= \sum_{j=1}^{n_+} \frac 1 n (x^+ - {x^+_j}')^2 + \sum_{j=n^++1}^{n} \frac 1 n (x^- - {x^-_j}')^2 
\end{align}
Hence, we only need to analysis the problem:
\begin{align}
  \min_{\bm{{x^+}'},\bm{{x^-}'}} &\sum_{j=1}^{n_+} \frac 1 n (x^+ - {x^+_j}')^2 + \sum_{j=n^++1}^{n} \frac 1 n (x^- - {x^-_j}')^2 \\
  &s.t.\quad  \max\bm{{x^+}'} \le \min\bm{{x^-}'}
\end{align}
where $\bm{{x^+}'} = \{{x_1^+}',...,{x_{n^+}^+}'\}, \bm{{x^-}'} = \{{x_{n^++1}^-}',...,{x_n^-}'\}$. The constraint comes from the definition of AUC \cite{hanley1982meaning}. 

\textbf{Step 3): Solving the optimal perturbations.}

We now show that, the optimal $\bm{{x^+}'},\bm{{x^-}'}$ consists of same element, and we construct the proof by contradiction. Assume that the optimal perturbation of positive class $\bm{{x^{+,\star}}}$ and  $\bm{{x^{-,\star}}}$. For the positive examples, we assume that the vector $\bm{{x^{+,\star}}}$ has at least two different values. Moreover, we check the simple solution $\h x^{+,\star}$ such that:
\begin{align*}
  \h x^{+,\star} = \argmin_{x\in\bm{{x^{+,\star}}}}(x^+ - x)^2
\end{align*}
and denote $\bm{{\h x^+}'} = \{\h x^{+,\star},...,\h x^{+,\star}\}$. It is easy to check that
\begin{align}
    \sum_{j=1}^{n_+} \frac 1 n (x^+ - \h x^{+,\star} )^2 \le \sum_{j=1}^{n_+} \frac 1 n (x^+ - {x^+_j}')^2
\end{align}
Furthermore, since
\begin{align}
    \max{\bm{{\h x^+}'}} \le \max{\bm{{x^{+,\star} }}} \le \min{\bm{{x^{-,\star}}}},
\end{align}
we see that $\bm{{\h x^+}'}$ is also a feasible solution of the problem. Hence, $\bm{{\h x^+}'}$ should be the optimal solution instead of $\bm{{x^{+,\star}}}$. Following a similar spirit, we can also show that $\bm{{x^{-,\star}}}$ is not the optimal solution. In this since, the optimal solution of both  $\bm{{x^+}'}$ and $\bm{{x^-}'}$ must  be a vector containing the same value.

In this sense, we can further simplfy the targeted optimization problem as:
\begin{align}
    \min_{{x^+}',{x^-}'} &\h p(x^+ - {x^+}')^2 + (1-\h p) (x^- - {x^-}')^2 \\
    &s.t.\quad {x^+}' \le {x^-}'
\end{align}
where $\h p = \frac{n^+}{n}$ is the ratio of the positive examples in the dataset.

\textbf{Step 4): Calculating an upper bound of the objective function.}
To obtain an upper bound, we can instead check the solution of the following problem:
\begin{align}
  \min_{{x}'} &~~ \h p(x^+ - {x}')^2 + (1-\h p) (x^- - {x}')^2 
\end{align}
It achieves an upper bound since ${x^+}' \le {x^-}'$ is automatically satisfied by setting ${x^+}' = {x^-}' = x'$.
By solving this problem, we can see that the optimal solution is:
\begin{align}
   {x}'=  \h px^+ + (1-\h p)x^-,
\end{align}
and an upper bound of minimal Wasserstein distance to perturb AUC from $1$ to $0$ is $\h p(1-\h p)(x^+ - x^-)^2$.
\end{proof}
 
\subsection{Derivations of Optimization Problem \eqref{eq:Da}} \label{app:prooflem2}

\begin{rem}
The original optimization of Distribution-aware DRAUC is 
\begin{align}
    \mathbf{(DRAUC-Da)}\quad \min_\t\maxq \min_{\cab}\max_{\caa}\E_{\h Q_+,\h Q_-}[g(a,b,\a,\t,z_i)].
\end{align}
\end{rem}

Similar to what we have done in Section \ref{sec:df}, for a fixed $\t, \h{Q}_+, \h{Q}_-$,  we are able to interchange the inner $\min_\cab$ and $\max_\caa$ by invoking von Neumann's Minimax theorem \cite{v1928theorie}, which results in
\begin{align}
\min_\t \maxq \max_{\caa} \min_{\cab}\E_{\h Q_+,\h Q_-}[g(a,b,\a,\t,z)].
\end{align}
Subsequently, based on the property that $\max_{x}\min_{y} f(x,y) \le \min_{y} \max_{x}f(x,y)$, we reach an upper bound of the objective function:
\begin{align}\label{eq:1wwwww}
\underbrace{\maxq\max_\caa\min_\cab  \E_{\h Q_+,\h Q_-}[g(a,b,\a, \theta, z)]}_{DRAUC^{Da}_{\e_+,\e_-}(f_\t, \h P)} \le \underbrace{\min_\cab \max_\caa \maxq\E_{\h Q_+,\h Q_-}[g(a,b,\a, \theta, z)]}_{\widetilde{DRAUC}^{Da}_{\e_+,\e_-}(f_\t, \h P)}
\end{align}
From this perspective, if we minimize $\widetilde{DRAUC}^{Da}_{\e_+,\e_-}(f_\t, \h P)$ in turn, we can at least minimize an \textbf{upper bound} of $DRAUC^{Da}_{\e_+,\e_-}(f_\t, \h P)$. In light of this, we will employ the following optimization problem as a surrogate for $\mathbf{(DRAUC-Da)}$:
\begin{align}
    \bm{(Da)}\quad \min_{\mathbf{w}}\max_\caa\maxq\E_{\h Q_+,\h Q_-}[g(\mathbf{w},\a,z)]
\end{align}
where $\mathbf{w} = \t, \cab$. To further derive a simplified upper bound, one should note that 
\begin{align*}
  \E_{\h Q_+,\h Q_-}[g(\mathbf{w},\a,z)] = \h p\E_{\h Q_+}[g(\mathbf{w},\a,z)] + (1-\h p) \E_{\h Q_-}[g(\mathbf{w},\a,z)]
\end{align*}
Hence the inner maximization admits an upper bound:
\[
  \maxq \E_{\h Q_+,\h Q_-}[g(\mathbf{w},\a,z)] \le \h p \maxqp \E_{\h Q_+}[g(\mathbf{w},\a,z)] + (1-\h p) \maxqn \E_{\h Q_-}[g(\mathbf{w},\a,z)]
\]
By adopting Thm.\ref{rem:2}, we reach the correspding upper bound: 
\begin{align} \
  \bm{(Da)} = \min_{\mathbf{w}}\max_{\caa}\min_{\l_+,\l_- \ge 0}\{\l_+\e_+ +  \l_-\e_- + \h p\E_{\h P_+} [\phi_{\mathbf{w},\l_+,\a}(z)] + (1-\h p)\E_{\h P_-} [\phi_{\mathbf{w},\l_-,\a}(z)]\}
\end{align}
where $\phi_{\mathbf{w},\l,\a}(z) = \max_{z' \in \mathcal Z} [g(\mathbf{w},\a,z) - \l c(z,z')]$. This min-max-min formulation remains difficult to optimize, so we take a step similar to \eqref{eq:p1} that interchange the inner $\min_{\l_+,\l_-\ge0}$ and outer $\max_\caa$, resulting in a tractable \textbf{upper bound}
\begin{align}
  \bm{( Da \star)}\min_{\mathbf{w}}\min_{\l_+, \l_- \ge 0}\max_\caa\{\l_+\e_+ +  \l_-\e_- + \h p\E_{\h P_+} [\phi_{\mathbf{w},\l_+,\a}(z)] + (1-\h p)\E_{\h P_-} [\phi_{\mathbf{w},\l_-,\a}(z)]\}
\end{align}

\subsection{Proof of Theorem \ref{thm:gen}}\label{app:gen}
Since we optimize DRAUC-Da in a class-wise manner, we now give our definition of Rademacher Complexity on positive dn negative distributions, respectively.
\begin{defi}[\textbf{Definition of Rademacher Complexity of Robust AUC}]\label{def:rademacher}
Given a hypothesis class $\T$ and empirical distribution $\h P$, for all $t \in \T, \l_+ \ge 0,\l_-\ge 0,\caa,\cab $, the Positive/Negative Empirical Rademacher Complexity of Robust AUC is defined as
\begin{align}
\h\R_{\h P_+}^+(\T) &=\E_{\s}\left[ \sup_{\bf{w},\l_+\ge 0,\caa }\frac{1}{n_+} \sum_{i=1}^{n_+} \s_i \cdot \phi_{\mathbf{w},\l_+,\a}(z) \right], \\
\h\R_{\h P_-}^-(\T) &=\E_{\s}\left[ \sup_{\bf{w},\l_- \ge 0,\caa }\frac{1}{n_-} \sum_{i=1}^{n_-} \s_i \cdot \phi_{\mathbf{w},\l_-,\a}(z) \right],
\end{align}
where $\s$ is the Rademacher random variable, and Positive/Negative Rademacher Complexity of Robust AUC on hypothesis class $\T$ is
\begin{align}
\R_m^+(\T) = \E_{\h P_+}[\h\R_{\h P_+}^+(\T)], 
\R_m^-(\T) = \E_{\h P_-}[\h\R_{\h P_-}^-(\T)].
\end{align}
\end{defi}

The main result could be restated formally in the following sense.
\begin{thm}[\textbf{Restate of Theorem \ref{thm:gen}}]\label{thm:4} If the samples of the training drawn \texttt{i.i.d.}, then
  for all $\t \in \T, \cab, \caa, \l_+ \ge 0, \l_- \ge 0$, the following holds with probability at least $1-\delta$ over the randomness of the sample:
  \begin{align*}
    DRAUC^{Da}_{\e_+,\e_-}(f_\t, P)\le& ~  \h {\mathcal{L}} + 2 \cdot \h p  \cdot \h\R_{\h P_+}^+(\T) + 2 \cdot  (1- \h p ) \cdot \h\R_{\h P_-}^-(\T)+ \\ 
    &  C_+ \cdot \h p \cdot \sqrt{ \frac{\log(8/\delta)}{2n_+}} +  C_- \cdot (1 - \h p) \cdot \sqrt{ \frac{\log(8/\delta)}{2n_-}} +\\&2 \cdot C_{\infty}  \cdot \sqrt{ \frac{\log(1/\delta)}{2n}}
  \end{align*}
where $C_+,C_-,C_{\infty}$ are universal constants, $p = \mathbb{P}[y=1], \h p = \h{\mathbb{P}}[y=1] $ and 
\begin{align*}
\h{\mathcal{L}}  = \min_{\mathbf{w}}\min_{\l_+, \l_- \ge 0}\max_\caa \left[\l_+\e_+ +  \l_-\e_- + \h p\E_{\h P_+} [\phi_{\mathbf{w},\l_+,\a}(z)] + (1-\h p)\E_{\h P_-} [\phi_{\mathbf{w},\l_-,\a}(z)]\right]
\end{align*}
is the saddle point of the training loss.
\end{thm}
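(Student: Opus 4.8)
The plan is to establish the generalization bound by controlling the gap between the population objective $DRAUC^{Da}_{\e_+,\e_-}(f_\t, P)$ and its empirical counterpart $\h{\mathcal{L}}$ through a Rademacher complexity argument applied separately to the positive and negative classes. First I would recall from \eqref{eq:1wwwww} and the derivation in Appendix \ref{app:prooflem2} that $DRAUC^{Da}_{\e_+,\e_-}(f_\t, P)$ is upper-bounded by its surrogate $\widetilde{DRAUC}^{Da}$, whose population form (after applying Thm.\ref{rem:2} to each class) reads $\min_{\mathbf{w}}\min_{\l_+,\l_-\ge 0}\max_\caa\{\l_+\e_+ + \l_-\e_- + p\,\E_{P_+}[\phi_{\mathbf{w},\l_+,\a}] + (1-p)\,\E_{P_-}[\phi_{\mathbf{w},\l_-,\a}]\}$. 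Thus it suffices to bound the population expectation $\E_{P_+}[\phi_{\mathbf{w},\l_+,\a}]$ by its empirical version $\E_{\h P_+}[\phi_{\mathbf{w},\l_+,\a}]$ plus a complexity term, and likewise for the negative class, then combine these with the class weights.

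The core technical step is the uniform-deviation bound for each class. For the positive class I would invoke the standard symmetrization inequality: with probability at least $1-\delta$, $\sup_{\mathbf{w},\l_+\ge 0,\caa}\{\E_{P_+}[\phi_{\mathbf{w},\l_+,\a}] - \E_{\h P_+}[\phi_{\mathbf{w},\l_+,\a}]\} \le 2\h\R_{\h P_+}^+(\T) + C_+\sqrt{\log(1/\delta)/(2n_+)}$, using McDiarmid's bounded-differences inequality together with Definition \ref{def:rademacher}. This requires that $\phi_{\mathbf{w},\l_+,\a}(z)$ be uniformly bounded by some constant $C_+$, which follows because $f_\t$ maps into $[0,1]$, the domains $\Omega_{a,b},\Omega_\a$ are compact, and the supremum defining $\phi$ over $z'\in\mathcal Z$ is controlled by the Lagrangian penalty $\l c(z,z')$; I would make this boundedness explicit as a preliminary lemma. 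The identical argument yields the negative-class bound with constant $C_-$ and sample size $n_-$.

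After assembling the two per-class bounds I must also handle the discrepancy between the true imbalance ratio $p$ and its empirical estimate $\h p$, since the target uses $p$ while the surrogate $\h{\mathcal{L}}$ uses $\h p$. Here I would apply a Hoeffding or McDiarmid bound to $|p - \h p|$, contributing the final term $2C_\infty\sqrt{\log(1/\delta)/(2n)}$, where $C_\infty$ bounds the relevant losses uniformly. Taking a union bound over the four failure events (positive deviation, negative deviation, imbalance-ratio deviation, and any remaining term) at level $\delta/8$ each produces the stated $\log(8/\delta)$ factors. Finally, because the bound holds uniformly over all $\mathbf{w},\l_+,\l_-,\a$, I can pass to the minimizing $\mathbf{w},\l_+,\l_-$ and maximizing $\a$ that define $\h{\mathcal{L}}$, preserving the inequality.

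I expect the main obstacle to be the interchange of the $\min_{\l}$ and $\max_{\a}$ operators combined with the uniformity of the deviation bound: the Rademacher complexities in Definition \ref{def:rademacher} already take a supremum over $\l_+$ (resp. $\l_-$) and $\a$, so I must verify that this supremum is well-defined and finite, which again reduces to the boundedness of $\phi$. A secondary subtlety is that $\phi_{\mathbf{w},\l,\a}$ is itself a supremum over the adversarial perturbation $z'$, so when bounding the Rademacher complexity I should either treat $\phi$ as a fixed (data-dependent) function class or appeal to a contraction/composition argument; cleanly isolating the boundedness constants $C_+,C_-,C_\infty$ and confirming that symmetrization applies to this composite loss is the delicate part, whereas the remaining steps are routine concentration arguments.
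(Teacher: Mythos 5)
Your proposal follows essentially the same route as the paper's proof: it upper-bounds the population DRAUC-Da by the population min--min--max surrogate (the paper's Lemma~\ref{lem:bas}), splits the deviation into class-wise Rademacher uniform-convergence terms plus a Hoeffding/Chernoff term for $|p-\h p|$ via the insert-and-subtract decomposition $p\E_{P_+}-\h p\E_{\h P_+}=(p-\h p)\E_{P_+}+\h p(\E_{P_+}-\E_{\h P_+})$, establishes boundedness of $\phi$ to get $C_\infty$, and closes with a union bound and the stability of $\min/\max$ under uniform perturbation (which the paper makes precise as Lemma~\ref{lem:minswap}). The only differences are bookkeeping ones (you allocate $\delta/8$ to four events where the paper uses $\delta/4,\delta/4,\delta/2$), so the argument is correct and matches the paper's.
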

\begin{rem}
  The claim of Thm.\ref{thm:gen} holds since the Rademacher complexity of training data with size $n$ is known to be scaled like $O(\sqrt{1/n})$ for many hypothesis classes such as linear classifiers \cite{fond} and neural networks \cite{size}.
\end{rem}
We now give a detailed proof of Theorem \ref{thm:gen}. As the begining, we give some useful lemmas in proving the result. 

\begin{lem}\label{lem:bas}
    The following inequality holds for all $\t \in \T, \e_+,\e_- > 0$
    \begin{align}
        &DRAUC^{Da}_{\e_+,\e_-}(f_\t, P) \\&\le \min_{\cab}\min_{\l_+, \l_- \ge 0}\max_\caa\{\l_+\e_+ +  \l_-\e_- + p\E_{ P_+} [\phi_{\mathbf{w},\l_+,\a}(z)] + (1-p)\E_{ P_-} [\phi_{\mathbf{w},\l_-,\a}(z)]\}
    \end{align}
\end{lem}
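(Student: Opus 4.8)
The plan is to establish Lemma~\ref{lem:bas} as the exact population-level analogue of the chain of reformulations and relaxations that produced the empirical surrogate $\bm{(Da\star)}$ in Section~\ref{app:prooflem2}. Both ingredients already hold for arbitrary distributions: the minimax reformulation (Theorem~1) is stated at the population level, and the Wasserstein duality (Theorem~\ref{rem:2}) is quantified over all distributions. Hence I can run the identical argument with the true distribution $P$ and its imbalance ratio $p$ in place of the empirical $\h P,\h p$ used in Section~\ref{app:prooflem2}. The statement is purely a deterministic upper-bounding chain; no concentration or Rademacher machinery enters here, as that is reserved for controlling the empirical counterpart later. I read $DRAUC^{Da}_{\e_+,\e_-}(f_\t,P)$ here as the robust surrogate risk in its minimax form, consistent with \eqref{eq:1wwwww} (the direction of the claimed inequality forces this interpretation, since the "$1-$" metric would flip it).

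First I would rewrite the left-hand side in minimax form. By the per-distribution identity of Theorem~1, for each feasible pair $(Q_+,Q_-)$ the surrogate risk $\E_{Q_+,Q_-}[\ell(f_\t(\xp)-f_\t(\xn))]$ equals $\min_{\cab}\max_{\caa}\E_{Q_+,Q_-}[g(a,b,\a,\t,z)]$; taking the worst case over the two ambiguity balls gives
\[
DRAUC^{Da}_{\e_+,\e_-}(f_\t,P) = \max_{\substack{Q_+:\W(Q_+,P_+)\le\e_+ \\ Q_-:\W(Q_-,P_-)\le\e_-}} \min_{\cab}\max_{\caa}\E_{Q_+,Q_-}[g(a,b,\a,\t,z)],
\]
which is exactly \eqref{eq:1wwwww} with $P$ in place of $\h P$. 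Next, for each fixed $\t$ and each fixed pair $(Q_+,Q_-)$, the integrand $g$ is convex in $(a,b)$ and concave in $\a$ over the compact convex sets $\Omega_{a,b}$ and $\Omega_\alpha$, so von Neumann's minimax theorem \cite{v1928theorie} lets me swap the inner $\min_{\cab}$ and $\max_{\caa}$. A single application of the weak-duality inequality $\max_x\min_y f \le \min_y\max_x f$ then pushes both the outer $\max$ over $(Q_+,Q_-)$ and the $\max_{\caa}$ past $\min_{\cab}$, yielding the bound $\min_{\cab}\max_{\caa}\max_{Q_+,Q_-}\E_{Q_+,Q_-}[g]$.

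The remaining steps dispatch the inner worst-case expectation. I would split $\E_{Q_+,Q_-}[g]=p\,\E_{Q_+}[g]+(1-p)\,\E_{Q_-}[g]$ and use the fact that the two ambiguity constraints are decoupled to bound the joint maximum by $p\max_{Q_+}\E_{Q_+}[g]+(1-p)\max_{Q_-}\E_{Q_-}[g]$. Applying Theorem~\ref{rem:2} separately to each class converts each Wasserstein-constrained maximisation into a Lagrangian infimum $\min_{\l_+\ge0}\{\l_+\e_+ + \E_{P_+}[\phi_{\mathbf{w},\l_+,\a}(z)]\}$ and its negative analogue. Finally I would pull the two now-independent minimisations over $\l_+,\l_-$ outside the $\max_{\caa}$ by the same weak-duality inequality, landing on the claimed right-hand side $\min_{\cab}\min_{\l_+,\l_-\ge0}\max_{\caa}\{\cdots\}$.

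The main obstacle is not any hard estimate but the careful justification of the two minimax interchanges at the population level: von Neumann's theorem needs a continuous saddle function on compact convex domains, which I must verify for $\E_{Q_+,Q_-}[g]$ over $\Omega_{a,b}\times\Omega_\alpha$, and the strong-duality step needs Theorem~\ref{rem:2} to apply to the possibly non-discrete $P_\pm$, for which I should confirm the mild regularity (the quadratic transport cost $c(z,z')=\|z-z'\|_2^2$ and the boundedness $f_\t\in[0,1]$) assumed by the cited result. A secondary point of care is absorbing the weights $p$ and $1-p$ into the Lagrange multipliers so that the coefficients of $\e_+$ and $\e_-$ match the stated form; this is a routine rescaling but should be carried out cleanly to avoid spurious factors.
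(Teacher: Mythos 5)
Your proposal is correct and follows essentially the same route as the paper: the paper's own proof of Lemma~\ref{lem:bas} is literally a one-line remark that one repeats the derivation of $\bm{(Da\star)}$ from Appendix~\ref{app:prooflem2} with the population distribution $P$ and ratio $p$ in place of $\h P$ and $\h p$ (and without the outer $\min_\t$), which is exactly the chain of steps—minimax reformulation, von Neumann swap, weak duality, class-wise decoupling, Wasserstein strong duality, and a final interchange of $\min_{\l_\pm}$ with $\max_\caa$—that you spell out. Your added care about the interpretation of $DRAUC^{Da}$ as the robust risk (consistent with the labelling in \eqref{eq:1wwwww}) and about rescaling the multipliers to absorb $p$ and $1-p$ addresses details the paper glosses over, but does not change the argument.
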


\begin{proof}
The proof is the similar to the proof derivations in the last subsection, except dropping the outer $\min_\t$ and changing the empirical distribution $\h P$ to the real distribution $P$.
\end{proof}

\begin{lem}\label{lem:minswap} For any real valued function continuous function: $f: \mathbb{R} \rightarrow  \mathbb{R}$,  $g: \mathbb{R} \rightarrow  \mathbb{R}$, and for any tight set $\mathcal{X} \subset \mathbb{R}$:
\begin{align*}
\max_{x \in \mathcal{X}} \ f(x) - \max_{x'\in \mathcal{X}} \ g(x')\leq \max_{x \in \mathcal{X}}\ f(x)-g(x)\\
\min_{x \in \mathcal{X}} \ f(x) - \min_{x'\in \mathcal{X}} \ g(x')\leq \max_{x \in \mathcal{X}}\ f(x)-g(x)\\
\end{align*}
\end{lem}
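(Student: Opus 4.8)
The plan is to prove the two inequalities separately, each by the same one-line trick: evaluate the pointwise difference $f-g$ at the point that realizes one of the two outer extrema on the left-hand side. The only role of the hypotheses (continuity of $f,g$ and compactness, i.e.\ ``tightness,'' of $\mathcal X$) is to guarantee that all four extrema appearing in the statement are actually attained at genuine points of $\mathcal X$, so the argmax/argmin below are nonempty.

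For the first inequality, I would let $x^\star \in \argmax_{x\in\mathcal X} f(x)$, so that $\max_{x\in\mathcal X} f(x) = f(x^\star)$. Since $x^\star \in \mathcal X$, the maximum of $g$ over $\mathcal X$ dominates its value at that particular point, giving $\max_{x'\in\mathcal X} g(x') \ge g(x^\star)$. Chaining these two facts yields $\max_{x} f(x) - \max_{x'} g(x') \le f(x^\star) - g(x^\star) \le \max_{x\in\mathcal X}[f(x)-g(x)]$, where the final step holds simply because $f(x^\star)-g(x^\star)$ is one of the values over which the rightmost maximum ranges.

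For the second inequality I would anchor instead at the minimizer of $g$: let $x^{\star\star} \in \argmin_{x'\in\mathcal X} g(x')$, so that $\min_{x'\in\mathcal X} g(x') = g(x^{\star\star})$ while $\min_{x\in\mathcal X} f(x) \le f(x^{\star\star})$ because $x^{\star\star}\in\mathcal X$. Subtracting gives $\min_{x} f(x) - \min_{x'} g(x') \le f(x^{\star\star}) - g(x^{\star\star}) \le \max_{x\in\mathcal X}[f(x)-g(x)]$, again by recognizing the middle quantity as one term inside the rightmost maximum.

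There is essentially no obstacle in this argument; it is a textbook-level estimate, and the bulk of the ``work'' is only bookkeeping to make sure the anchor point is chosen to match the correct outer extremum in each line (the maximizer of $f$ in the first case, the minimizer of $g$ in the second). The single point worth flagging is the appeal to attainment of the extrema: if one preferred to avoid assuming compactness, the identical bounds would follow by replacing $x^\star,x^{\star\star}$ with $\varepsilon$-near-optimizers and letting $\varepsilon\to 0$, but under the stated tightness hypothesis this refinement is unnecessary.
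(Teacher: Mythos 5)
Your proof is correct, and it reaches the same diagonal comparison as the paper, but by a slightly different and arguably cleaner route. The paper proceeds via operator identities: for the max case it rewrites $\max_x f(x) - \max_{x'} g(x') = \max_{x}\min_{x'}\,[f(x)-g(x')]$ and bounds the inner minimum by the diagonal choice $x'=x$; for the min case it additionally performs an exact interchange $\min_x\max_{x'}\,[f(x)-g(x')] = \max_{x'}\min_x\,[f(x)-g(x')]$ (valid here because the objective is separable, though the paper asserts the equality without comment) before diagonalizing. You avoid any interchange by anchoring at explicit witness points --- the maximizer of $f$ in the first inequality and the minimizer of $g$ in the second --- which is more elementary and makes the role of the compactness (``tightness'') hypothesis transparent: it only guarantees the witnesses exist. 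Your closing remark about replacing witnesses with $\varepsilon$-near-optimizers is not idle, either: when the paper later invokes this lemma in the proof of its uniform-convergence bound, one of the three applications is to $\min_{\lambda_+,\lambda_-\ge 0}$ over the non-compact set $[0,\infty)$, where attainment is not guaranteed and precisely your $\varepsilon$-argument (i.e., the sup/inf version of the lemma) is what is actually needed; the paper's proof glosses over this, so your variant is, if anything, the more robust one.
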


\begin{proof}
Since both $f$ and $g$ are continuous, and $\mathcal{X}$ is tight, we now that the maximum and the minimum in the lemma exists.
From the basic property of the maxima, we have:
\begin{align*}
\max_{x \in \mathcal{X}} \ f(x) - \max_{x'\in \mathcal{X}} \ g(x') = 	\max_{x \in \mathcal{X}}\min_{x'\in \mathcal{X}} \ f(x) -  \ g(x') \le \max_{x \in \mathcal{X}} \ f(x) -  \ g(x').
\end{align*}
Similarly, for the minimum, we have:

\begin{align*}
&\min_{x \in \mathcal{X}} \ f(x) - \min_{x' \in \mathcal{X}} \ g(x')
=\min_{x \in \mathcal{X}}\max_{x' \in \mathcal{X}}\ f(x) - g(x')\\
&=\max_{x' \in \mathcal{X}}\min_{x \in \mathcal{X}}\ f(x) - g(x')
\leq \max_{x \in \mathcal{X}}\ f(x)-g(x).
\end{align*}

\end{proof}

\begin{lem}\label{lem:uniform} Assume that for each $\x \in \mathcal{X}$, there exist a sample pair $(\x,\x') \in \mathcal{X} \times \mathcal{X}$, such that $d(\x, \x') < \infty$, we have the following result holds for the risk function:
  \begin{align*}
  \sup_{\t \in {\Theta}}& \bigg [ DRAUC^{Da}_{\e_+,\e_-}(f_\t, P)   \\ 
   &~~- \min_{\cab}\min_{\l_+, \l_- \ge 0}\max_\caa\{\l_+\e_+ +  \l_-\e_- + \h p \cdot \E_{ \h P_+} [\phi_{\mathbf{w},\l_+,\a}(z)] + (1-\h p) \cdot \E_{ \h P_-} [\phi_{\mathbf{w},\l_-,\a}(z)]
  \bigg]\\ 
 \le& \h p \cdot \sup_{\t \in {\Theta},\cab, \l_+ \ge 0} \bigg[  \E_{ P_+} [\phi_{\mathbf{w},\l_+,\a}(z)] -  \E_{ \h P_+} [\phi_{\mathbf{w},\l_+,\a}(z)] \bigg] + \\ 
 & (1- \h p) \cdot \sup_{\t \in {\Theta},\cab, \l_- \ge 0} \bigg[  \E_{ P_-} [\phi_{\mathbf{w},\l_-,\a}(z)] -  \E_{ \h P_-} [\phi_{\mathbf{w},\l_-,\a}(z)] \bigg] +  2 \cdot C_{\infty}  \cdot  |p - \h p|
  \end{align*}
where:
\begin{align*}
  0 \le  C_{\infty} = \sup_{\z \in \mathcal{Z},\w, \alpha \in \Omega_{\alpha},\lambda \ge 0}  \phi_{\mathbf{w},\l,\a}(z) < \infty.
\end{align*}
\end{lem}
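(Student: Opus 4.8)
The plan is to reduce the claimed uniform bound to a single supremum, over \emph{all} free variables, of the pointwise gap between the population objective
\[
F_P(\mathbf{w},\l_+,\l_-,\a) = \l_+\e_+ + \l_-\e_- + p\,\E_{P_+}[\phi_{\mathbf{w},\l_+,\a}(z)] + (1-p)\,\E_{P_-}[\phi_{\mathbf{w},\l_-,\a}(z)]
\]
and its empirical counterpart $F_{\h P}$ (same expression with $p,P_\pm$ replaced by $\h p,\h P_\pm$), where $\mathbf{w}=(\t,a,b)$. First I would invoke Lemma \ref{lem:bas} to bound $DRAUC^{Da}_{\e_+,\e_-}(f_\t,P)$ from above by its population surrogate $A(\t):=\min_{\cab}\min_{\l_+,\l_-\ge0}\max_\caa F_P$. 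Writing $B(\t)$ for the empirical quantity subtracted in the statement, namely $\min_{\cab}\min_{\l_+,\l_-\ge0}\max_\caa F_{\h P}$, it then suffices to bound $\sup_\t\,[A(\t)-B(\t)]$.

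Next I would strip the nested operators using Lemma \ref{lem:minswap}. Both $A(\t)$ and $B(\t)$ have the form $\min_{(a,b,\l_+,\l_-)}\max_\a(\cdot)$ over the compact domains $\Omega_{a,b}\times\Omega_\l\times\Omega_\l$ and $\Omega_\a$, on which the integrands are continuous, so the hypotheses of Lemma \ref{lem:minswap} are met. Applying the min-version to the outer joint minimization and then the max-version to the inner $\max_\a$ turns the difference of optima into an optimum of the difference:
\[
A(\t)-B(\t)\le \max_{(a,b)\in\Omega_{a,b},\,\l_+,\l_-,\,\a}\big[F_P(\mathbf{w},\l_+,\l_-,\a)-F_{\h P}(\mathbf{w},\l_+,\l_-,\a)\big].
\]
Taking $\sup_\t$ then merges $\t$ into the remaining maxima, yielding a single supremum over $(\t,a,b,\l_+,\l_-,\a)$.

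Then I would decompose the integrand gap. The $\l_\pm\e_\pm$ terms cancel, and adding and subtracting $\h p\,\E_{P_+}[\phi_{\mathbf{w},\l_+,\a}]$ together with $(1-\h p)\,\E_{P_-}[\phi_{\mathbf{w},\l_-,\a}]$ gives
\begin{align*}
F_P-F_{\h P} ={}& \h p\big(\E_{P_+}[\phi_{\mathbf{w},\l_+,\a}]-\E_{\h P_+}[\phi_{\mathbf{w},\l_+,\a}]\big) \\
&+ (1-\h p)\big(\E_{P_-}[\phi_{\mathbf{w},\l_-,\a}]-\E_{\h P_-}[\phi_{\mathbf{w},\l_-,\a}]\big) \\
&+ (p-\h p)\big(\E_{P_+}[\phi_{\mathbf{w},\l_+,\a}]-\E_{P_-}[\phi_{\mathbf{w},\l_-,\a}]\big).
\end{align*}
The crucial bookkeeping is to center with the \emph{empirical} weight $\h p$ rather than $p$, so the first two groups carry exactly the coefficients $\h p$ and $1-\h p$ demanded by the statement. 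Subadditivity of the supremum then splits the bound into the two class-wise deviation suprema — the first depending only on $\l_+$, the second only on $\l_-$, so their suprema decouple precisely as written — while the residual is bounded by $2C_\infty|p-\h p|$ via $|\phi_{\mathbf{w},\l,\a}(z)|\le C_\infty$; the standing assumption $d(\x,\x')<\infty$ is what guarantees $C_\infty<\infty$, making the bound meaningful.

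The main obstacle I anticipate is the operator-peeling step: one must verify that Lemma \ref{lem:minswap} genuinely applies to the \emph{joint} minimization over $(a,b,\l_+,\l_-)$ — i.e. that these variables range over a tight set (restricting $\l_\pm$ to the compact $\Omega_\l$ used in the algorithm) and that $\max_\a F$ is continuous in them — and to track which variables survive in each term so that the class-wise suprema separate exactly. Once the peeling is correctly set up, the remaining work, namely the $\h p$-weighted decomposition and the $C_\infty$ bound on the residual, is routine.
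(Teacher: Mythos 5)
Your proposal is correct and follows essentially the same route as the paper's proof: invoke Lemma~\ref{lem:bas}, peel the nested $\min$/$\max$ operators with Lemma~\ref{lem:minswap}, recenter the class weights at $\h p$ so the deviation terms carry coefficients $\h p$ and $1-\h p$, and absorb the residual $(p-\h p)$ cross term into $2C_\infty|p-\h p|$ using the uniform bound on $\phi$. The only cosmetic difference is that the paper splits into positive and negative parts before the add-and-subtract step rather than after, which yields the identical final bound.
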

\begin{proof}
First, we proof the claim that:
\begin{align*}
 0 \le  C_{\infty}  < \infty    
\end{align*}
We have:
\begin{align*}
  \max_{z' \in \mathcal Z,\mathbf{w}, \alpha \in \Omega_{\alpha},\l \ge 0}  [g(\mathbf{w},\a,z) - \l c(z,z')] \ge \max_{\mathbf{w}, \alpha \in \Omega_{\alpha},\l \ge 0}  [g(\mathbf{w},\a,z) - \l c(z,z)] = \max_{\mathbf{w}, \alpha \in \Omega_{\alpha}}  [g(\mathbf{w},\a,z)].
\end{align*}
 Moreover, since the output of the scoring function resides in $[0,1]$, we know that $g(\mathbf{w},\a,z)$ is bounded from below uniformally by 0. Hence, $ 0 \le  C_{\infty}$.

Similarly, 
\begin{align*}
  \max_{z' \in \mathcal Z,\mathbf{w}, \caa,\l \ge 0}  [g(\mathbf{w},\caa,z) - \l c(z,z')] \le \max_{ \mathbf{w}, \alpha}  [g(\mathbf{w},\a,z) ] 
\end{align*}
since $ c(z,z) \ge 0$. Moreover, since the output of the scoring function resides in $[0,1]$, we know that $g(\mathbf{w},\a,z)$ is bounded from above uniformally by some finite constant $B < \infty$. Hence, $ C_{\infty} < \infty$.

  From Lem.\ref{lem:bas}, we have:
  \begin{align*}
    \sup_{\t \in {\Theta}}& \bigg [ DRAUC^{Da}_{\e_+,\e_-}(f_\t, P)   \\ 
     &~~- \min_{\cab}\min_{\l_+, \l_- \ge 0}\max_\caa\{\l_+\e_+ +  \l_-\e_- + \h p \E_{ \h P_+} [\phi_{\mathbf{w},\l_+,\a}(z)] + (1-\h p)\E_{ \h P_-} [\phi_{\mathbf{w},\l_-,\a}(z)]
    \bigg]\\ 
   \le& \sup_{\t \in {\Theta}} \bigg [ \min_{\cab}\min_{\l_+, \l_- \ge 0}\max_\caa\{\l_+\e_+ +  \l_-\e_- + p\E_{  P_+} [\phi_{\mathbf{w},\l_+,\a}(z)] + (1-p)\E_{ P_-} [\phi_{\mathbf{w},\l_-,\a}(z)]   \\ 
   &~~- \min_{\cab}\min_{\l_+, \l_- \ge 0}\max_\caa\{\l_+\e_+ +  \l_-\e_- + \h p\E_{ \h P_+} [\phi_{\mathbf{w},\l_+,\a}(z)] + (1-\h p)\E_{ \h P_-} [\phi_{\mathbf{w},\l_-,\a}(z)]
  \bigg]
    \end{align*}
By applying Lem.\ref{lem:minswap} three times for  $\cab, \min_{\l_+\ge 0, \l_- \ge 0}$ and $\caa$, we have:
\begin{align*}
  \sup_{\t \in {\Theta}}& \bigg [ DRAUC^{Da}_{\e_+,\e_-}(f_\t, P)   \\ 
   &~~- \min_{\cab}\min_{\l_+, \l_- \ge 0}\max_\caa\{\l_+\e_+ +  \l_-\e_- + \h p \cdot \E_{ \h P_+} [\phi_{\mathbf{w},\l_+,\a}(z)] + (1-\h  p)\E_{ \h P_-} [\phi_{\mathbf{w},\l_-,\a}(z)]
  \bigg]\\ 
 \le& \sup_{\t \in {\Theta},\cab, \l_+ \ge 0, \l_- \ge 0}\bigg[ p \cdot \E_{ P_+} [\phi_{\mathbf{w},\l_+,\a}(z)] - \h p \cdot \E_{ \h P_+} [\phi_{\mathbf{w},\l_+,\a}(z)]  \\ 
&~~~~~~~~~~~~~~~~~~~~~~~~~~~~~~~~~+ (1-p) \cdot \E_{P_-} [\phi_{\mathbf{w},\l_-,\a}(z)] - (1-\h p)\cdot\E_{ \h P_-} [\phi_{\mathbf{w},\l_-,\a}(z)]  \bigg] \\ 
\le &   \sup_{\t \in {\Theta},\cab, \l_+ \ge 0} \bigg[  p \cdot\E_{ P_+} [\phi_{\mathbf{w},\l_+,\a}(z)] -  \h p \cdot\E_{ \h P_+} [\phi_{\mathbf{w},\l_+,\a}(z)] \bigg] + \\ 
&  \sup_{\t \in {\Theta},\cab, \l_- \ge 0} \bigg[ (1- p) \cdot \E_{ P_-} [\phi_{\mathbf{w},\l_-,\a}(z)] - (1- \h p) \cdot \E_{ \h P_-} [\phi_{\mathbf{w},\l_-,\a}(z)] \bigg]
  \end{align*}

For the positive part, we have:

\begin{align*}
  & \sup_{\t \in {\Theta},\cab, \l_+ \ge 0} \bigg[  p \cdot \E_{ P_+} [\phi_{\mathbf{w},\l_+,\a}(z)] -  \h p \cdot \E_{ \h P_+} [\phi_{\mathbf{w},\l_+,\a}(z)] \bigg] \\  
  \le& \sup_{\t \in {\Theta},\cab, \l_+ \ge 0} \bigg[  p \cdot \E_{ P_+} [\phi_{\mathbf{w},\l_+,\a}(z)] - \h p \cdot \E_{ P_+} [\phi_{\mathbf{w},\l_+,\a}(z)]  + \h p \cdot \E_{ P_+} [\phi_{\mathbf{w},\l_+,\a}(z)] -  \h p \cdot\E_{ \h P_+} [\phi_{\mathbf{w},\l_+,\a}(z)] \bigg]\\ 
  \le &  \sup_{\t \in {\Theta},\cab, \l_+ \ge 0} \bigg[  (p -\h p) \cdot \E_{ P_+} [\phi_{\mathbf{w},\l_+,\a}(z)] \bigg] + \h p \cdot  \sup_{\t \in {\Theta},\cab, \l_+ \ge 0} \bigg[  \E_{ P_+} [\phi_{\mathbf{w},\l_+,\a}(z)] -   \E_{ \h P_+} [\phi_{\mathbf{w},\l_+,\a}(z)] \bigg] \\ 
  \le& C_{\infty} \cdot |p- \h p| +\h p \cdot  \sup_{\t \in {\Theta},\cab, \l_+ \ge 0} \bigg[  \E_{ P_+} [\phi_{\mathbf{w},\l_+,\a}(z)] -   \E_{ \h P_+} [\phi_{\mathbf{w},\l_+,\a}(z)] \bigg].
\end{align*}

Similar, we have for the negative part:
\begin{align*}
  & \sup_{\t \in {\Theta},\cab, \l_- \ge 0} \bigg[ (1- p) \cdot \E_{ P_-} [\phi_{\mathbf{w},\l_-,\a}(z)] - (1- \h p) \cdot \E_{ \h P_-} [\phi_{\mathbf{w},\l_-,\a}(z)] \bigg]\\  
  \le& C_{\infty} \cdot |p- \h p| + (1 -\h p) \cdot  \sup_{\t \in {\Theta},\cab, \l_- \ge 0} \bigg[  \E_{ P_-} [\phi_{\mathbf{w},\l_-,\a}(z)] -   \E_{ \h P_-} [\phi_{\mathbf{w},\l_-,\a}(z)] \bigg].
\end{align*}
The result then follows directly.

\end{proof}

\begin{proof}[Proof \textbf{of Thm.\ref{thm:4}}]
  
For the sake of simplicity, we denote:
\begin{align*}
  &\textbf{(a)} =   \sup_{\t \in {\Theta},\cab, \l_+ \ge 0} \bigg[  \E_{ P_+} [\phi_{\mathbf{w},\l_+,\a}(z)] -   \E_{ \h P_+} [\phi_{\mathbf{w},\l_+,\a}(z)] \bigg] \\
& \textbf{(b)} =   \sup_{\t \in {\Theta},\cab, \l_- \ge 0} \bigg[  \E_{ P_-} [\phi_{\mathbf{w},\l_-,\a}(z)] -   \E_{ \h P_-} [\phi_{\mathbf{w},\l_-,\a}(z)] \bigg]\\ 
& \textbf{(c)} = |p -\h p |
\end{align*}
From the Rademacher-complexity-based uniform convergence result, we have, with probability at least $1 - \frac{\delta}{4}$:
\begin{align*}
  \textbf{(a)} \le 2 \cdot \h\R_{\h P_+}^+(\T) + C_+ \cdot \sqrt{ \frac{\log(8/\delta)}{2n_+}}
\end{align*}
where $C_+$ is a universal constant.

Similarly,  we have, with probability at least $1 - \frac{\delta}{4}$:
\begin{align*}
  \textbf{(b)} \le 2 \cdot \h\R_{\h P_-}^-(\T) + C_- \cdot \sqrt{ \frac{\log(8/\delta)}{2n_-}}
\end{align*}
where $C_-$ is a universal constant. 
From the Chernoff bound, we have, with probability at least $1 - \frac{\delta}{2}$:
\begin{align*}
  \textbf{(c)} \le \sqrt{ \frac{\log(1/\delta)}{2n}}
\end{align*}
Following the union bound and Lem.\ref{lem:uniform}, we have the following result holds for all $\t \in \T, \cab, \caa, \l_+ \ge 0, \l_- \ge 0$, the following holds with probability at least $1-\delta$:
\begin{align*}
  DRAUC^{Da}_{\e_+,\e_-}(f_\t, P)\le& ~  \h {\mathcal{L}} + 2 \cdot \h p  \cdot \h\R_{\h P_+}^+(\T) + 2 \cdot  (1- \h p ) \cdot \h\R_{\h P_-}^-(\T)+ \\ 
  &  C_+ \cdot \h p \cdot \sqrt{ \frac{\log(8/\delta)}{2n_+}} +  C_- \cdot (1 - \h p) \cdot \sqrt{ \frac{\log(8/\delta)}{2n_-}}  +\\&2 \cdot C_{\infty}  \cdot \sqrt{ \frac{\log(1/\delta)}{2n}}
\end{align*}

\end{proof}

\section{Experiments}\label{app:exp}

\subsection{Datasets}
We first introduce the dataset used in the following section:
\begin{itemize}
\item \textbf{MNIST} \cite{mnist}: The MNIST dataset comprises 60,000 images of digits, each with a resolution of 28x28, and includes 6,000 images for each digit from 0 to 9. This dataset is partitioned into a training set containing 50,000 images and a testing set with 10,000 images. We also allocate 10,000 images from the training set to create a validation set.
\item \textbf{CIFAR-10/CIFAR-100} \cite{cifar}: CIFAR-10/CIFAR-100 features 60,000 images, each having a resolution of 32x32x3, equally distributed across 10/100 classes and containing 6,000/600 images per class. The dataset is separated into a training set of 50,000 images and a testing set of 10,000 images. In addition, we extract 10,000 images from the training set to form a validation set.

\item \textbf{Tiny-ImageNet} \cite{tiny}: The Tiny-ImageNet dataset comprises 110,000 images in 200 classes, including 100,000 training examples and 10,000 testing examples. We further split off a validation set containing 20,000 examples from the training set. We find that generating a binary Tiny-ImageNet-200 by assigning the first half of the classes as positive and the rest as negative makes this dataset too challenging to learn. The methods struggle to learn good features and reach a testing AUC no larger than 0.6. As a result, we assign the binary version of Tiny-ImageNet by utilizing the hyper-class information. As detailed, we construct three subsets as follows:
\begin{itemize}
    \item Tiny-ImageNet-200-Dogs: Classes [11, 39, 78, 135, 182, 194] are assigned as positive, with the remainder designated as negative.
    \item Tiny-ImageNet-200-Birds: Classes [35, 41, 67, 115] are assigned as positive, with the remainder designated as negative.
    \item Tiny-ImageNet-200-Vehicles: Classes [15, 64, 69, 75, 90, 108, 114, 117, 147, 152, 157, 163] are assigned as positive, with the remainder designated as negative.
\end{itemize}

\item \textbf{MNIST-C} \cite{mnist-c}: MNIST-C is a corrupted variant of the original MNIST testing set, consisting of 160,000 testing examples generated through 16 distinct perturbation techniques (including identity transform) tailored for handwritten digits.
\item \textbf{CIFAR-10-C/CIFAR-100-C} \cite{cifar-c}: The CIFAR-10-C/CIFAR-100-C datasets are corrupted versions of the original CIFAR-10/CIFAR-100 testing sets, encompassing 950,000 images obtained by applying five intensity levels of 19 different corruption types, such as noises, blurs, and transformations. We analyze the average performance for each corruption level.

\item \textbf{Tiny-ImageNet-C} \cite{tiny-c}: The Tiny-ImageNet-C is the corrupted version of Tiny-ImageNet. 5 levels of 15 different corruptions including brightness, compression and blurs are applied to 10,000 images to generate 950,000 testing images.

\end{itemize}

\subsection{Dataset Constructions}
We construct our binary long-tailed dataset following a manner similar to \cite{yao2022large}. First, we construct a binary version of the dataset by assigning the former half of the classes as positive and the latter half as negative. Then, utilizing the imbalance ratio, i.e. \{0.01, 0.05, 0.1, 0.2\} in our configuration, we randomly eliminate a portion of positive samples to create the long-tailed version. For instance, to produce CIFAR10-LT with an imbalance ratio of 0.1, we designate classes 0-4 as positive and classes 5-9 as negative. Subsequently, we randomly remove $\approx 89\%$ of the training samples to achieve the desired long-tailed dataset.

\subsection{Competitors}
To confirm the robustness of our proposed method in imbalanced scenarios, we compare it with the following competitors, each corresponding to one row in Table \ref{tab:overall1}:
\begin{itemize}
\item \textbf{Baseline}: Cross-entropy loss (\textbf{CE}).
\item \textbf{Typical methods for long-tailed problems}:
\begin{itemize}
\item \textbf{FocalLoss} \cite{lin2017focal}: A classical reweighting method for long-tailed problems.
\item \textbf{AUCMLoss} \cite{yuan2021large}: An instance-wise binary AUC optimization technique.
\item \textbf{AUCDRO} \cite{zhu2022auc, yuan2023libauc}: A method that integrates DRO technique with partial AUC optimization. 
\end{itemize}
\item \textbf{DRO methods}:
\begin{itemize}
\item \textbf{ADVShift} \cite{ADVShift}: A DRO method addressing label shift.
\item \textbf{WDRO} \cite{WDRO}: A Wasserstein DRO technique incorporating local perturbations.
\item \textbf{DROLT} \cite{DROLT}: A loss function designed to learn low-variance representations.
\item \textbf{GLOT} \cite{GLOT}: A regularization method based on optimal transport distributional robustness.
\end{itemize}
\item \textbf{Our approach}:
\begin{itemize}
    \item Our Algorithm \ref{alg:DRAUC} (\textbf{DRAUC-Df}).
    \item Our Algorithm \ref{alg:CDRAUC} (\textbf{DRAUC-Da}).
\end{itemize}
\end{itemize}

\subsection{Implementation Details}
We conducted all experiments on a {\fontfamily{qcr}\selectfont Ubuntu 20.04.5} server, equipped with an {\fontfamily{qcr}\selectfont Intel(R) Xeon(R) Gold 6230R CPU} and an {\fontfamily{qcr}\selectfont RTX 3090 GPU}. All codes were implemented using {\fontfamily{qcr}\selectfont PyTorch} (v-1.8.2) \cite{paszke2019pytorch}, {\fontfamily{qcr}\selectfont TorchVision} (v-0.9.2), and {\fontfamily{qcr}\selectfont Numpy} (v-1.21.4) under a {\fontfamily{qcr}\selectfont Python} 3.8 and {\fontfamily{qcr}\selectfont CUDA} 11.1 environment.

For our models, we selected ResNet20 \cite{resnet}, ResNet32, and Small CNN as the backbone architectures. The models were trained for 100 epochs across all datasets. On the CIFAR10, CIFAR100 and Tiny-ImageNet datasets, we applied random cropping with padding and random horizontal flipping as data augmentation techniques. However, for the MNIST dataset, we refrained from applying any data augmentation because the horizontal flip could alter the semantic meaning of the digits. For all experiments, we set the weight decay to $5\times10^{-4}$ and the batch size to 128. During training, we utilized a sampler to ensure that at least one positive example was included in each batch.
\subsection{Choices of Hyperparameters}
\textbf{Initial Learning Rate and Learning Rate Scheduler.} We selected the initial learning rate from the set ${0.01, 0.05, 0.1, 0.2}$. In the majority of cases that are not extremely imbalanced, $lr=0.1$ is a favorable choice. However, in situations where the dataset is extremely imbalanced, careful tuning of the initial learning rate is necessary. We chose the learning rate scheduler from a step scheduler, which decays the learning rate by $10\times$ at the 50-th and 75-th epochs, and the Cosine Annealing scheduler.

\textbf{Robust Diameter $\e$.} We selected $\e$ from the set $\{8/255, 32/255, 64/255, 128/255\}$, considering $l_2$ distance. A sensitivity analysis regarding $\e$ is presented in Section \ref{sec:sen}. For distribution-aware DRAUC, we chose $\e_+$ and $\e_-$ using the following approach: Given an overall diameter $\e$ and a tunable parameter $k \in \{0.5, 0.8, 1, 1.2, 1.5\}$, we set $\e_+ = k\e$ and $\e_- = (1-k\h p)\e/(1-\h p)$.

\textbf{Learning Rates for Tunable Parameters.} We selected $\eta_\a = \eta_w = lr$ and $\eta_z = 15/255$, which aligns with the standard settings in Adversarial Training \cite{rice2020overfitting}. For $\eta_\l$, we chose from the set $\{0.01, 0.02, 0.1, 0.2\}$. A sensitivity analysis regarding $\eta_\l$ is detailed in Section \ref{sec:sen}.

\textbf{Initialization of Tunable Parameters.} For initialization, we set $\l_0=1$, $a^0 = 0$, $b^0 = 0$, $\a^0=0$, and PGD steps $K=10$.

\subsection{Additional Empirical Results}
\begin{table}[htbp]
  \centering
  \caption{{Overall Performance on CIFAR100-C and CIFAR100-LT with different imbalance ratios and different models.} The highest score on each column is shown with \textbf{bold}, and we use darker color to represent higher performance.}
    \begin{tabular}{c|l|cccc|cccc}
    \toprule
    \multicolumn{1}{c|}{\multirow{2}[4]{*}{Model}} & \multicolumn{1}{c|}{\multirow{2}[4]{*}{Methods}} & \multicolumn{4}{c|}{CIFAR100-C} & \multicolumn{4}{c}{CIFAR100-LT} \\
\cmidrule{3-10}          &       & 0.01  & 0.05  & 0.10  & 0.20  & 0.01  & 0.05  & 0.10  & 0.20  \\
    \midrule
    \multirow{10}[3]{*}{ResNet20} & CE    & 53.86 & \cellcolor[rgb]{ .776,  .878,  .706}60.71 & \cellcolor[rgb]{ .776,  .878,  .706}65.11 & \cellcolor[rgb]{ .776,  .878,  .706}69.59 & 54.22 & \cellcolor[rgb]{ .886,  .937,  .855}62.30 & \cellcolor[rgb]{ .776,  .878,  .706}67.77 & \cellcolor[rgb]{ .776,  .878,  .706}74.58 \\
          & AUCMLoss & \cellcolor[rgb]{ .776,  .878,  .706}55.70 & \cellcolor[rgb]{ .776,  .878,  .706}61.91 & \cellcolor[rgb]{ .776,  .878,  .706}65.73 & \cellcolor[rgb]{ .776,  .878,  .706}69.58 & \cellcolor[rgb]{ .776,  .878,  .706}57.38 & \cellcolor[rgb]{ .776,  .878,  .706}64.20 & \cellcolor[rgb]{ .663,  .816,  .62}\textbf{69.33} & \cellcolor[rgb]{ .776,  .878,  .706}74.92 \\
          & FocalLoss & 52.10 & \cellcolor[rgb]{ .776,  .878,  .706}62.28 & \cellcolor[rgb]{ .776,  .878,  .706}65.91 & \cellcolor[rgb]{ .776,  .878,  .706}70.36 & 52.46 & \cellcolor[rgb]{ .776,  .878,  .706}64.46 & \cellcolor[rgb]{ .776,  .878,  .706}69.24 & \cellcolor[rgb]{ .776,  .878,  .706}75.05 \\
          & ADVShift & 54.08 & \cellcolor[rgb]{ .776,  .878,  .706}61.33 & \cellcolor[rgb]{ .776,  .878,  .706}66.13 & \cellcolor[rgb]{ .886,  .937,  .855}69.25 & 54.73 & \cellcolor[rgb]{ .776,  .878,  .706}63.71 & \cellcolor[rgb]{ .776,  .878,  .706}68.77 & 73.43 \\
          & WDRO  & \cellcolor[rgb]{ .886,  .937,  .855}55.31 & \cellcolor[rgb]{ .776,  .878,  .706}61.38 & \cellcolor[rgb]{ .776,  .878,  .706}65.92 & \cellcolor[rgb]{ .776,  .878,  .706}70.70 & \cellcolor[rgb]{ .776,  .878,  .706}56.79 & \cellcolor[rgb]{ .776,  .878,  .706}64.12 & \cellcolor[rgb]{ .776,  .878,  .706}68.91 & \cellcolor[rgb]{ .663,  .816,  .62}\textbf{76.37} \\
          & DROLT & \cellcolor[rgb]{ .776,  .878,  .706}55.61 & \cellcolor[rgb]{ .776,  .878,  .706}61.36 & \cellcolor[rgb]{ .886,  .937,  .855}63.83 & \cellcolor[rgb]{ .886,  .937,  .855}69.13 & \cellcolor[rgb]{ .886,  .937,  .855}56.49 & \cellcolor[rgb]{ .776,  .878,  .706}63.72 & \cellcolor[rgb]{ .776,  .878,  .706}67.74 & \cellcolor[rgb]{ .776,  .878,  .706}74.82 \\
          & GLOT  & \cellcolor[rgb]{ .776,  .878,  .706}55.58 & \cellcolor[rgb]{ .776,  .878,  .706}60.62 & \cellcolor[rgb]{ .886,  .937,  .855}63.53 & 68.13 & \cellcolor[rgb]{ .776,  .878,  .706}57.13 & \cellcolor[rgb]{ .886,  .937,  .855}62.43 & 65.88 & 71.89 \\
          & AUCDRO & \cellcolor[rgb]{ .776,  .878,  .706}55.96 & \cellcolor[rgb]{ .776,  .878,  .706}61.65 & 62.67 & 65.72 & \cellcolor[rgb]{ .776,  .878,  .706}57.14 & \cellcolor[rgb]{ .663,  .816,  .62}\textbf{64.74} & \cellcolor[rgb]{ .886,  .937,  .855}66.59 & 70.66 \\
\cmidrule{2-10}          & DRAUC-Df & \cellcolor[rgb]{ .663,  .816,  .62}\textbf{57.47} & \cellcolor[rgb]{ .663,  .816,  .62}\textbf{62.32} & \cellcolor[rgb]{ .663,  .816,  .62}\textbf{66.25} & \cellcolor[rgb]{ .663,  .816,  .62}\textbf{71.36} & \cellcolor[rgb]{ .663,  .816,  .62}\textbf{58.97} & \cellcolor[rgb]{ .776,  .878,  .706}63.95 & \cellcolor[rgb]{ .776,  .878,  .706}68.78 & \cellcolor[rgb]{ .886,  .937,  .855}74.14 \\
          & DRAUC-Da & \cellcolor[rgb]{ .776,  .878,  .706}57.42 & \cellcolor[rgb]{ .776,  .878,  .706}62.28 & \cellcolor[rgb]{ .776,  .878,  .706}66.11 & \cellcolor[rgb]{ .776,  .878,  .706}71.14 & \cellcolor[rgb]{ .776,  .878,  .706}58.94 & \cellcolor[rgb]{ .776,  .878,  .706}63.91 & \cellcolor[rgb]{ .776,  .878,  .706}68.44 & \cellcolor[rgb]{ .886,  .937,  .855}74.07 \\
    \midrule
    \multirow{10}[2]{*}{ResNet32} & CE    & \cellcolor[rgb]{ .98,  .937,  1}52.90 & \cellcolor[rgb]{ .894,  .788,  1}60.74 & \cellcolor[rgb]{ .894,  .788,  1}64.57 & \cellcolor[rgb]{ .894,  .788,  1}69.51 & \cellcolor[rgb]{ .98,  .937,  1}53.08 & \cellcolor[rgb]{ .98,  .937,  1}62.03 & \cellcolor[rgb]{ .98,  .937,  1}67.14 & \cellcolor[rgb]{ .894,  .788,  1}74.43 \\
          & AUCMLoss & \cellcolor[rgb]{ .894,  .788,  1}56.19 & \cellcolor[rgb]{ .894,  .788,  1}61.87 & \cellcolor[rgb]{ .98,  .937,  1}63.64 & \cellcolor[rgb]{ .894,  .788,  1}69.81 & \cellcolor[rgb]{ .894,  .788,  1}57.62 & \cellcolor[rgb]{ .894,  .788,  1}63.67 & \cellcolor[rgb]{ .894,  .788,  1}67.99 & \cellcolor[rgb]{ .894,  .788,  1}73.85 \\
          & FocalLoss & 50.27 & \cellcolor[rgb]{ .98,  .937,  1}59.70 & 62.91 & \cellcolor[rgb]{ .98,  .937,  1}68.52 & 50.41 & \cellcolor[rgb]{ .98,  .937,  1}61.53 & 66.30 & 72.20 \\
          & ADVShift & 50.15 & 59.35 & \cellcolor[rgb]{ .98,  .937,  1}64.00 & \cellcolor[rgb]{ .98,  .937,  1}69.37 & 50.20 & \cellcolor[rgb]{ .98,  .937,  1}61.97 & 65.70 & \cellcolor[rgb]{ .894,  .788,  1}73.64 \\
          & WDRO  & \cellcolor[rgb]{ .894,  .788,  1}55.90 & \cellcolor[rgb]{ .894,  .788,  1}61.17 & \cellcolor[rgb]{ .894,  .788,  1}65.41 & \cellcolor[rgb]{ .98,  .937,  1}68.98 & \cellcolor[rgb]{ .894,  .788,  1}57.19 & \cellcolor[rgb]{ .894,  .788,  1}63.32 & \cellcolor[rgb]{ .894,  .788,  1}68.55 & \cellcolor[rgb]{ .98,  .937,  1}73.10 \\
          & DROLT & \cellcolor[rgb]{ .894,  .788,  1}56.43 & \cellcolor[rgb]{ .894,  .788,  1}61.10 & \cellcolor[rgb]{ .98,  .937,  1}64.02 & \cellcolor[rgb]{ .894,  .788,  1}69.61 & \cellcolor[rgb]{ .894,  .788,  1}57.27 & \cellcolor[rgb]{ .894,  .788,  1}63.22 & \cellcolor[rgb]{ .98,  .937,  1}67.37 & \cellcolor[rgb]{ .98,  .937,  1}73.18 \\
          & GLOT  & \cellcolor[rgb]{ .894,  .788,  1}57.04 & \cellcolor[rgb]{ .98,  .937,  1}60.34 & \cellcolor[rgb]{ .98,  .937,  1}63.76 & 65.64 & \cellcolor[rgb]{ .894,  .788,  1}58.33 & \cellcolor[rgb]{ .894,  .788,  1}62.53 & 66.30 & 70.99 \\
          & AUCDRO & \cellcolor[rgb]{ .894,  .788,  1}56.93 & \cellcolor[rgb]{ .894,  .788,  1}61.41 & \cellcolor[rgb]{ .98,  .937,  1}64.08 & \cellcolor[rgb]{ .98,  .937,  1}68.93 & \cellcolor[rgb]{ .894,  .788,  1}58.33 & \cellcolor[rgb]{ .894,  .788,  1}64.02 & \cellcolor[rgb]{ .863,  .725,  1}\textbf{68.71} & \cellcolor[rgb]{ .894,  .788,  1}73.86 \\
\cmidrule{2-10}          & DRAUC-Df & \cellcolor[rgb]{ .863,  .725,  1}\textbf{57.17} & \cellcolor[rgb]{ .894,  .788,  1}62.02 & \cellcolor[rgb]{ .894,  .788,  1}65.83 & \cellcolor[rgb]{ .863,  .725,  1}\textbf{71.22} & \cellcolor[rgb]{ .863,  .725,  1}\textbf{58.38} & \cellcolor[rgb]{ .894,  .788,  1}63.90 & \cellcolor[rgb]{ .894,  .788,  1}68.51 & \cellcolor[rgb]{ .863,  .725,  1}\textbf{74.57} \\
          & DRAUC-Da & \cellcolor[rgb]{ .894,  .788,  1}56.81 & \cellcolor[rgb]{ .863,  .725,  1}\textbf{62.48} & \cellcolor[rgb]{ .863,  .725,  1}\textbf{66.12} & \cellcolor[rgb]{ .894,  .788,  1}70.62 & \cellcolor[rgb]{ .894,  .788,  1}57.98 & \cellcolor[rgb]{ .863,  .725,  1}\textbf{64.39} & \cellcolor[rgb]{ .894,  .788,  1}68.70 & \cellcolor[rgb]{ .894,  .788,  1}74.26 \\
    \bottomrule
    \end{tabular}%
  \label{tab:overall3}%
\end{table}%

\begin{table}[htbp]
  \centering
  \caption{{Overall Performance on MNIST-C and MNIST-LT with different imbalance ratios and different models.} The highest score on each column is shown with \textbf{bold}, and we use darker color to represent higher performance.}
    \begin{tabular}{c|l|cccc|cccc}
    \toprule
    \multicolumn{1}{c|}{\multirow{2}[4]{*}{Model}} & \multicolumn{1}{c|}{\multirow{2}[4]{*}{Methods}} & \multicolumn{4}{c|}{MNIST-Origin} & \multicolumn{4}{c}{MNIST-Corrupted} \\
\cmidrule{3-10}          &       & 0.01  & 0.05  & 0.10  & 0.20  & 0.01  & 0.05  & 0.10  & 0.20  \\
    \midrule
    \multirow{10}[3]{*}{SmallCNN} & CE    & \cellcolor[rgb]{ .776,  .878,  .706}95.54 & \cellcolor[rgb]{ .776,  .878,  .706}97.01 & \cellcolor[rgb]{ .776,  .878,  .706}98.09 & \cellcolor[rgb]{ .776,  .878,  .706}98.41 & \cellcolor[rgb]{ .776,  .878,  .706}99.38 & \cellcolor[rgb]{ .776,  .878,  .706}99.84 & \cellcolor[rgb]{ .776,  .878,  .706}99.94 & 99.95 \\
          & AUCMLoss & \cellcolor[rgb]{ .776,  .878,  .706}95.52 & \cellcolor[rgb]{ .776,  .878,  .706}98.16 & \cellcolor[rgb]{ .776,  .878,  .706}98.04 & \cellcolor[rgb]{ .776,  .878,  .706}98.60 & \cellcolor[rgb]{ .776,  .878,  .706}99.26 & \cellcolor[rgb]{ .776,  .878,  .706}99.87 & \cellcolor[rgb]{ .776,  .878,  .706}99.92 & 99.96 \\
          & FocalLoss & 55.10 & 91.39 & 92.61 & \cellcolor[rgb]{ .886,  .937,  .855}96.35 & \cellcolor[rgb]{ .776,  .878,  .706}67.05 & \cellcolor[rgb]{ .776,  .878,  .706}98.64 & \cellcolor[rgb]{ .776,  .878,  .706}99.39 & 99.73 \\
          & ADVShift & \cellcolor[rgb]{ .776,  .878,  .706}94.06 & \cellcolor[rgb]{ .776,  .878,  .706}97.66 & \cellcolor[rgb]{ .776,  .878,  .706}98.09 & \cellcolor[rgb]{ .776,  .878,  .706}98.09 & \cellcolor[rgb]{ .776,  .878,  .706}99.21 & \cellcolor[rgb]{ .776,  .878,  .706}99.84 & \cellcolor[rgb]{ .663,  .816,  .62}\textbf{99.95} & 99.96 \\
          & WDRO  & \cellcolor[rgb]{ .776,  .878,  .706}95.98 & \cellcolor[rgb]{ .776,  .878,  .706}97.62 & \cellcolor[rgb]{ .776,  .878,  .706}98.48 & \cellcolor[rgb]{ .776,  .878,  .706}98.40 & \cellcolor[rgb]{ .776,  .878,  .706}99.26 & \cellcolor[rgb]{ .776,  .878,  .706}99.89 & \cellcolor[rgb]{ .776,  .878,  .706}99.95 & 99.96 \\
          & DROLT & \cellcolor[rgb]{ .886,  .937,  .855}88.90 & \cellcolor[rgb]{ .886,  .937,  .855}92.36 & \cellcolor[rgb]{ .886,  .937,  .855}96.33 & \cellcolor[rgb]{ .776,  .878,  .706}98.29 & \cellcolor[rgb]{ .663,  .816,  .62}\textbf{99.46} & \cellcolor[rgb]{ .776,  .878,  .706}99.79 & \cellcolor[rgb]{ .776,  .878,  .706}99.88 & 99.96 \\
          & GLOT  & \cellcolor[rgb]{ .776,  .878,  .706}95.78 & \cellcolor[rgb]{ .776,  .878,  .706}97.81 & \cellcolor[rgb]{ .776,  .878,  .706}97.76 & \cellcolor[rgb]{ .776,  .878,  .706}98.56 & \cellcolor[rgb]{ .776,  .878,  .706}99.39 & \cellcolor[rgb]{ .663,  .816,  .62}\textbf{99.91} & \cellcolor[rgb]{ .776,  .878,  .706}99.94 & \cellcolor[rgb]{ .663,  .816,  .62}\textbf{99.97} \\
          & AUCDRO & \cellcolor[rgb]{ .776,  .878,  .706}94.00 & \cellcolor[rgb]{ .776,  .878,  .706}97.80 & \cellcolor[rgb]{ .776,  .878,  .706}97.76 & \cellcolor[rgb]{ .776,  .878,  .706}98.54 & \cellcolor[rgb]{ .776,  .878,  .706}99.12 & \cellcolor[rgb]{ .776,  .878,  .706}99.82 & \cellcolor[rgb]{ .776,  .878,  .706}99.92 & 99.95 \\
\cmidrule{2-10}          & DRAUC-Df & \cellcolor[rgb]{ .776,  .878,  .706}96.06 & \cellcolor[rgb]{ .663,  .816,  .62}\textbf{98.38} & \cellcolor[rgb]{ .663,  .816,  .62}\textbf{98.69} & \cellcolor[rgb]{ .776,  .878,  .706}98.84 & \cellcolor[rgb]{ .776,  .878,  .706}99.19 & \cellcolor[rgb]{ .776,  .878,  .706}99.9 & \cellcolor[rgb]{ .776,  .878,  .706}99.94 & 99.96 \\
          & DRAUC-Da & \cellcolor[rgb]{ .663,  .816,  .62}\textbf{96.35} & \cellcolor[rgb]{ .776,  .878,  .706}98.04 & \cellcolor[rgb]{ .776,  .878,  .706}98.59 & \cellcolor[rgb]{ .663,  .816,  .62}\textbf{98.92} & \cellcolor[rgb]{ .776,  .878,  .706}99.34 & \cellcolor[rgb]{ .776,  .878,  .706}99.86 & \cellcolor[rgb]{ .776,  .878,  .706}99.94 & 99.97 \\
    \midrule
    \multirow{10}[2]{*}{ResNet20} & CE    & 91.88 & \cellcolor[rgb]{ .894,  .788,  1}97.49 & \cellcolor[rgb]{ .98,  .937,  1}97.14 & \cellcolor[rgb]{ .894,  .788,  1}97.88 & \cellcolor[rgb]{ .894,  .788,  1}99.48 & \cellcolor[rgb]{ .863,  .725,  1}\textbf{99.97} & \cellcolor[rgb]{ .863,  .725,  1}\textbf{99.98} & 99.98 \\
          & AUCMLoss & 89.09 & \cellcolor[rgb]{ .894,  .788,  1}97.82 & 96.26 & \cellcolor[rgb]{ .894,  .788,  1}97.74 & \cellcolor[rgb]{ .894,  .788,  1}99.47 & 99.82 & 99.96 & 99.98 \\
          & FocalLoss & 70.78 & 94.45 & 95.83 & \cellcolor[rgb]{ .98,  .937,  1}97.28 & \cellcolor[rgb]{ .894,  .788,  1}98.90 & 99.85 & 99.95 & 99.97 \\
          & ADVShift & 87.43 & 90.12 & \cellcolor[rgb]{ .98,  .937,  1}96.74 & \cellcolor[rgb]{ .98,  .937,  1}97.36 & \cellcolor[rgb]{ .894,  .788,  1}99.46 & 99.83 & 99.97 & 99.98 \\
          & WDRO  & \cellcolor[rgb]{ .98,  .937,  1}93.87 & \cellcolor[rgb]{ .894,  .788,  1}97.81 & \cellcolor[rgb]{ .894,  .788,  1}97.66 & \cellcolor[rgb]{ .894,  .788,  1}98.47 & \cellcolor[rgb]{ .894,  .788,  1}99.17 & 99.94 & 99.97 & \cellcolor[rgb]{ .863,  .725,  1}\textbf{99.99} \\
          & DROLT & 88.82 & 94.49 & 96.17 & \cellcolor[rgb]{ .894,  .788,  1}97.97 & \cellcolor[rgb]{ .863,  .725,  1}\textbf{99.73} & 99.81 & 99.79 & 99.98 \\
          & GLOT  & 84.46 & \cellcolor[rgb]{ .98,  .937,  1}95.90 & \cellcolor[rgb]{ .98,  .937,  1}97.46 & \cellcolor[rgb]{ .98,  .937,  1}97.07 & \cellcolor[rgb]{ .894,  .788,  1}98.80 & 99.88 & 99.96 & 99.98 \\
          & AUCDRO & 89.11 & 94.40 & 95.71 & \cellcolor[rgb]{ .98,  .937,  1}96.73 & \cellcolor[rgb]{ .894,  .788,  1}98.65 & 99.87 & 99.89 & 99.95 \\
\cmidrule{2-10}          & DRAUC-Df & \cellcolor[rgb]{ .894,  .788,  1}95.96 & \cellcolor[rgb]{ .894,  .788,  1}98.21 & \cellcolor[rgb]{ .894,  .788,  1}98.44 & \cellcolor[rgb]{ .863,  .725,  1}\textbf{98.80} & \cellcolor[rgb]{ .894,  .788,  1}99.45 & 99.93 & 99.97 & 99.97 \\
          & DRAUC-Da & \cellcolor[rgb]{ .863,  .725,  1}\textbf{96.70} & \cellcolor[rgb]{ .863,  .725,  1}\textbf{98.37} & \cellcolor[rgb]{ .863,  .725,  1}\textbf{98.57} & \cellcolor[rgb]{ .894,  .788,  1}98.79 & \cellcolor[rgb]{ .894,  .788,  1}99.56 & 99.91 & 99.94 & 99.96 \\
    \bottomrule
    \end{tabular}%
  \label{tab:overall4}%
\end{table}%

\begin{figure}
  \begin{subfigure}[t]{.2\textwidth}
    \centering
    \includegraphics[width=\linewidth]{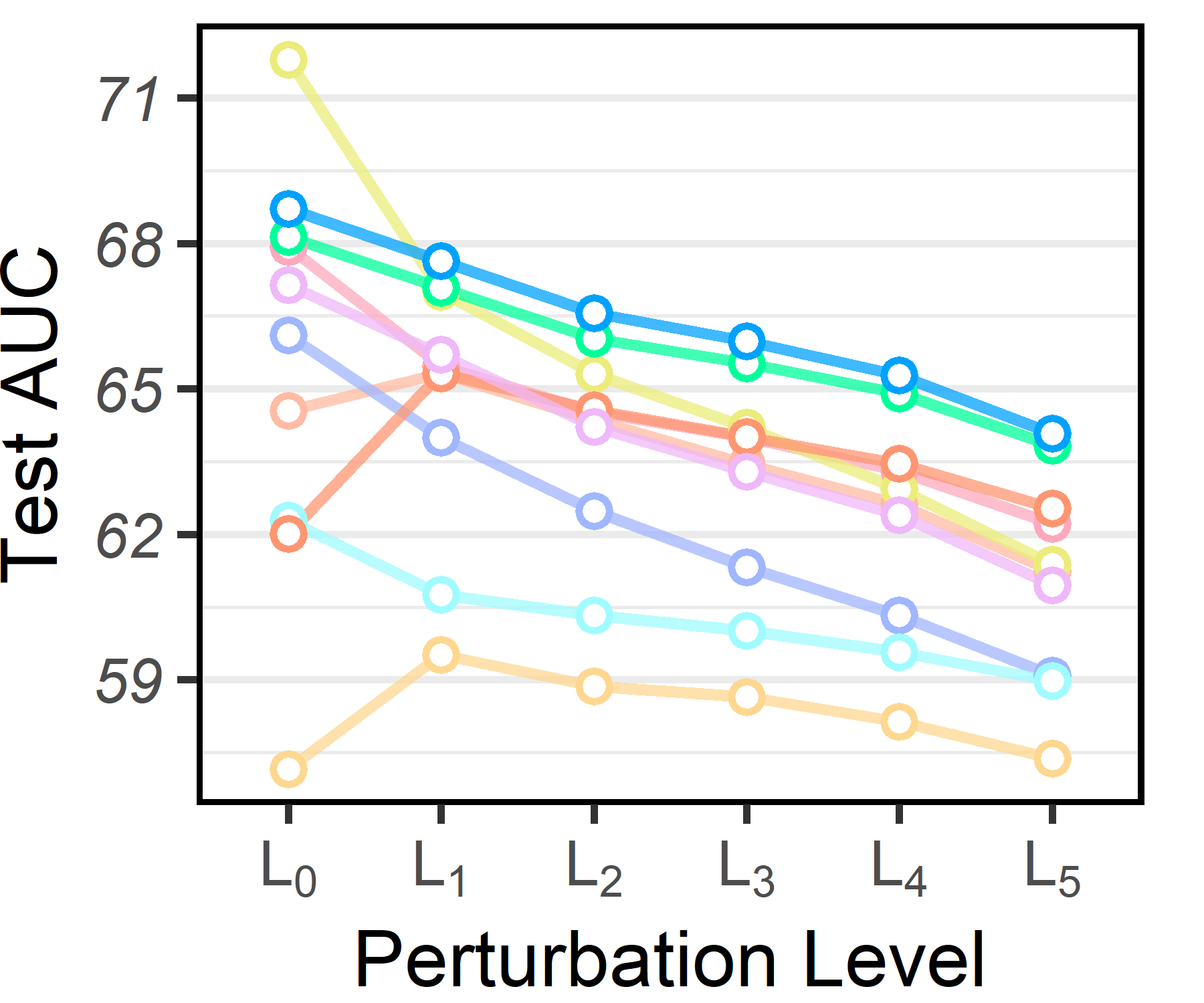}
    \caption{Imratio \textbf{0.01}}
  \end{subfigure}
  \hfill
  \begin{subfigure}[t]{.2\textwidth}
    \centering
    \includegraphics[width=\linewidth]{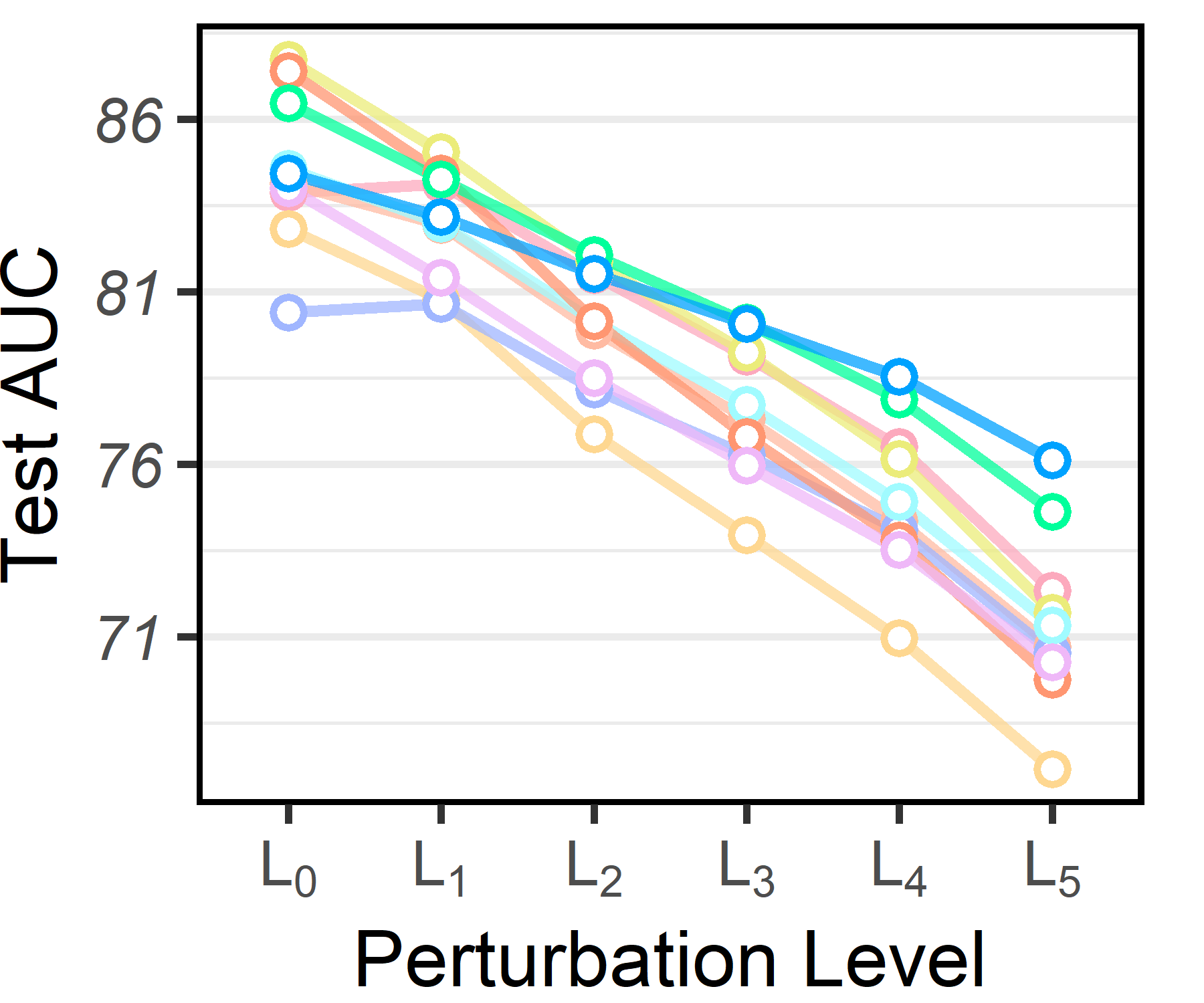}
    \caption{Imratio \textbf{0.05}}
  \end{subfigure}
  \hfill
  \begin{subfigure}[t]{.2\textwidth}
    \centering
    \includegraphics[width=\linewidth]{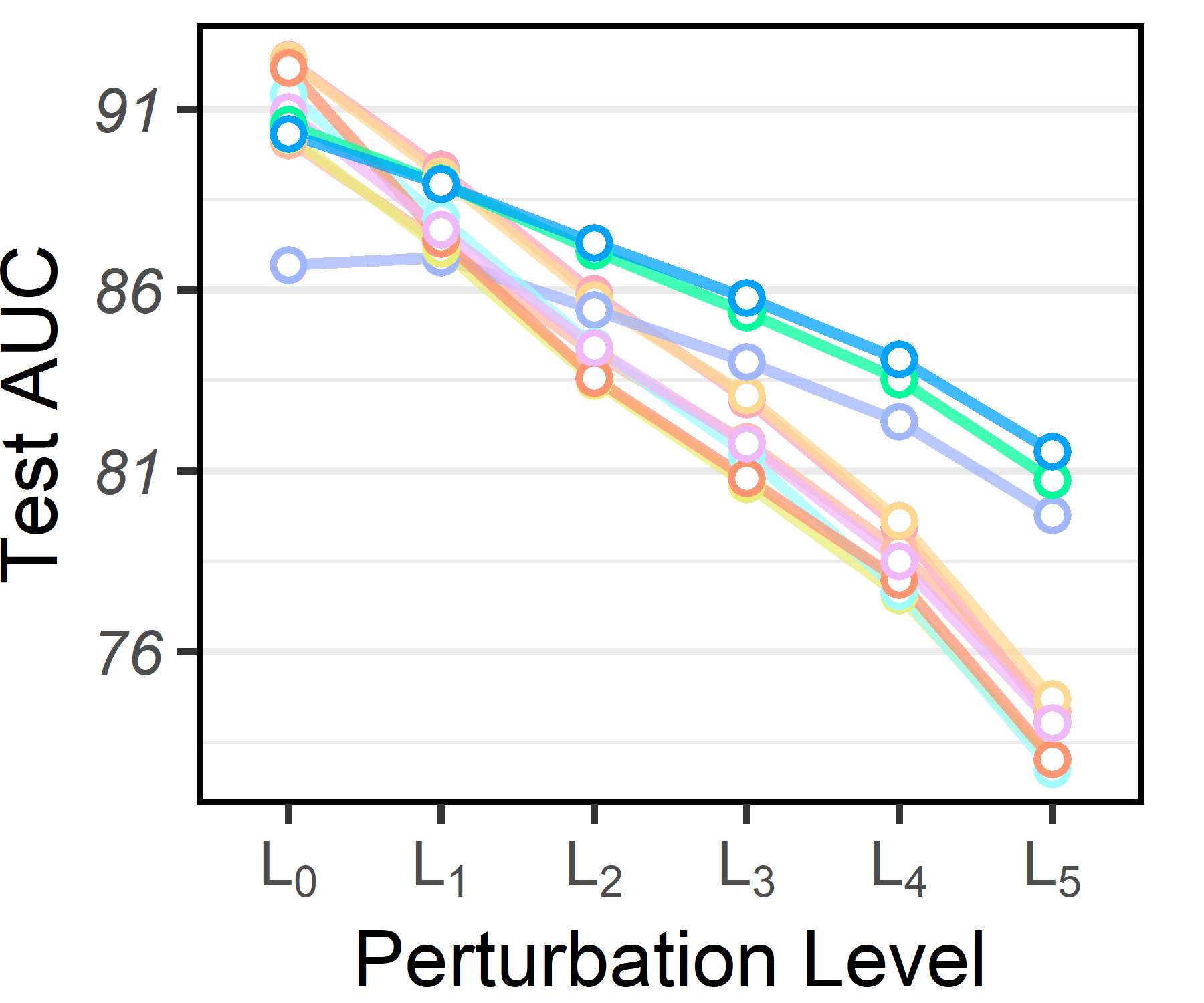}
    \caption{Imratio \textbf{0.1}}
  \end{subfigure}
  \hfill
  \begin{subfigure}[t]{.2\textwidth}
    \centering
    \includegraphics[width=\linewidth]{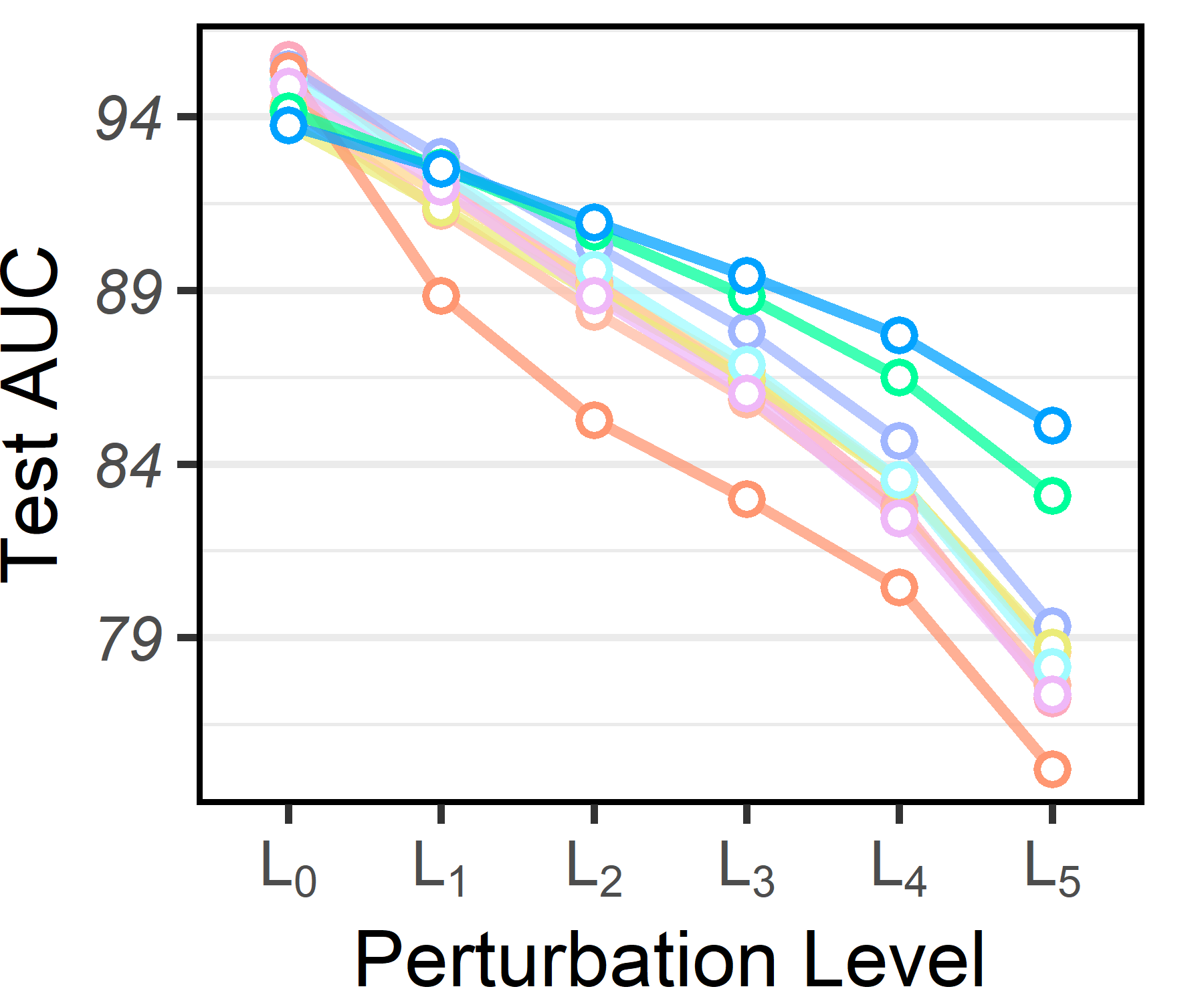}
    \caption{Imratio \textbf{0.2}}
  \end{subfigure}
  \begin{subfigure}[t]{.1\textwidth}
    \centering
    \includegraphics[width=\linewidth]{images/legend.png}
  \end{subfigure}
  \caption{{Overall Performance of ResNet20 Across Perturbation Levels on CIFAR10.} This graph illustrates the performance of various methods at different corruption levels, with Level 0 indicating no corruption and Level 5 representing the most severe corruption. In each figure, the seven lines depict the test AUC for {\color{CE}CE}, {\color{AUCM}AUCMLoss},  {\color{Focal}FocalLoss}, {\color{ADV}ADVShift}, {\color{WDRO}WDRO}, {\color{DROLT}DROLT}, {\color{GLOT}GLOT}, {\color{AUCDRO}AUCDRO}, {\color{Da}DRAUC-Da} and {\color{Df}DRAUC-Df}, respectively. Best viewed in colors.}
  \label{fig:overall2}
\end{figure}

\begin{figure}
  \begin{subfigure}[t]{.2\textwidth}
    \centering
    \includegraphics[width=\linewidth]{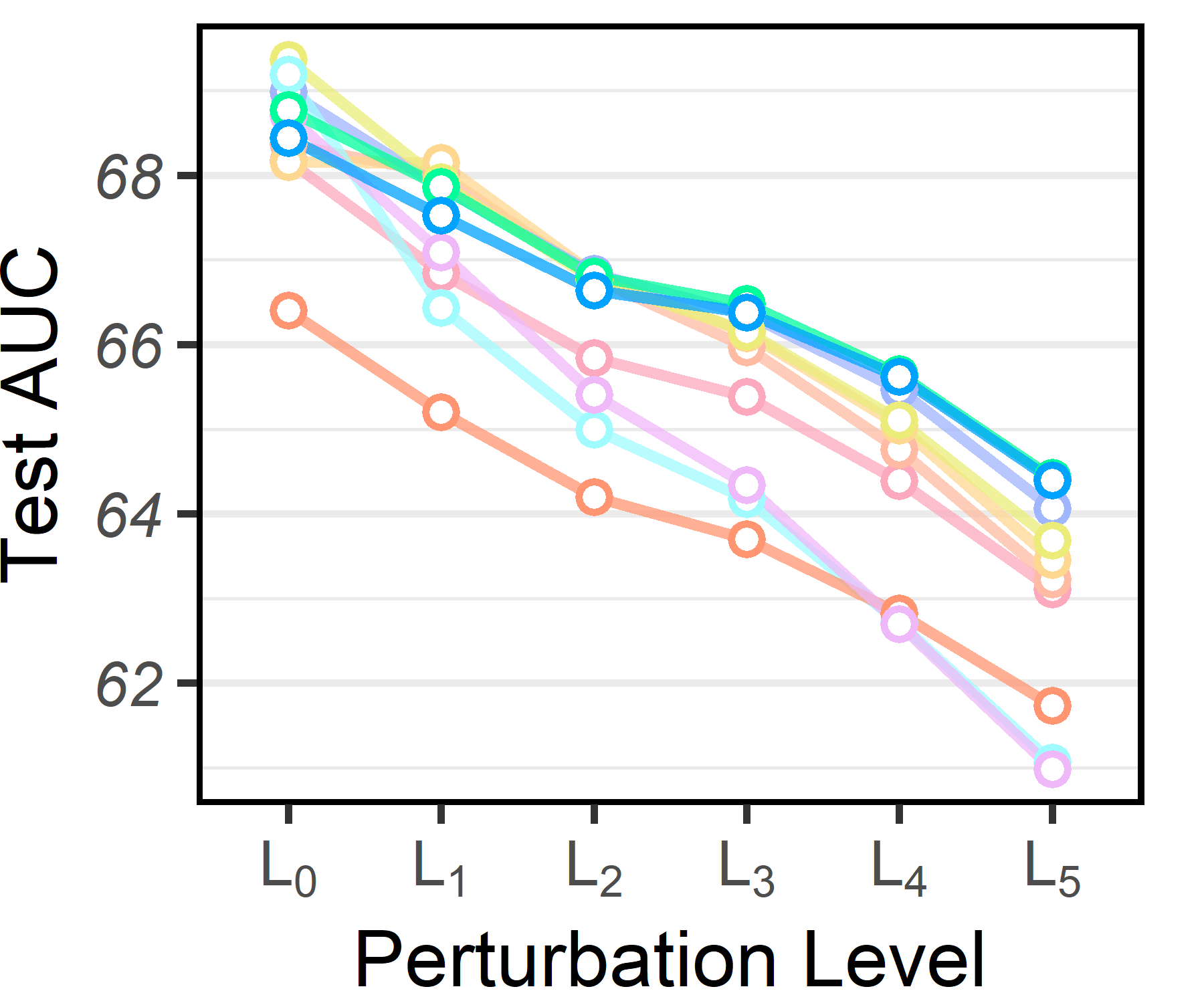}
    \caption{Imratio \textbf{0.01}}
  \end{subfigure}
  \hfill
  \begin{subfigure}[t]{.2\textwidth}
    \centering
    \includegraphics[width=\linewidth]{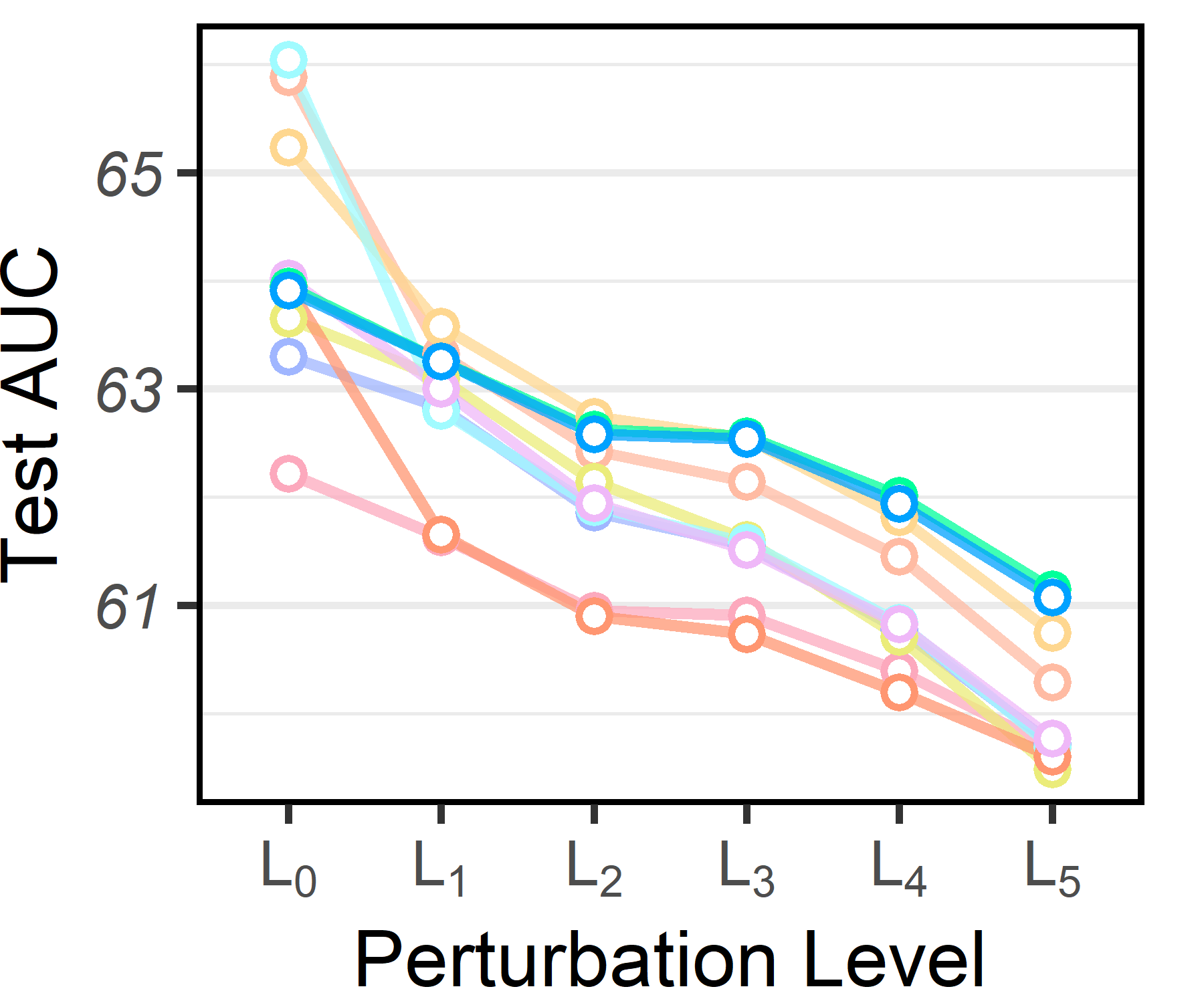}
    \caption{Imratio \textbf{0.05}}
  \end{subfigure}
  \hfill
  \begin{subfigure}[t]{.2\textwidth}
    \centering
    \includegraphics[width=\linewidth]{images/CIFAR100_resnet20_0.1.png}
    \caption{Imratio \textbf{0.1}}
  \end{subfigure}
  \hfill
  \begin{subfigure}[t]{.2\textwidth}
    \centering
    \includegraphics[width=\linewidth]{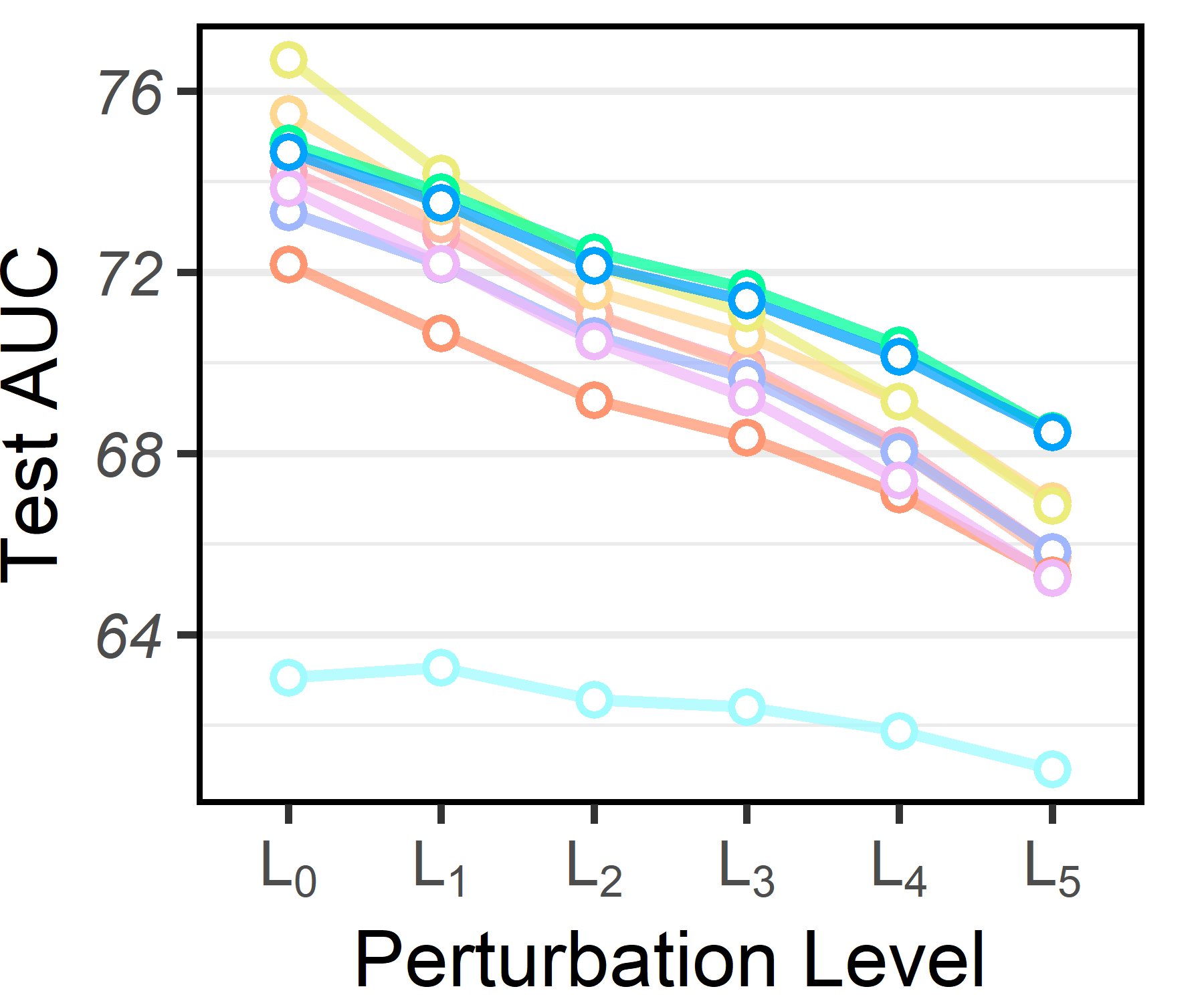}
    \caption{Imratio \textbf{0.2}}
  \end{subfigure}
  \begin{subfigure}[t]{.1\textwidth}
    \centering
    \includegraphics[width=\linewidth]{images/legend.png}
  \end{subfigure}
  \caption{{Overall Performance of ResNet32 Across Perturbation Levels on CIFAR100.} This graph illustrates the performance of various methods at different corruption levels, with Level 0 indicating no corruption and Level 5 representing the most severe corruption. In each figure, the seven lines depict the test AUC for {\color{CE}CE}, {\color{AUCM}AUCMLoss},  {\color{Focal}FocalLoss}, {\color{ADV}ADVShift}, {\color{WDRO}WDRO}, {\color{DROLT}DROLT}, {\color{GLOT}GLOT}, {\color{AUCDRO}AUCDRO}, {\color{Da}DRAUC-Da} and {\color{Df}DRAUC-Df}, respectively. Best viewed in colors.}
  \label{fig:overall3}
\end{figure}

\begin{figure}
  \begin{subfigure}[t]{.2\textwidth}
    \centering
    \includegraphics[width=\linewidth]{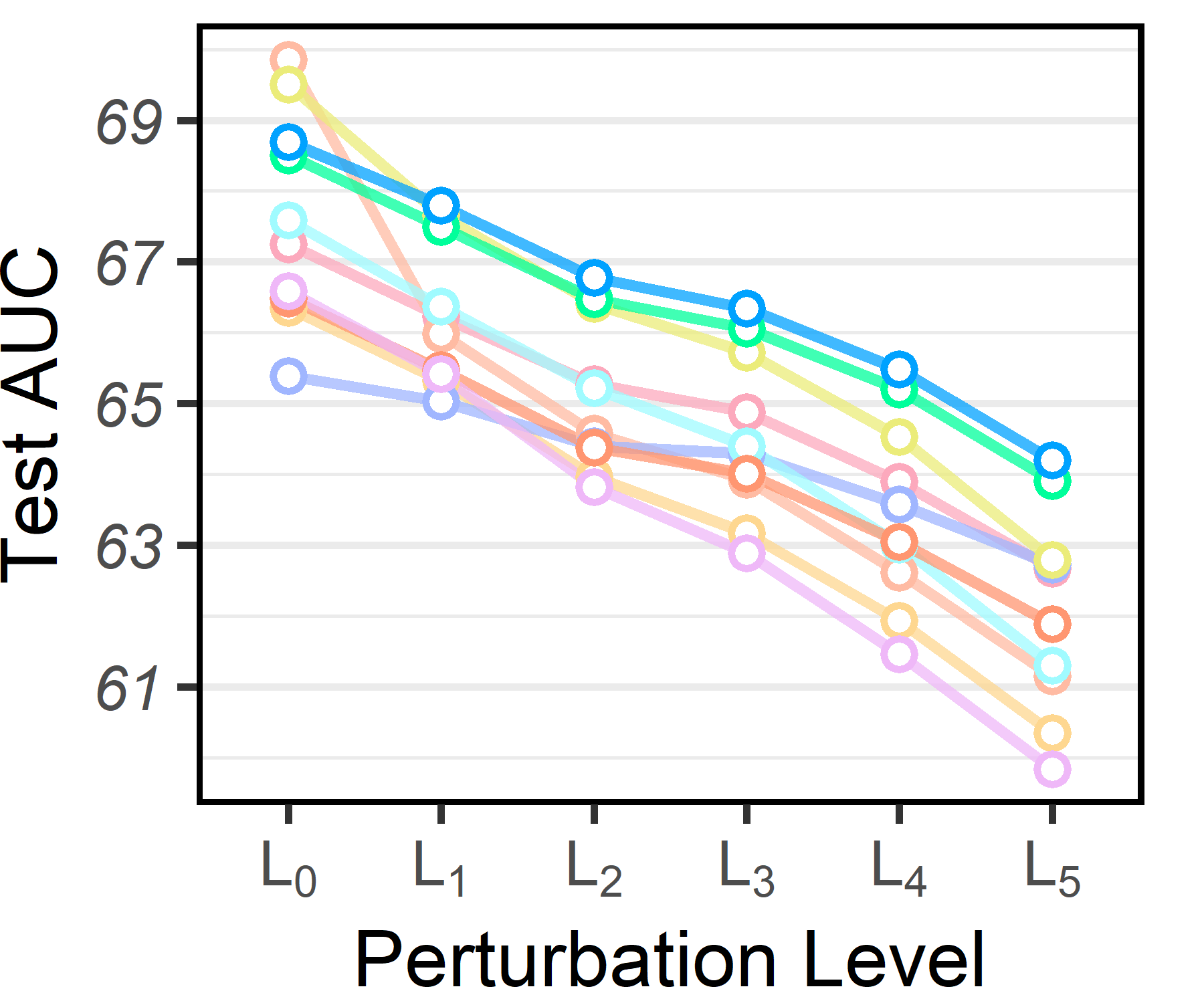}
    \caption{Imratio \textbf{0.01}}
  \end{subfigure}
  \hfill
  \begin{subfigure}[t]{.2\textwidth}
    \centering
    \includegraphics[width=\linewidth]{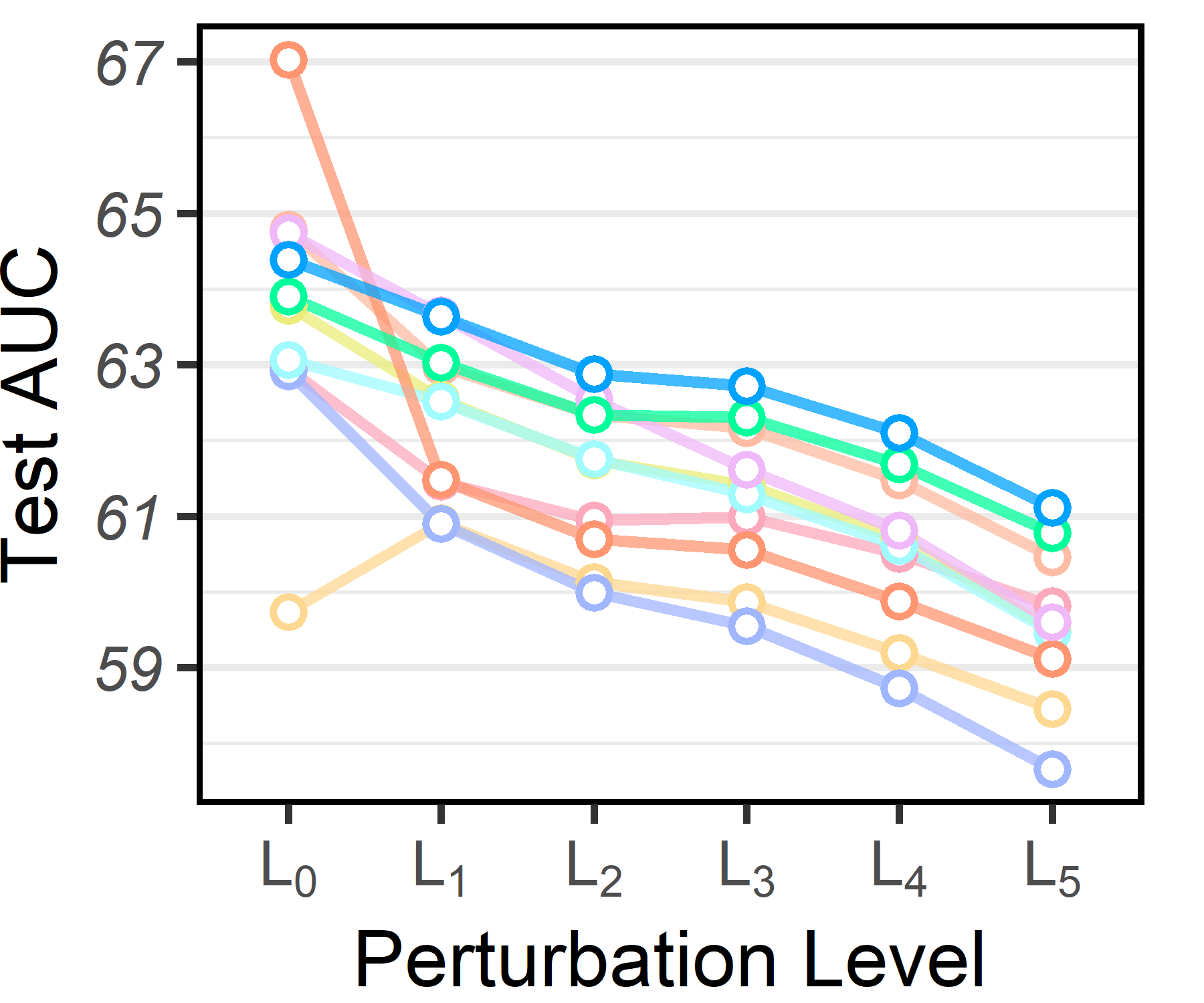}
    \caption{Imratio \textbf{0.05}}
  \end{subfigure}
  \hfill
  \begin{subfigure}[t]{.2\textwidth}
    \centering
    \includegraphics[width=\linewidth]{images/CIFAR100_resnet32_0.1.png}
    \caption{Imratio \textbf{0.1}}
  \end{subfigure}
  \hfill
  \begin{subfigure}[t]{.2\textwidth}
    \centering
    \includegraphics[width=\linewidth]{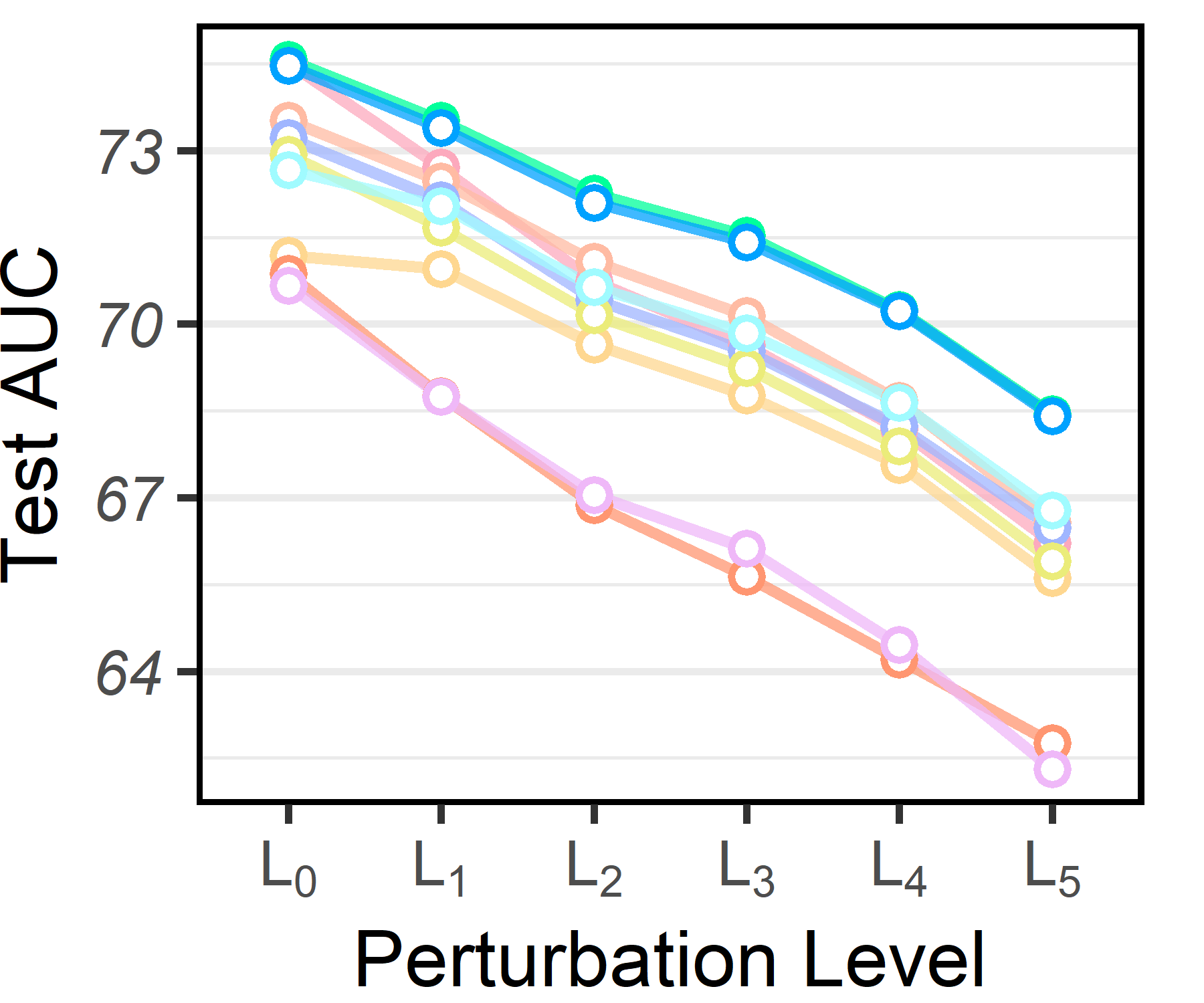}
    \caption{Imratio \textbf{0.2}}
  \end{subfigure}
  \begin{subfigure}[t]{.1\textwidth}
    \centering
    \includegraphics[width=\linewidth]{images/legend.png}
  \end{subfigure}
  \caption{{Overall Performance of ResNet32 Across Perturbation Levels on CIFAR100.} This graph illustrates the performance of various methods at different corruption levels, with Level 0 indicating no corruption and Level 5 representing the most severe corruption. In each figure, the seven lines depict the test AUC for {\color{CE}CE}, {\color{AUCM}AUCMLoss},  {\color{Focal}FocalLoss}, {\color{ADV}ADVShift}, {\color{WDRO}WDRO}, {\color{DROLT}DROLT}, {\color{GLOT}GLOT}, {\color{AUCDRO}AUCDRO}, {\color{Da}DRAUC-Da} and {\color{Df}DRAUC-Df}, respectively. Best viewed in colors.}
  \label{fig:overall4}
\end{figure}

\begin{figure}
  \begin{subfigure}[t]{.3\textwidth}
    \centering
    \includegraphics[width=\linewidth]{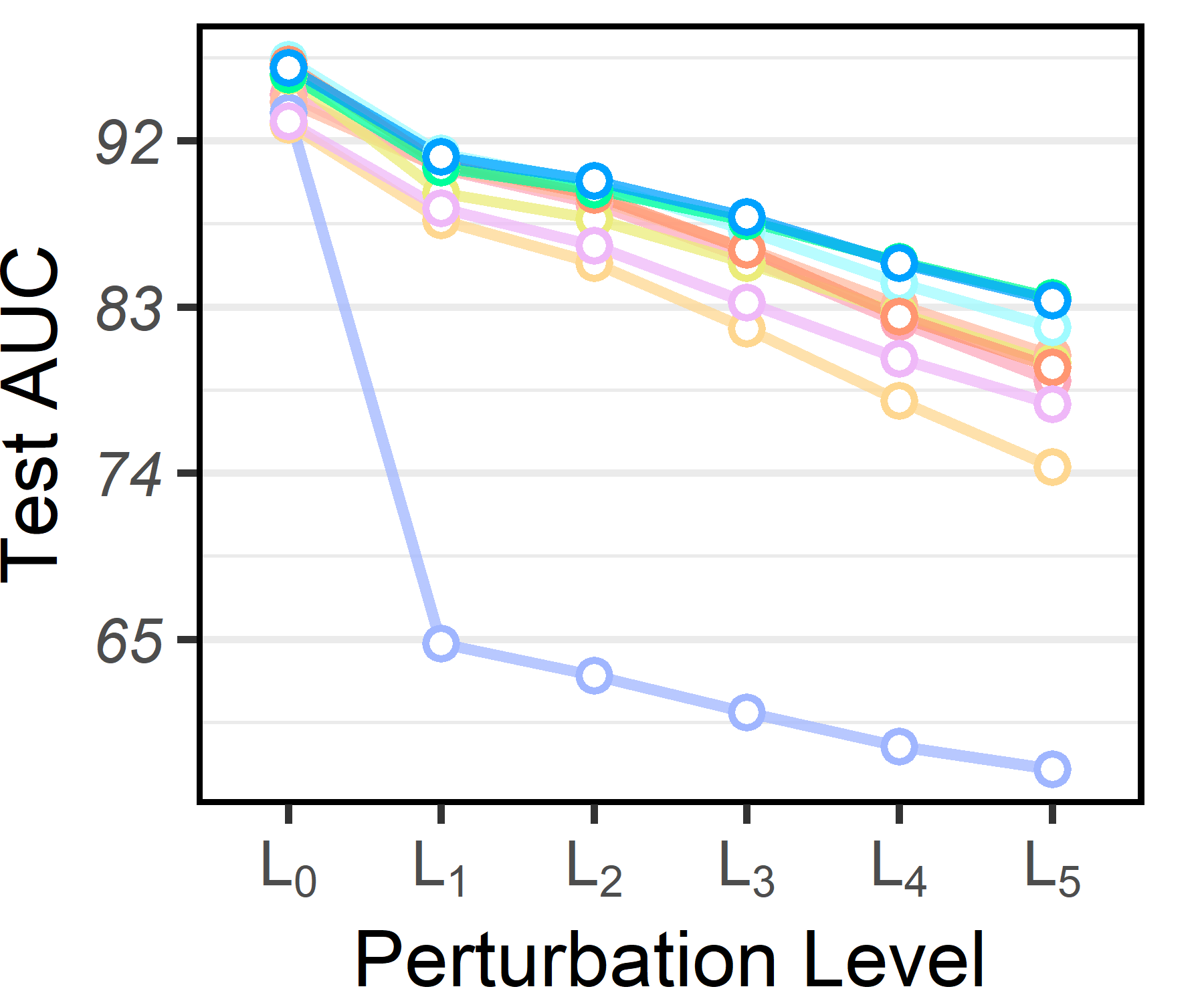}
    \caption{Imratio \textbf{0.02}}
  \end{subfigure}
  \hfill
  \begin{subfigure}[t]{.3\textwidth}
    \centering
    \includegraphics[width=\linewidth]{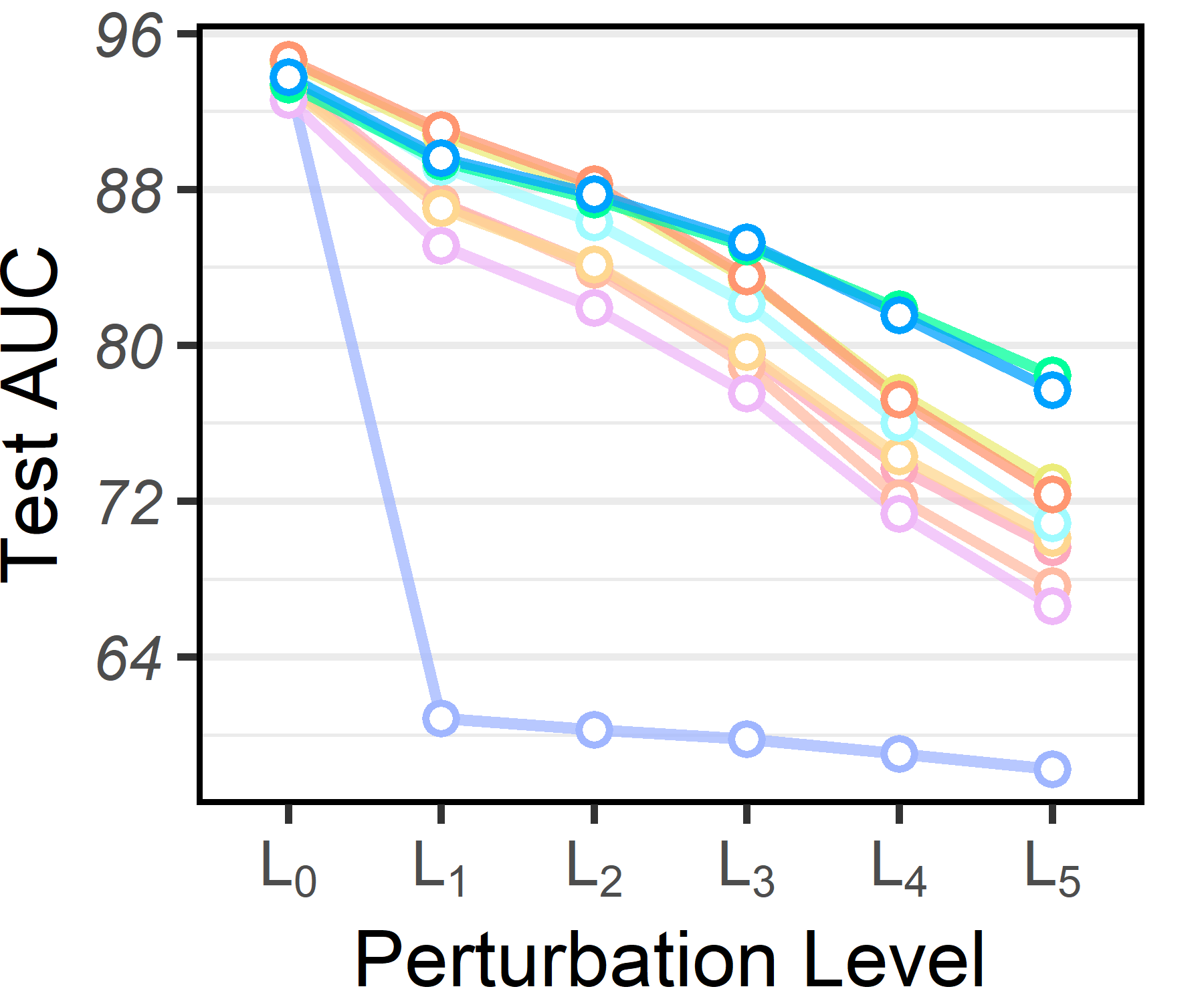}
    \caption{Imratio \textbf{0.03}}
  \end{subfigure}
  \hfill
  \begin{subfigure}[t]{.3\textwidth}
    \centering
    \includegraphics[width=\linewidth]{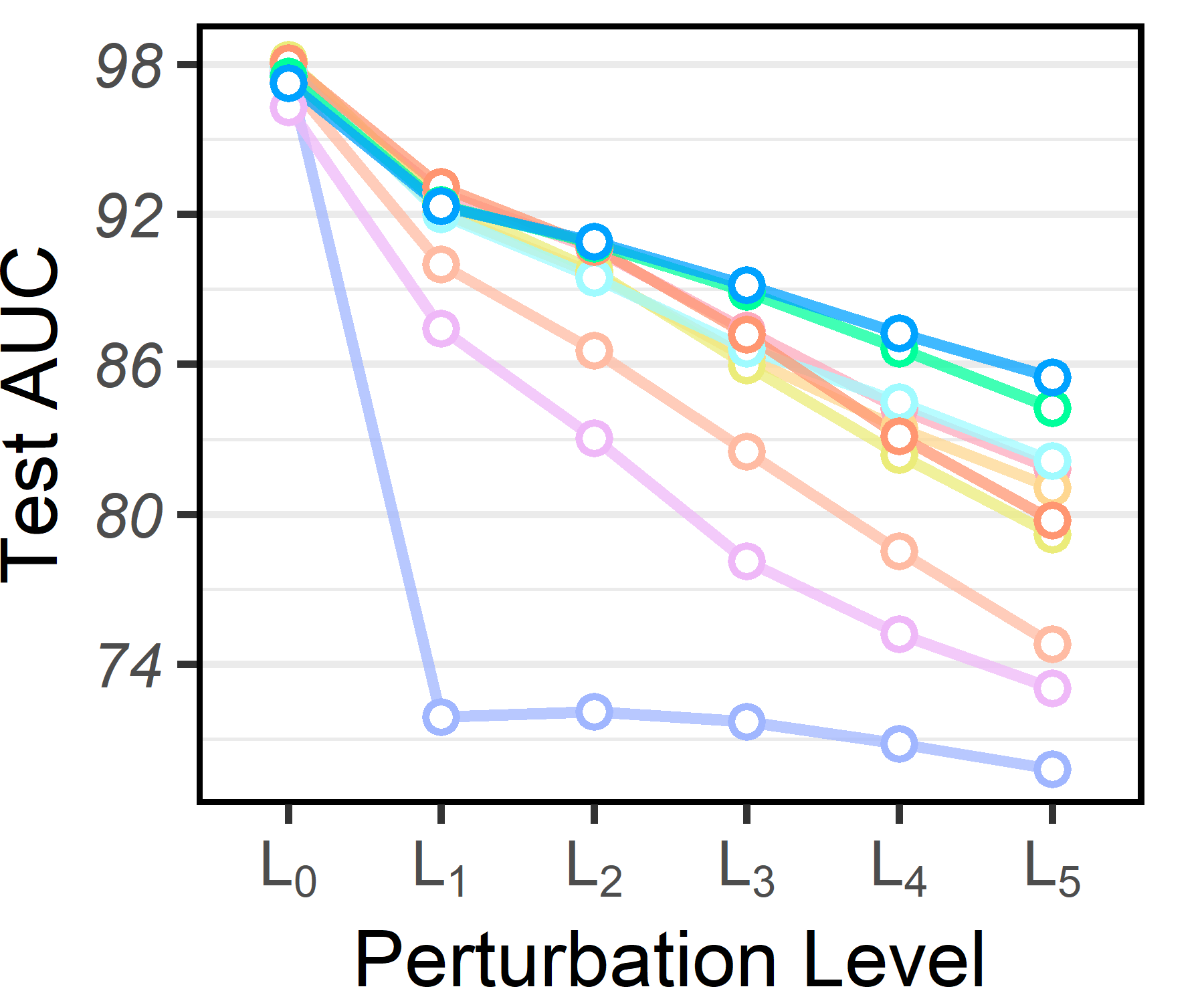}
    \caption{Imratio \textbf{0.06}}
  \end{subfigure}
  \caption{{Overall Performance of ResNet20 Across Perturbation Levels on Tiny-ImageNet.} This graph illustrates the performance of various methods at different corruption levels, with Level 0 indicating no corruption and Level 5 representing the most severe corruption. In each figure, the seven lines depict the test AUC for {\color{CE}CE}, {\color{AUCM}AUCMLoss},  {\color{Focal}FocalLoss}, {\color{ADV}ADVShift}, {\color{WDRO}WDRO}, {\color{DROLT}DROLT}, {\color{GLOT}GLOT}, {\color{AUCDRO}AUCDRO}, {\color{Da}DRAUC-Da} and {\color{Df}DRAUC-Df}, respectively. Best viewed in colors.}
  \label{fig:overall5}
\end{figure}

\begin{figure}
  \begin{subfigure}[t]{.3\textwidth}
    \centering
    \includegraphics[width=\linewidth]{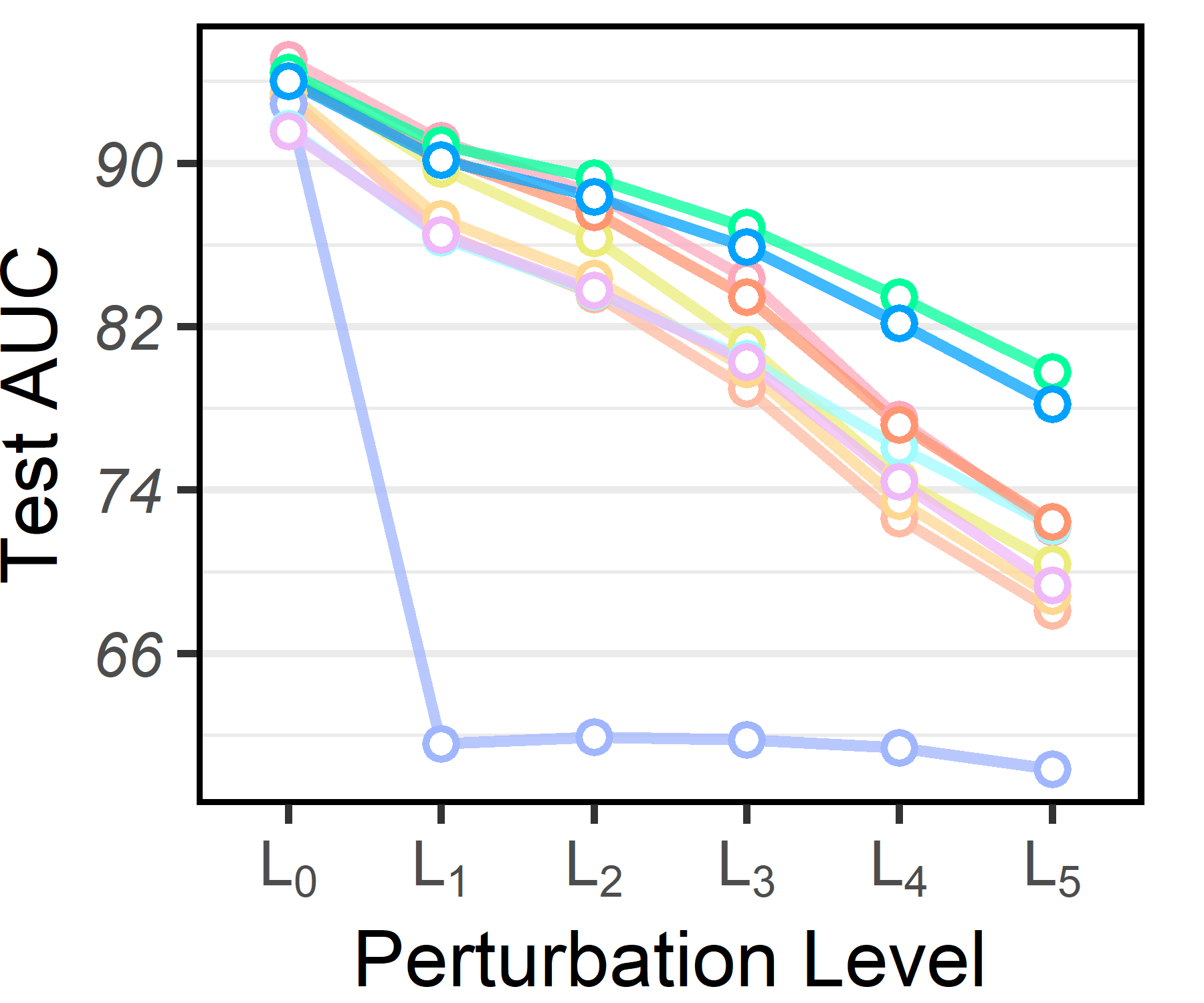}
    \caption{Imratio \textbf{0.02}}
  \end{subfigure}
  \hfill
  \begin{subfigure}[t]{.3\textwidth}
    \centering
    \includegraphics[width=\linewidth]{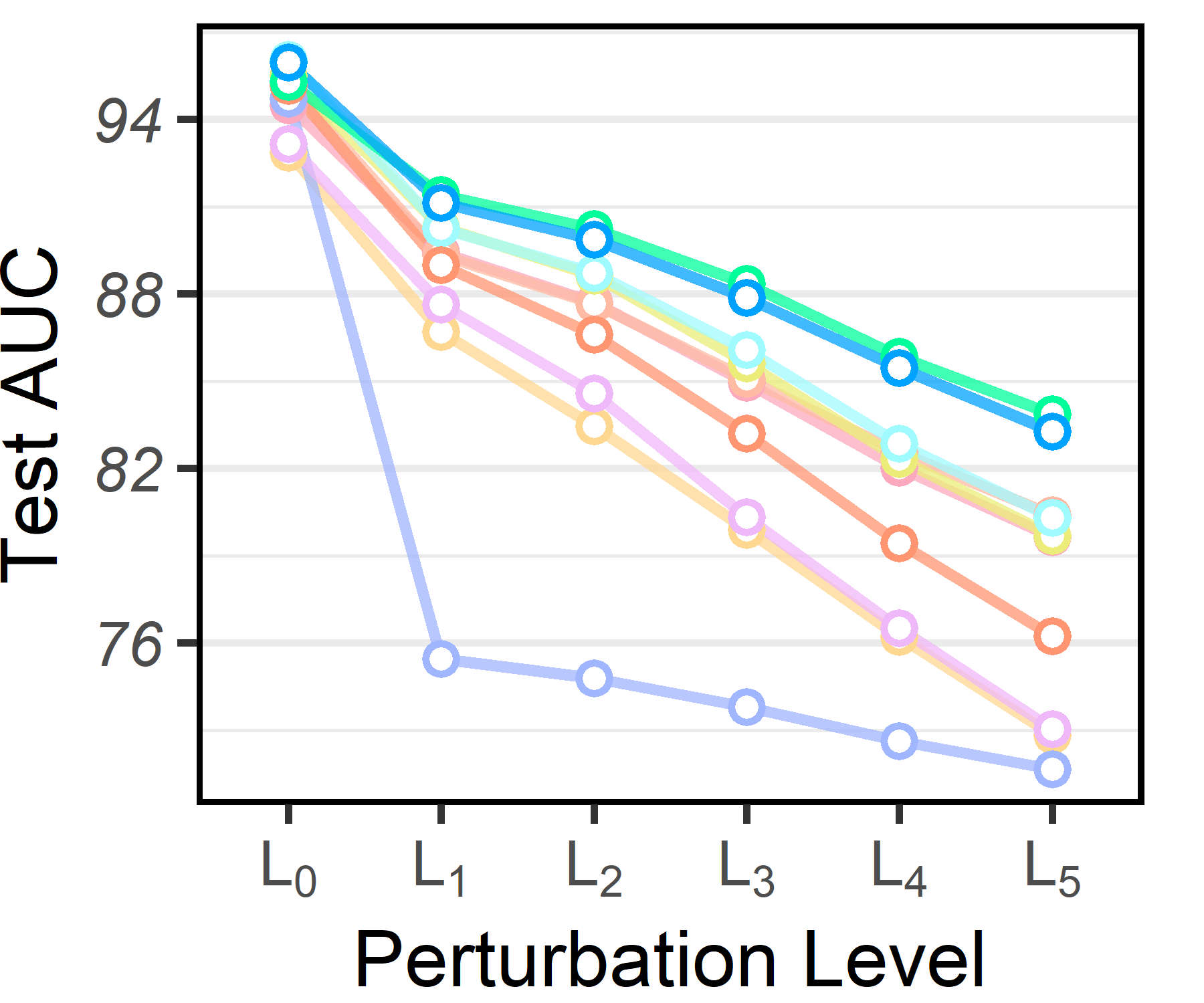}
    \caption{Imratio \textbf{0.03}}
  \end{subfigure}
  \hfill
  \begin{subfigure}[t]{.3\textwidth}
    \centering
    \includegraphics[width=\linewidth]{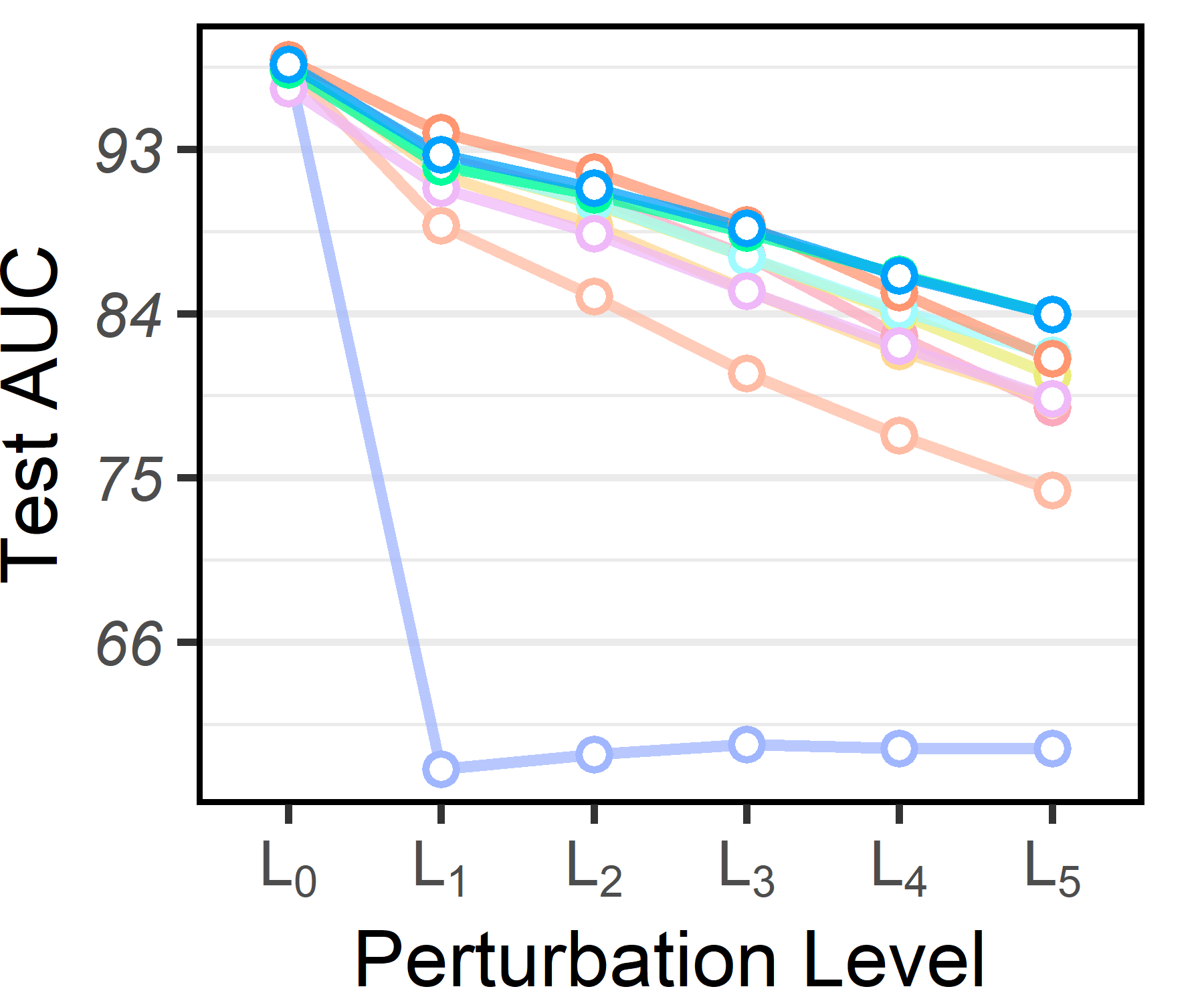}
    \caption{Imratio \textbf{0.06}}
  \end{subfigure}
  \caption{{Overall Performance of ResNet32 Across Perturbation Levels on Tiny-ImageNet.} This graph illustrates the performance of various methods at different corruption levels, with Level 0 indicating no corruption and Level 5 representing the most severe corruption. In each figure, the seven lines depict the test AUC for {\color{CE}CE}, {\color{AUCM}AUCMLoss},  {\color{Focal}FocalLoss}, {\color{ADV}ADVShift}, {\color{WDRO}WDRO}, {\color{DROLT}DROLT}, {\color{GLOT}GLOT}, {\color{AUCDRO}AUCDRO}, {\color{Da}DRAUC-Da} and {\color{Df}DRAUC-Df}, respectively. Best viewed in colors.}
  \label{fig:overall6}
\end{figure}

In Table \ref{tab:overall3}, we display the overall performance metrics for CIFAR100-C and CIFAR100-LT, while Table \ref{tab:overall4} illustrates the performance for MNIST-C and MNIST-LT. Additionally, the overall performance under varying perturbation levels is presented in Figures \ref{fig:overall2}-\ref{fig:overall4}. We have not included the results for MNIST-C due to the original MNIST-C \cite{mnist-c} only providing a single perturbation level. These comprehensive results facilitate several observations, as detailed in Section \ref{sec:overall}:
\begin{itemize}
\item Our methods consistently outperform all competitors on the corrupted datasets, across varying imbalance ratios and model architectures, confirming the effectiveness of our proposed method.
\item Our methods achieve superior performance under stronger perturbations, thereby substantiating that our proposed methods enhance model robustness. This inference can be considered an ablation result.
\item In most cases, distribution-aware contributes to improving model robustness.
\end{itemize}

\subsection{Visualizations}
In this section, we provide more visualization results.

\textbf{t-SNE Plots.} We display the t-SNE plots for CIFAR10-C, CIFAR100-C and MNIST-C in Figures \ref{fig:tsne}, \ref{fig:tsne-2} and \ref{fig:tsne-3}. As evident from the plots, the embeddings on CIFAR100-C are more challenging to separate than those on CIFAR10-C and MNIST-C. This outcome primarily due to two factors: \textbf{a)} The number of patterns in CIFAR100 exceeds those in CIFAR10 and MNIST. \textbf{b)} When we create our binary version datasets, we designate the first half of classes as positive and the remaining half as negative. Consequently, the positive class of CIFAR100 comprises 50 original classes, making it more complex to learn, and we should anticipate a larger inner-class variance of its embeddings.

However, as demonstrated by the plots, our proposed method offers a more separable embedding space compared to the baselines.

\textbf{An Interpretation of DRAUC's Improvement on Model Generalization for Corrupted Data.} We provide several examples generated by our method in Figure \ref{fig:vis}. The results demonstrate that even without prior knowledge of the corruptions in the testing distribution, our DRAUC method generates adversarial examples closely resembling the test corruptions, thereby enhancing the model's resistance to them.

\begin{figure}
  \begin{subfigure}[t]{.24\textwidth}
    \centering
    \includegraphics[width=\linewidth]{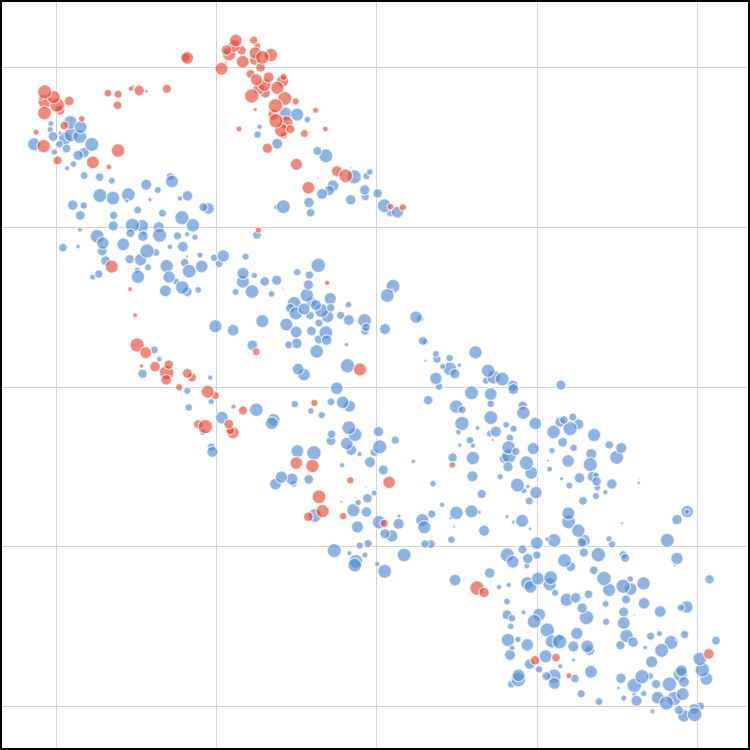}
    \caption{\textbf{CE}}
  \end{subfigure}
  \hfill
  \begin{subfigure}[t]{.24\textwidth}
    \centering
    \includegraphics[width=\linewidth]{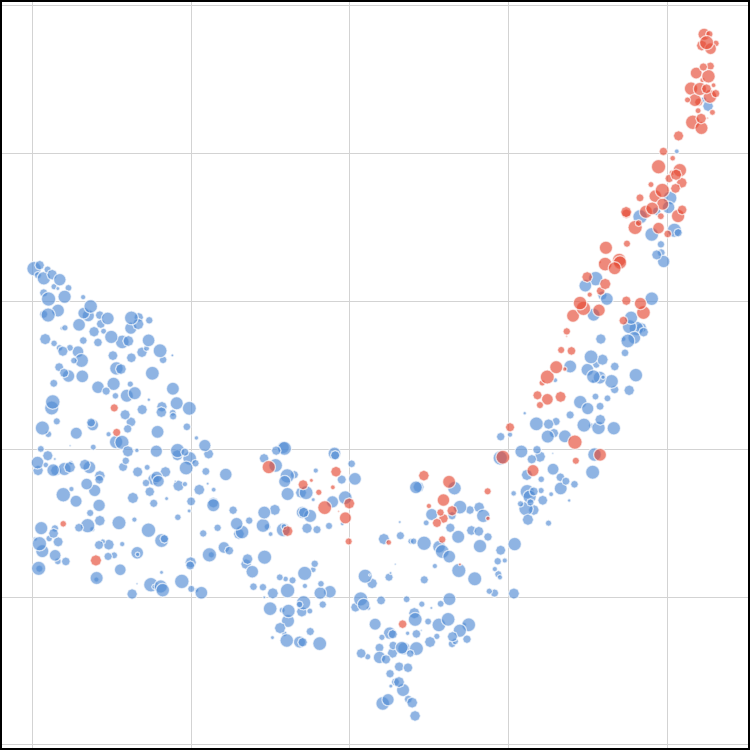}
    \caption{\textbf{AUCMLoss}}
  \end{subfigure}
  \hfill
  \begin{subfigure}[t]{.24\textwidth}
    \centering
    \includegraphics[width=\linewidth]{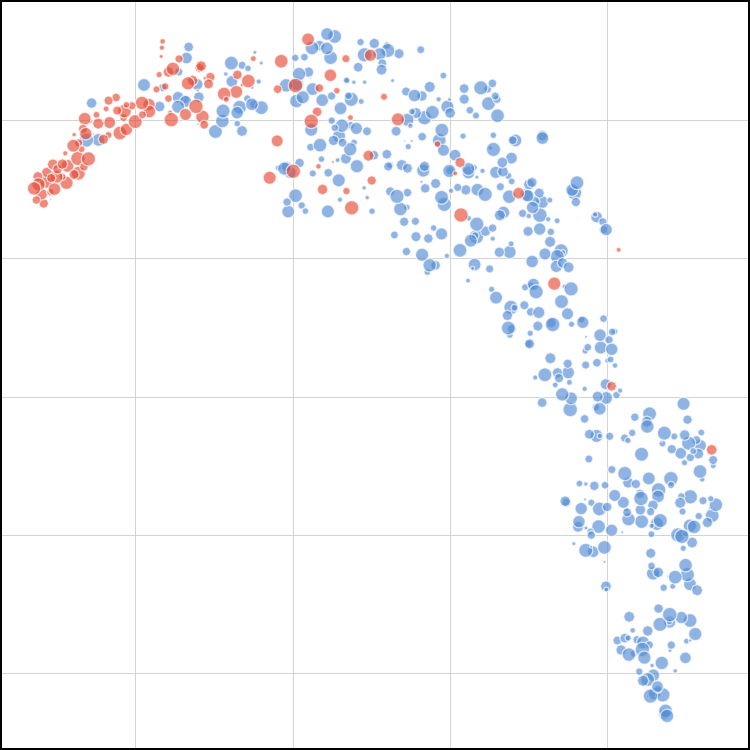}
    \caption{\textbf{DRAUC-Df(ours)}}
  \end{subfigure}
  \hfill
  \begin{subfigure}[t]{.24\textwidth}
    \centering
    \includegraphics[width=\linewidth]{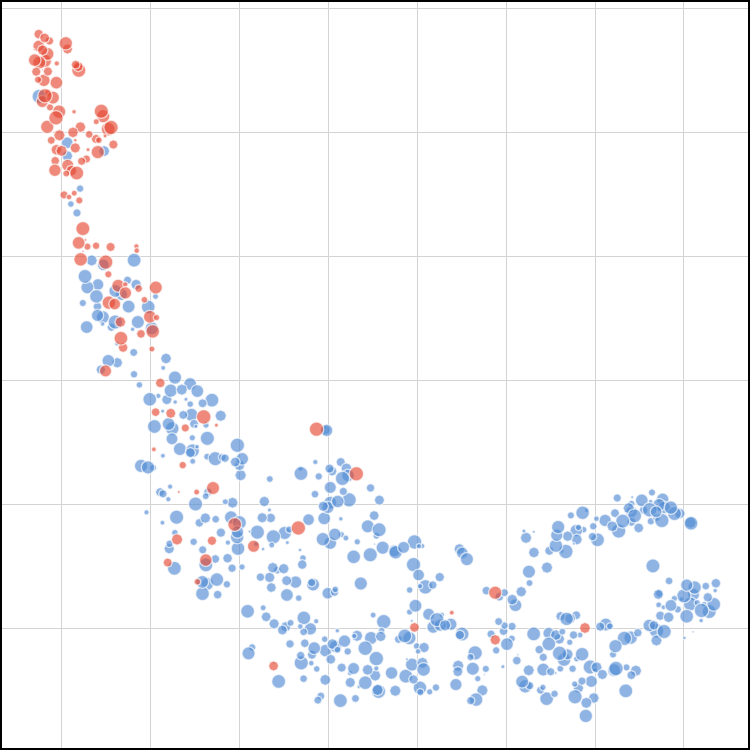}
    \caption{\textbf{DRAUC-Da(ours)}}
  \end{subfigure}
  \caption{{t-SNE plots of model embeddings on CIFAR10-C. }}
  \label{fig:tsne}
\end{figure}

\begin{figure}
  \begin{subfigure}[t]{.24\textwidth}
    \centering
    \includegraphics[width=\linewidth]{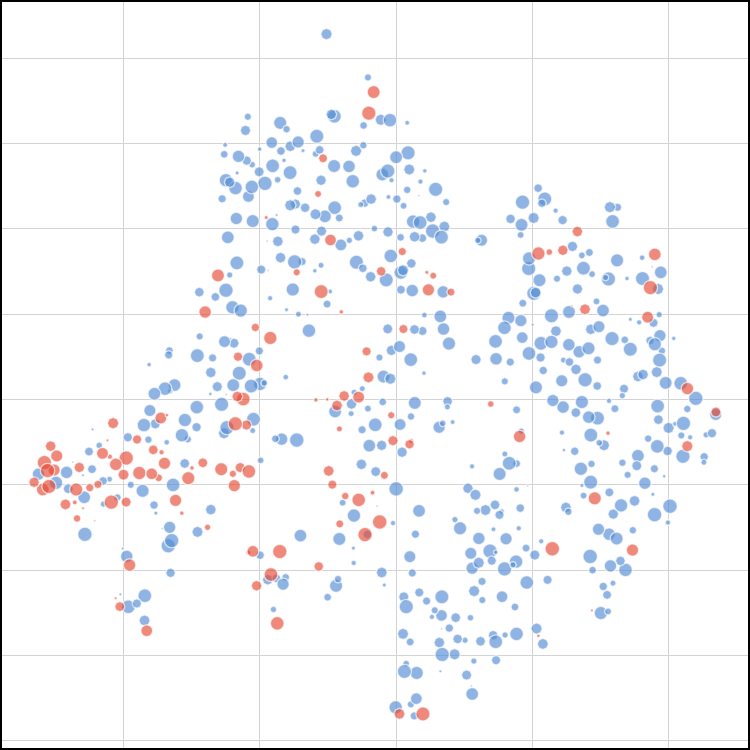}
    \caption{\textbf{CE}}
  \end{subfigure}
  \hfill
  \begin{subfigure}[t]{.24\textwidth}
    \centering
    \includegraphics[width=\linewidth]{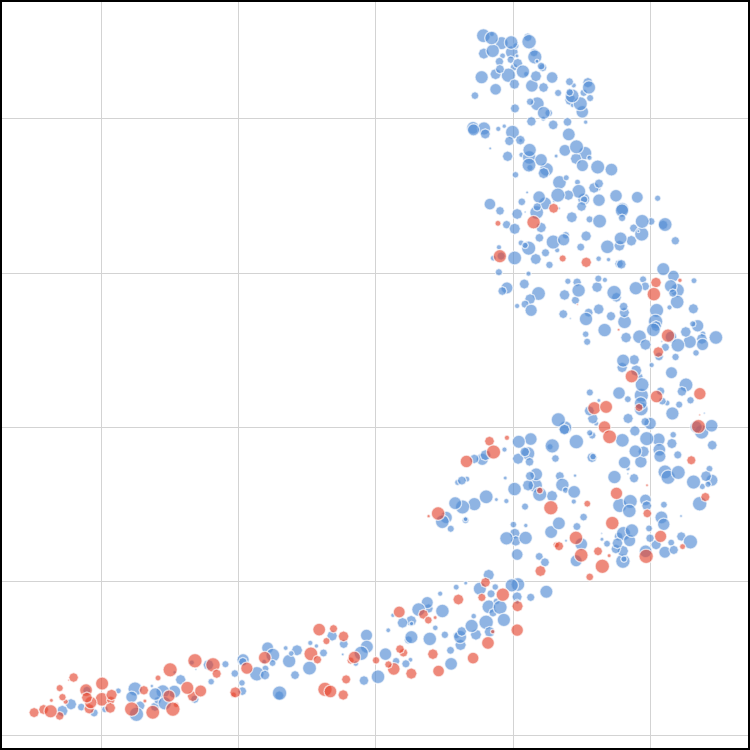}
    \caption{\textbf{AUCMLoss}}
  \end{subfigure}
  \hfill
  \begin{subfigure}[t]{.24\textwidth}
    \centering
    \includegraphics[width=\linewidth]{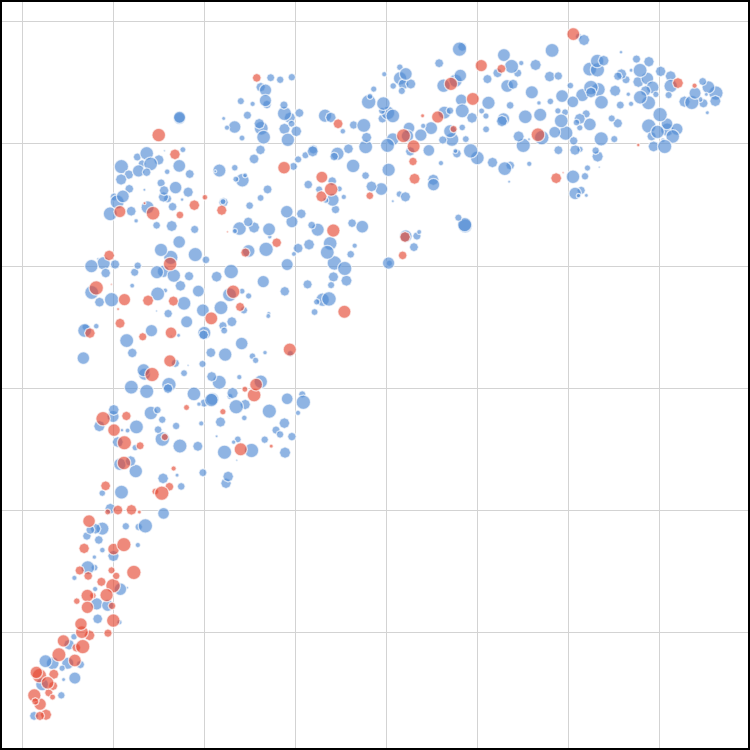}
    \caption{\textbf{DRAUC-Df(ours)}}
  \end{subfigure}
  \hfill
  \begin{subfigure}[t]{.24\textwidth}
    \centering
    \includegraphics[width=\linewidth]{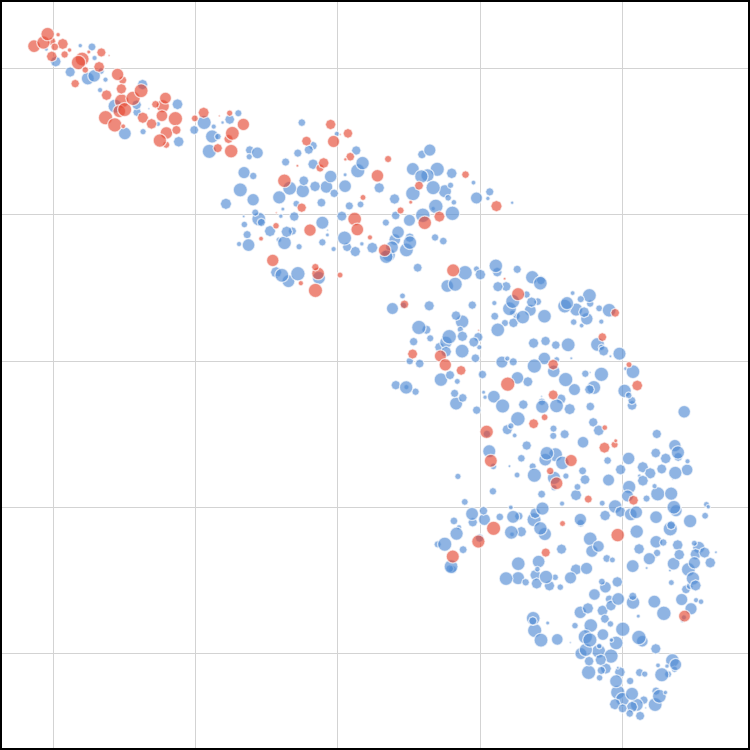}
    \caption{\textbf{DRAUC-Da(ours)}}
  \end{subfigure}
  \caption{{t-SNE plots of model embeddings on CIFAR100-C. }}
  \label{fig:tsne-2}
\end{figure}

\begin{figure}
  \begin{subfigure}[t]{.24\textwidth}
    \centering
    \includegraphics[width=\linewidth]{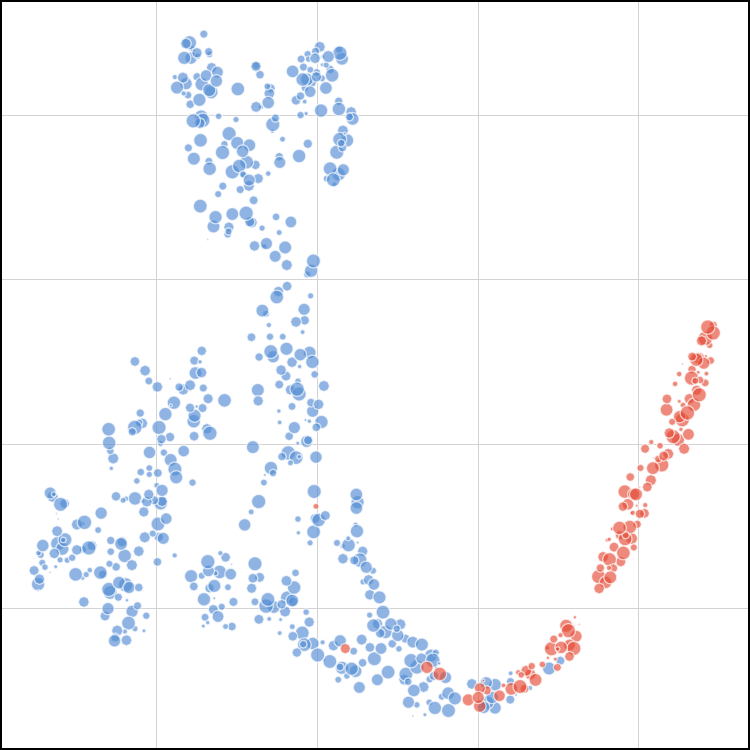}
    \caption{\textbf{CE}}
  \end{subfigure}
  \hfill
  \begin{subfigure}[t]{.24\textwidth}
    \centering
    \includegraphics[width=\linewidth]{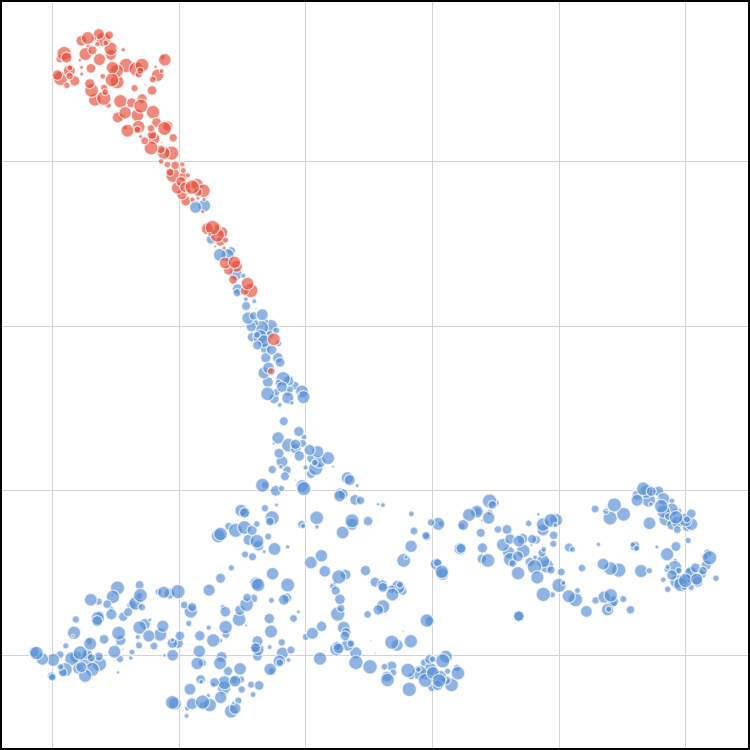}
    \caption{\textbf{AUCMLoss}}
  \end{subfigure}
  \hfill
  \begin{subfigure}[t]{.24\textwidth}
    \centering
    \includegraphics[width=\linewidth]{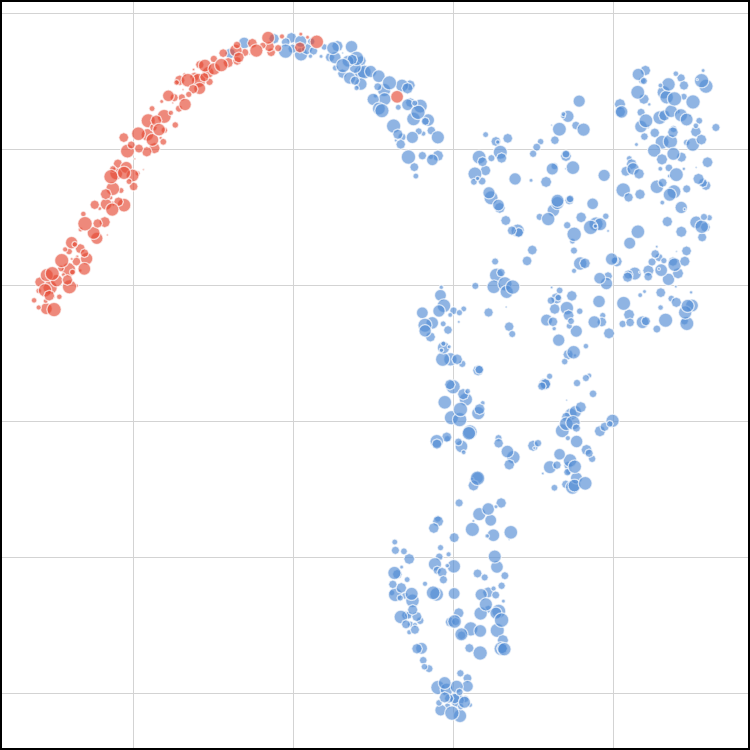}
    \caption{\textbf{DRAUC-Df(ours)}}
  \end{subfigure}
  \hfill
  \begin{subfigure}[t]{.24\textwidth}
    \centering
    \includegraphics[width=\linewidth]{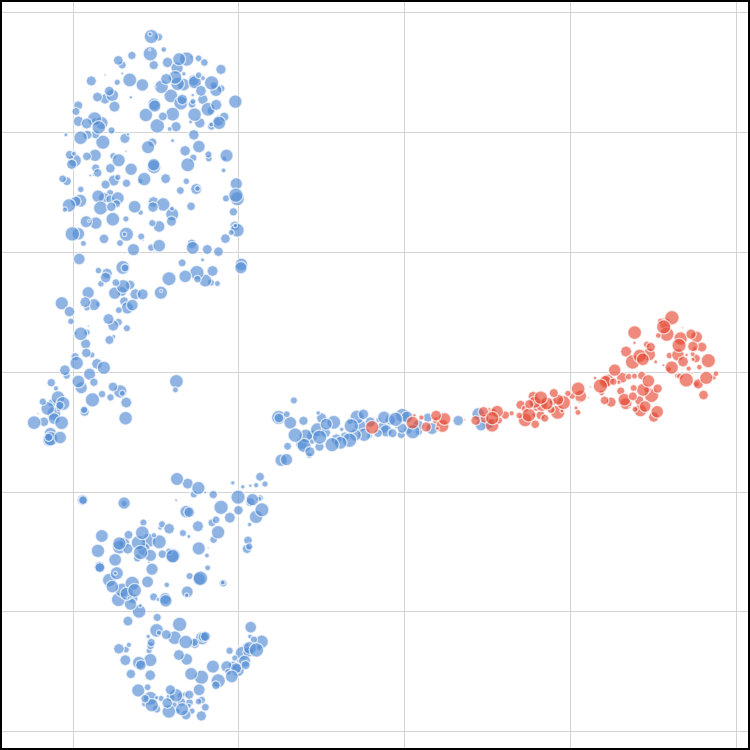}
    \caption{\textbf{DRAUC-Da(ours)}}
  \end{subfigure}
  \caption{{t-SNE plots of model embeddings on MNIST-C. }}
  \label{fig:tsne-3}
\end{figure}

\begin{figure}
  \begin{subfigure}[t]{.45\textwidth}
    \centering
    \includegraphics[width=\linewidth]{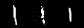}
    \caption{\textbf{Scale}}
  \end{subfigure}
  \hfill
  \begin{subfigure}[t]{.45\textwidth}
    \centering
    \includegraphics[width=\linewidth]{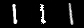}
    \caption{\textbf{Shear}}
  \end{subfigure}

  \medskip

  \begin{subfigure}[t]{.45\textwidth}
    \centering
    \includegraphics[width=\linewidth]{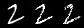}
    \caption{\textbf{Shot noise}}
  \end{subfigure}
  \hfill
  \begin{subfigure}[t]{.45\textwidth}
    \centering
    \includegraphics[width=\linewidth]{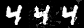}
    \caption{\textbf{Spatter}}
  \end{subfigure}

  \medskip

  \begin{subfigure}[t]{.45\textwidth}
    \centering
    \includegraphics[width=\linewidth]{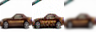}
    \caption{\textbf{Gaussian blur}}
  \end{subfigure}
  \hfill
  \begin{subfigure}[t]{.45\textwidth}
    \centering
    \includegraphics[width=\linewidth]{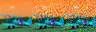}
    \caption{\textbf{Gaussian noise}}
  \end{subfigure}
  \caption{\textbf{Visualizations of adversarial examples generated by DRAUC-Df.} Each group of images represent original image, adversarial image generated by DRAUC-Df and the corrupted image in the testing set, respectively.}
  \label{fig:vis}
\end{figure}

\end{document}